\documentclass[11pt]{article}

\usepackage[final]{acl}

\usepackage{times}
\usepackage{latexsym}
\usepackage[T1]{fontenc}
\usepackage[utf8]{inputenc}
\usepackage{microtype}
\usepackage{inconsolata}
\usepackage{graphicx}
\usepackage{xcolor}
\usepackage{subcaption}
\usepackage{amsmath}
\usepackage{amsthm}
\usepackage{amssymb}
\usepackage{mathtools}
\usepackage{booktabs}
\usepackage{placeins}
\usepackage{tabularx}
\usepackage{enumitem}
\usepackage[most]{tcolorbox}

\theoremstyle{plain}
\newtheorem{theorem}{Theorem}[section]
\newtheorem{lemma}[theorem]{Lemma}
\newtheorem{proposition}[theorem]{Proposition}
\newtheorem{corollary}[theorem]{Corollary}
\theoremstyle{definition}
\newtheorem{definition}[theorem]{Definition}
\newtheorem{assumption}[theorem]{Assumption}

\title{From Fallback to Frontline: \\When Can LLMs be Superior Annotators of Human Perspectives?}

\author{
  \textbf{Hasan Amin\textsuperscript{1}},
  \textbf{Harry Yizhou Tian\textsuperscript{1}},
  \textbf{Xiaoni Duan\textsuperscript{1}},
  \textbf{Chien-Ju Ho\textsuperscript{2}},
  \textbf{Rajiv Khanna\textsuperscript{1}},
  \textbf{Ming Yin\textsuperscript{1}}
\\
\\
  \textsuperscript{1}Purdue University,
  \textsuperscript{2}Washington University in St.\ Louis
\\
\small{\{hasanamin,tian253,duan79,rajivak,mingyin\}@purdue.edu},\,\,
\small{chienju.ho@wustl.edu}
}

\begin{document}
\maketitle
\begin{abstract}
Although large language models (LLMs) are increasingly used as annotators at scale, they are typically treated as a pragmatic fallback rather than a faithful estimator of human perspectives. This work challenges that presumption. By framing perspective-taking as the estimation of a latent group-level judgment, we characterize the conditions under which modern LLMs can outperform human annotators, including in-group humans, when predicting aggregate subgroup opinions on subjective tasks, and show that these conditions are common in practice. This advantage arises from structural properties of LLMs as estimators, including low variance and reduced coupling between representation and processing biases, rather than any claim of lived experience. Our analysis identifies clear regimes where LLMs act as statistically superior frontline estimators, as well as principled limits where human judgment remains essential. These findings reposition LLMs from a cost-saving compromise to a principled tool for estimating collective human perspectives.\footnote{Code and data, including LLM annotations, are available at \href{https://github.com/shasanamin/llm-perspective-taking}{\tt github.com/shasanamin/llm-perspective-taking}.}\\[3pt]
{\color{red} \textbf{Content warning:} This paper contains examples of offensive or toxic language that some readers may find disturbing.}
\end{abstract}

\section{Introduction}
\label{sec:intro}

Capturing subjective human perspectives, particularly those of specific demographic or cultural subgroups, is a foundational challenge in NLP, with direct implications for evaluation, fairness, and safety.
Unlike factual labels, subjective judgments admit no objective ground truth, as perceptions of toxicity, offensiveness, or harm depend on lived experience, social context, and individual bias \cite{sap2019risk,sap2021annotators}.
As a result, modern annotation pipelines rely heavily on human judgments aggregated across annotators, treating crowd averages as proxies for group-level perspectives.

In practice, however, collecting high-quality, representative annotations is often difficult or infeasible.
Recruiting in-group annotators for every target subgroup is costly, slow, and sometimes impossible, especially for small, intersectional, or hard-to-reach populations \cite{davani2022dealing,sandri2023don}.
Consequently, many workflows turn to \textit{perspective-taking} (PT): asking annotators to estimate how a group would judge an item, rather than reporting their own view \cite{frenda2024perspectivist,duan2025exploring}.
Large language models (LLMs) have recently emerged as natural candidates for this role, given their ability to simulate personas, follow instructions, and draw on broad training data \cite{wilf2023think}.

Despite their growing use, a strong presumption persists:
LLM-based annotation is, at best, a scalable compromise---useful when human annotation is unavailable, but inherently inferior to ``real'' human perspectives.
This paper challenges this established view 
by reframing perspective-taking itself.
When the goal is PT, neither humans nor LLMs observe the target quantity directly.
Instead, both are attempting to estimate the same latent group-level mean: how a population would judge an item on average.
Once PT is framed as a problem of statistical estimation, rather than measurement of individual opinion, the comparison between humans and LLMs fundamentally changes.

We formalize PT as estimation of a latent group-level judgment and decompose mean-squared error into three structural components: bias, variance, and correlation.
To interpret these terms, we introduce a two-lens view of estimation error.
Representation error (the \textit{Wide Lens}) captures how well an annotator's knowledge reflects the target group, while processing and calibration error (the \textit{Clear Lens}) captures how judgments are produced given that representation.
Crucially, estimation quality also depends on variance across annotators and on the correlation between their errors.
For humans, shared social context and identity can induce high variance and strong correlations, as well as a positive \textit{coupling} between representation and processing errors, leading to super-additive bias.
For LLMs, representation and processing can arise from mechanically distinct stages (e.g., pretraining vs. post-training vs. inference-time calibration), often yielding lower variance and weaker coupling.

Guided by this framework, we evaluate humans and LLMs as estimators of subgroup-level judgments across two datasets.
We find that modern LLMs can consistently outperform human annotators---including in-group humans---at predicting aggregate subgroup judgments.
This advantage emerges most clearly in low-budget regimes, where estimation error is variance-dominated, and a \textit{single LLM} output is found to often outperform even small \textit{human crowds}.
Crucially, the advantage does not come from LLMs possessing lived experience, but from more favorable estimator properties under realistic annotation budgets.

Our analysis also reveals an unexpected empirical phenomenon: explicit `reasoning' or chain-of-thought (CoT) prompting can degrade LLM PT performance.
We find that reasoning induces a systematic \textit{criterion drift}, shifting the model from estimating empirical group judgments toward applying a rubric-based classification standard, which can partially counteract the structural advantages of LLM-based estimation.
We term this effect the \textit{reasoning paradox}, and provide a theoretical account that can explain why increased deliberation need not improve---and can even harm---aggregate perspective estimation.

Our results do not argue for replacing humans or dismissing lived experience.
Instead, they clarify \textit{when} LLMs can act as statistically superior estimators of aggregate perspective---and when humans remain indispensable, such as for
highly specific or intersectional subgroups, rare or underrepresented populations, or contexts where stakeholder legitimacy and participation are themselves essential.
By making these regimes explicit, we move beyond ideological debates toward principled, task-dependent annotation design.

This paper makes five key contributions:
\begin{itemize}[leftmargin=*,topsep=4pt]
\setlength\itemsep{0pt}
    \item We formalize perspective-taking as estimation of a latent group-level judgment, enabling principled comparison between humans and LLMs.
    \item We introduce a bias--variance--correlation framework with a two-lens decomposition and a coupling hypothesis that accounts for systematic human failure modes.
    \item We provide extensive empirical evidence that LLMs can outperform humans---including in-group humans---under realistic budgets, while identifying clear boundary conditions.
    \item We derive actionable guidance for engineering PT pipelines, including when to use LLMs, when to use humans, and how prompting, model choice, and reasoning affect outcomes.
    \item We introduce a \textit{differential perspective-taking} diagnostic that isolates group-specific sensitivity from generic annotation skill, revealing a regime where human annotators retain advantage over LLMs.
\end{itemize}

Taken together, our findings reposition LLMs in subjective NLP tasks:
not as a fallback for missing human data, but as a potentially \textit{frontline estimator} of collective human perspectives---when used carefully, validated rigorously, and deployed where their structural advantages apply.

\subsection{Related Work}
Prior studies examine PT as a mechanism for eliciting group judgments, highlighting both its benefits and its distortions \cite{frenda2024perspectivist,duan2025exploring,sandri2023don,aoyagui2025matter}. We extend this line by formalizing PT as a statistical estimation problem. 

Another line of work evaluates LLMs as annotators, typically focusing on their agreement with human labels \cite{li2025generation,movva2024annotation,calderon-etal-2025-alternative}. In contrast, we frame the problem through estimation efficiency and characterize when different estimators are preferable. Research on social reasoning shows that LLMs perform strongly on theory-of-mind benchmarks \cite{rabinowitz2018machine,huang2023chatgpt} but also exhibit systematic social biases \cite{hagendorff2023human,hu2025generative}. Our analysis decomposes how bias, variance, and correlation jointly shape PT performance.

Recent work on persona prompting investigates whether sociodemographic conditioning enables LLMs to simulate individuals or groups, with mixed empirical results \cite{sun2025sociodemographic,lutz2025prompt,orlikowski2025beyond}. Our framework offers potential structural explanation: individual simulation aims to recover the full within-group distribution, which is a more difficult target than our group mean estimation. Finally, work on pluralistic alignment focuses on matching entire opinion distributions \cite{sorensen2024position,feng2024modular,lee2024aligning}. We view accurate mean estimation as a foundational step, whose analysis naturally extends to richer distributional objectives.

\section{Perspective-Taking as Estimation}
\label{sec:setup}
We frame perspective-taking not as a \textit{measurement} task, but as the \textit{statistical estimation} of latent group-level judgment.
When asked how a group of humans would judge an item, neither humans nor LLMs directly observe the target quantity, and must infer it from incomplete experience, priors, and the elicitation protocol.
This section formalizes this view and presents testable implications.
Full derivations and proofs are provided in Appendix~\ref{sec:appendix_theory}.

\subsection{Target Quantity and Protocols}
\label{sec:target}

Let $x\in\mathcal{X}$ be an item and $g\in\mathcal{G}$ a (demographic) group with population distribution $P_g$.
Let $Y_h(x)\in[0,1]$ denote the \textit{direct} judgment of a randomly sampled group member $h\sim P_g$ on item $x$ (e.g., $x$ is toxic or not).
The group-level perspective of interest is the latent subgroup mean
\begin{equation}
\label{eq:target_main}
f^\ast(x,g)
\;\triangleq\;
\mathbb{E}_{h\sim P_g}[Y_h(x)].
\end{equation}

\paragraph{Direct annotation vs.\ PT.}
Direct annotation measures $Y_h(x)$ for sampled individuals and aggregates those measurements.
PT instead elicits an estimate $\hat f(x,g)$ of $f^\ast(x,g)$ from an annotator (human or LLM).
Because $f^\ast$ is latent, humans and LLMs should be compared as \textit{estimators of the same quantity}, rather than as interchangeable label generators.

\subsection{A Two-Lens Model of Systematic Bias}
\label{sec:lenses}

We decompose a single PT prediction $\hat f_A(x,g)$ into two bias components plus residual noise:
\begin{align}
\label{eq:single_decomp_main}
\hat f_A(x,g) = {} & \, f^\ast(x,g)
+
\underbrace{b_{\mathrm{repr},A}(x,g)}_{\text{\small Wide Lens}}
\nonumber \\
&{}+
\underbrace{b_{\mathrm{proc},A}(x,g)}_{\text{\small Clear Lens}}
+
\varepsilon_A(x,g),
\end{align}
where $A\in\{H,L\}$ denotes \textbf{H}umans or \textbf{L}LMs and $\mathbb{E}[\varepsilon_A(x,g)]=0$.

$b_{\mathrm{repr}}$ captures \textit{representation bias}: how well the estimator approximates the population distribution $P_g$ over individuals in the target group $g$, and thus how accurately it estimates expectations under $h \sim P_g$.
A \textit{Wide Lens} reflects a smaller mismatch between $P_g$ and the estimator's implicit sampling distribution over individuals in $g$.
For example, when estimating how Gen~Z would judge the slang term ``slay,'' an older annotator may overweight their own social circle, misrepresenting the true demographic mixture.
LLMs often yield broader coverage for common groups due to pretraining on large-scale corpora.

$b_{\mathrm{proc}}$ captures \textit{processing bias}: how an internal representation is translated into a numeric judgment, i.e., the fidelity of the \textit{Clear Lens} through which the annotator renders a judgment.
In the aforementioned example, even if the older annotator recalls a Gen~Z example, they might mistakenly project their own norms, interpreting the word as violent rather than positive.
LLMs also exhibit processing distortions, but these errors tend to be more stable and more readily modifiable.

\paragraph{Coupling and super-additivity.}
The two bias components need not be independent.
For humans, social identity and homophily often link who is represented with how judgments are processed, causing the biases to align in sign.
Let $\mu_A(x,g)\triangleq \mathbb{E}[\hat f_A(x,g)-f^\ast(x,g)]$ denote the total mean bias, which decomposes as $\mu_A = \mu_{\mathrm{repr},A} + \mu_{\mathrm{proc},A}$, where $\mu_{\mathrm{repr},A} \triangleq \mathbb{E}[b_{\mathrm{repr},A}]$ and $\mu_{\mathrm{proc},A} \triangleq \mathbb{E}[b_{\mathrm{proc},A}]$.
Expanding the total squared bias yields
\begin{equation}
\label{eq:bias_expand_main}
\mu_A^2
=
\mu_{\mathrm{repr},A}^2
+
\mu_{\mathrm{proc},A}^2
+
\underbrace{2\,\mu_{\mathrm{repr},A}\,\mu_{\mathrm{proc},A}}_{\text{Coupling}},
\end{equation}
In out-group PT, the coupling term is expected to be positive, producing \textit{super-additive} error.
For LLMs, representation errors (driven by pretraining coverage) and processing errors (driven by post-training and inference-time prompting) may arise from mechanically distinct sources, potentially weakening coupling or even having a subtractive effect.

\subsection{Bias--Variance--Correlation Under Aggregation}
\label{sec:bvc}

Let $\{\hat f_{A,i}(x,g)\}_{i=1}^k$ be $k$ exchangeable PT predictions produced by protocol $A$ (e.g., $k$ humans or $k$ diversified LLM samples), and let
$\bar f_A^{(k)}(x,g)=\frac{1}{k}\sum_{i=1}^k \hat f_{A,i}(x,g)$.
By definition, $\mathrm{MSE}\!\left(\bar f_A^{(k)};x,g\right)
=
\mathbb{E}\!\left[\big(\bar f_A^{(k)}(x,g)-f^\ast(x,g)\big)^2\right]$, which reduces to $\mu_A(x,g)^2
+ \mathrm{Var}\left(\bar f_A^{(k)}(x,g)\right)$.
Define the centered residual $r_{A,i}(x,g) \triangleq \hat f_{A,i}(x,g) - f^\ast(x,g) - \mu_A(x,g)$, which absorbs both centered bias variation and noise.
Let $V_A(x,g)\triangleq \mathrm{Var}(r_{A,i}(x,g))$ denote per-annotator residual variance and let $\gamma_A(x,g)$ denote the exchangeable residual correlation, i.e., $\mathrm{Corr}(r_{A,i},r_{A,j})=\gamma_A$ for $i\neq j$.
As shown in Appendix~\ref{sec:appendix_theory}, exchangeable-variance calculation gives:
\begin{align}
\label{eq:mse_bvc_main}
\mathrm{MSE}\!\left(\bar f_A^{(k)};x,g\right)
&=
\underbrace{\mu_A(x,g)^2}_{\text{Bias}^2}
+
\underbrace{\gamma_A(x,g)V_A(x,g)}_{\text{Correlation floor}}
\nonumber \\
&\;\;\,+\underbrace{\frac{1-\gamma_A(x,g)}{k}V_A(x,g)}_{\text{Reducible variance}}
\end{align}
This decomposition makes explicit how bias, variance, and correlation jointly determine estimator quality and yields a simple decision criterion:
LLM PT is preferable whenever $\mathrm{MSE}(\bar f_L^{(k)};x,g) < \mathrm{MSE}(\bar f_H^{(k)};x,g)$.
This inequality highlights when LLMs can move from fallback to frontline.

\subsection{Predicted Performance Regimes and Control Levers}
\label{sec:hypotheses}

The estimation framework yields four testable predictions that structure our evaluation.

\paragraph{H1: The Budget Regime Hypothesis.}
In low-budget regimes (small $k$), MSE is dominated by per-annotator variance. Since LLMs are relatively deterministic under fixed prompting ($V_L \ll V_H$), LLM prediction should outperform human annotation---including direct annotation involving a broad group---by minimizing sampling noise.
In contrast, performance in high-budget regimes is limited by bias and correlation floors, and naive aggregation yields diminishing returns.

\paragraph{H2: The Coupling Hypothesis.}
In out-group settings, human PT increases error not only by enlarging bias magnitudes but by increasing positive coupling in Eq.~\eqref{eq:bias_expand_main}, inflating $\mu_H(x,g)^2$ super-additively.
LLMs lacking social identity should exhibit greater stability across crowd settings.

\paragraph{H3: The Representation Limits Hypothesis.}
As target groups become more specific or less prevalent, representation mismatch increases due to sparse or stereotype-skewed training evidence, enlarging $|b_{\mathrm{repr},L}(x,g)|$ and hence $|\mu_L(x,g)|$. This increases LLM PT error, and delineates regimes where human (in-group) knowledge could be potentially advantageous.

\paragraph{H4: The Engineerability Hypothesis.}
Compared to human annotation, LLM can be easier to engineer for perspective-taking. Its error components are mechanically distinct and can be selectively influenced through model choice (primarily $b_{\mathrm{repr}}$), prompting strategies and reasoning protocols (primarily $b_{\mathrm{proc}}$), and diversification (primarily $\gamma_L$). Consequently, LLM performance should systematically improve or degrade by targeted interventions.

\section{Experimental Setup}
\label{sec:exp}

We evaluate humans and LLMs as estimators of subgroup-level judgments under the \textit{perspective-taking-as-estimation} framework.
Our design combines (i) a high-fidelity human PT benchmark with dense subgroup annotation and (ii) a large-scale safety dataset with rich demographic coverage, enabling controlled comparisons across annotation budgets, group structure, and estimator regimes.
We briefly present experimental details here, deferring a more elaborate treatment to Appendix~\ref{app:details}.

\subsection{Datasets and Ground Truth}

\paragraph{\textsf{Toxicity Detection}.}
We use the dataset of \citet{duan2025exploring}, which contains $120$ online comments balanced by target group and toxicity level.
Each comment receives \textit{direct annotations} from at least $50$ U.S.-based crowdworkers from the target subgroup, each rating items for themselves (e.g., ``Do you find this comment toxic?'').
The mean of these direct annotations within a subgroup defines the \textit{ground-truth} label $f^\ast(x,g)$ (Eq.~\ref{eq:target_main}), serving as a high-fidelity proxy for the latent subgroup-level toxicity rate.
The dataset additionally includes matched human PT judgments.
We extend the dataset beyond binary gender by collecting new PT and direct annotations from non-binary participants ($N=97$) on Prolific.
Crucially, direct annotators and perspective-takers are entirely separate pools. Perspective-takers, whether human or LLM, are asked to estimate $f^\ast(x,g)$ (e.g., ``What percentage of \{females, males, non-binary people\} would find this comment toxic?'') without ever observing any direct annotations or aggregate statistics.

\paragraph{\textsf{DICES} (Conversational Safety).}
We additionally use \textsf{DICES-350} \citep{aroyo2023dices}, which provides over $100$ direct safety judgments per example across multiple demographic axes.
While DICES does not include human PT, its dense annotation structure enables precise subgroup-level estimation and allows us to probe LLM PT behavior under increasing demographic breadth and heterogeneity.
We use DICES to study regime behavior and boundary conditions rather than direct human--LLM comparisons.

\subsection{Estimators and protocols}
We consider three annotation protocols: (i) \textit{direct annotation} by subgroup members, (ii) \textit{human PT} (in-group/out-group), and (iii) \textit{LLM PT}, where models are prompted to estimate subgroup-level judgments under instructions aligned with human protocols.
LLM PT is out-group by design and never given access to aggregate statistics.
LLM PT, like human PT, outputs the estimated mean fraction.
We evaluate a diverse set of contemporary LLMs across model families, scales, and inference strategies. 
Specifically, we evaluate models from four families (GPT, Qwen, Gemma, and DeepSeek) ranging from 1B parameters to frontier scale.
For the pretrained vs.\ post-trained models analysis (Appendix~\ref{app:pt_it}), we additionally evaluate matched model pairs from Ministral 3.
For reasoning-enabled variants, we use each model's native reasoning mode rather than manually appended chain-of-thought instructions.
All models are queried zero-shot, unless noted otherwise.

\subsection{Evaluation Criteria}
We treat the mean of direct human annotations per subgroup as the gold standard and evaluate estimators using mean squared error (MSE), bias, and variance.
To reflect realistic annotation budgets, we use bootstrapping to simulate $k$-annotation regimes ($k=1$ to $10$), applying identical procedures to human and LLM estimators.  Each reported metric is the average over $B{=}1{,}000$ bootstrap resamples, ensuring stable estimator comparisons that do not depend on any single annotator draw.
\section{Evaluation I: When Do LLMs (Not) Excel at Perspective-Taking?}
\label{sec:evaluation1}

We first evaluate the comparative advantage of LLMs across different budget regimes ($k$) and demographic granularities.
Each subsection directly tests a hypothesis from Section~\ref{sec:hypotheses}.
We restrict LLM PT results to popular GPT models here.
See Appendix~\ref{app:experiments} for additional models and ablations.

\begin{figure}[t]
    \centering
    \begin{subfigure}[c]{\linewidth}
        \centering
        \includegraphics[width=\linewidth]{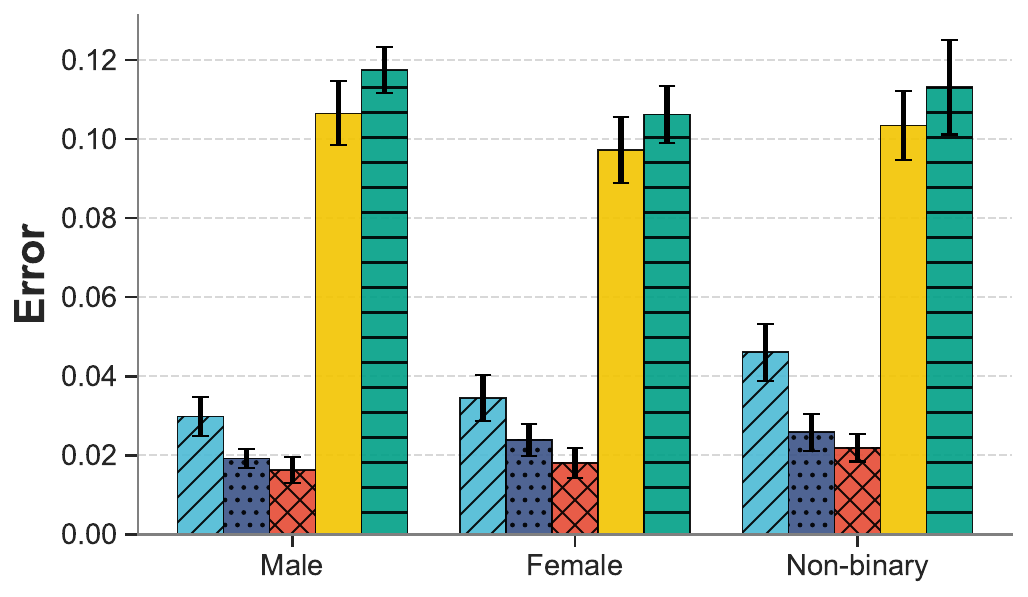}
    \end{subfigure}
    \begin{subfigure}[c]{\linewidth}
        \centering
        \includegraphics[width=\linewidth]{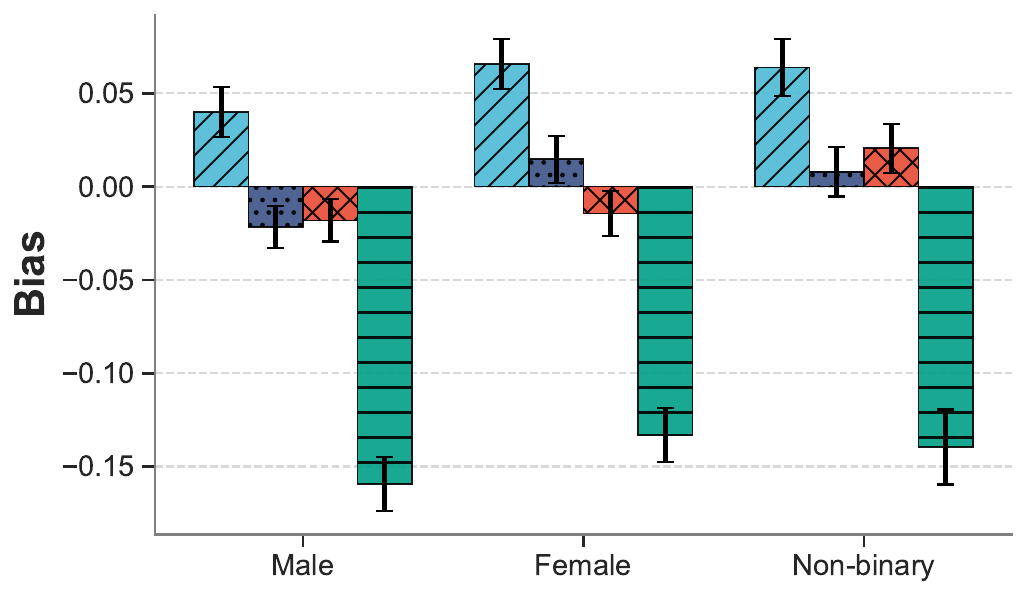}
    \end{subfigure}
    \begin{subfigure}[c]{\linewidth}
        \centering
        \includegraphics[width=0.97\linewidth]{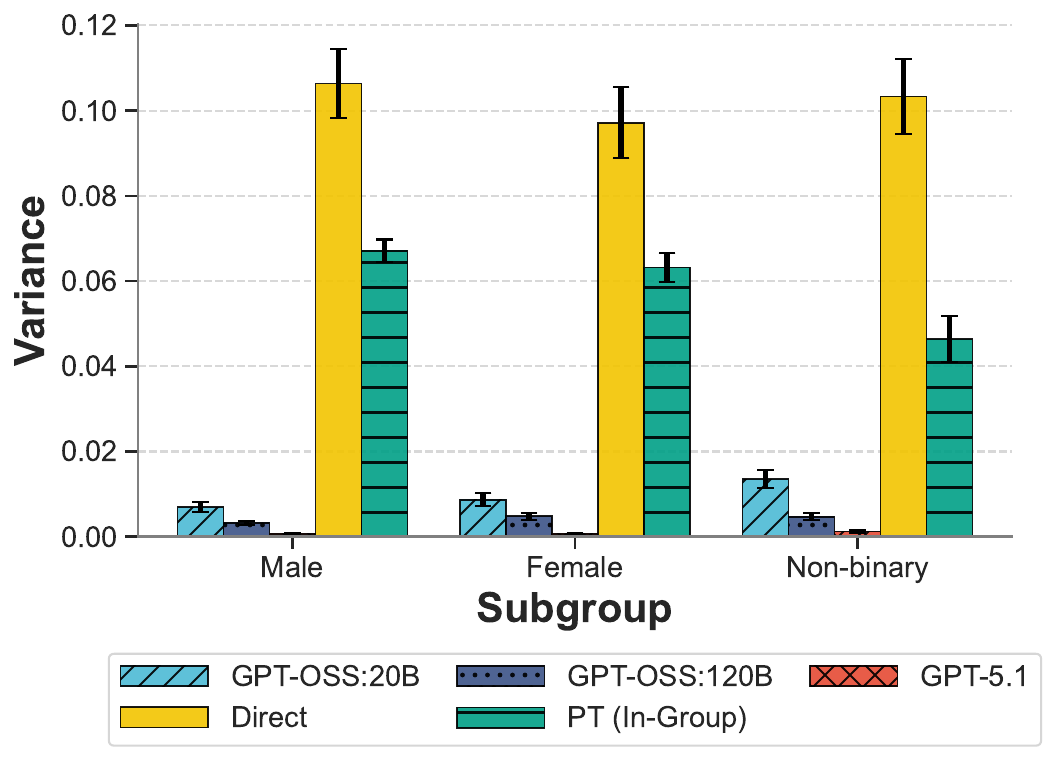}
    \end{subfigure}
    \caption{\textbf{Single-annotation ($k{=}1$) error decomposition} for three GPT variants vs.\ human baselines on \textsf{Toxicity Detection}. LLMs achieve lower MSE (top) across all gender subgroups, driven by lower bias (middle) and substantially lower variance (bottom).}
    \label{fig:metrics-1}
\end{figure}

\subsection{Validating the Budget Regime Hypothesis}

\paragraph{LLM vs Human ($k=1$).}
Many practical annotation pipelines rely on a single pass per item, i.e., a single annotator regime. We compare a single zero-shot LLM against a single human annotator (see Figure~\ref{fig:metrics-1}).
Across all gender groups in the \textsf{Toxicity Detection} dataset, the single LLM consistently achieves lower error.
This directly validates the low-budget variance-dominance prediction of H1: individual humans are noisy estimators (large $V_H$) of the group mean, whereas LLM PT is comparatively deterministic ($V_L \ll V_H$).
Decomposing MSE further reveals that LLM PT also dominates human PT in the bias component.
Interestingly, human perspective-takers systematically \textit{underestimate} the fraction of the target group that would judge content as toxic (negative bias).
LLM perspective-takers have lower bias, and some even tend to \textit{overestimate} it
(positive bias), which could indicate conservative safety calibration.

\begin{figure}[t]
    \centering
    \includegraphics[width=\linewidth]{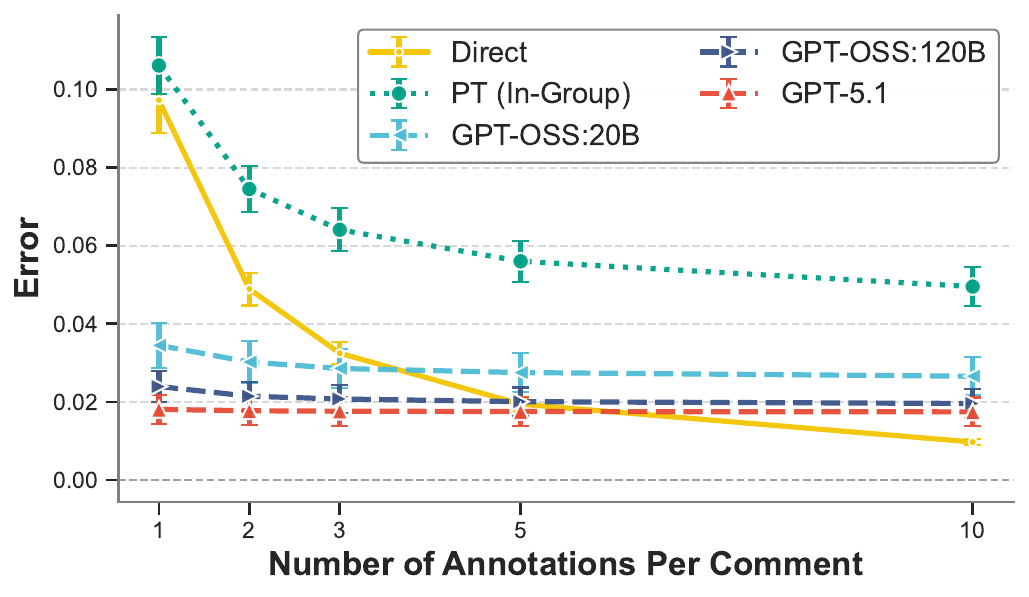}
    \caption{\textbf{MSE vs.\ annotation budget $k$} for the female subgroup. A single LLM PT estimate ($k{=}1$) is comparable to aggregating 3--5 direct human labels.}
    \label{fig:mse_llm_human_main}
\end{figure}

\paragraph{LLM Ensembles vs.\ Human Crowds ($k>1$).}
We next examine how performance scales with the annotation budget $k$ (Figure~\ref{fig:mse_llm_human_main}).
As $k$ increases, human PT improves rapidly at first but soon plateaus, consistent with a non-trivial correlation floor after accounting for bias.
In contrast, homogeneous LLM ensembles (multiple samples from the same model) show only modest gains, consistent with near-deterministic behavior ($V_L$ small) rather than necessarily high inter-sample correlation.

\paragraph{Outperforming Single-Annotator Baselines.}
A striking consequence of H1 is that a single LLM PT estimate can outperform a single \textit{direct} human annotation.
While direct annotators are unbiased with respect to their own judgment, they are high-variance ($k=1$) samples from the population defining $f^\ast(x,g)$.
Formally, at $k{=}1$ a single direct annotation $Y_h$ has $\mathrm{MSE} = V_{\mathrm{hetero}}(x,g)$ (the within-group heterogeneity, since $\mathbb{E}[Y_h] = f^\ast$), while a single LLM PT estimate has $\mathrm{MSE} = \mu_L^2 + V_L$. Hence LLM PT is the superior single-annotator estimator whenever $\mu_L^2 + V_L < V_{\mathrm{hetero}}$, i.e., when LLM bias and variance together are smaller than the population spread.
Empirically, a single LLM PT estimate is found comparable to aggregating $3$–$5$ direct human labels.
This result highlights an often-overlooked implication: \textit{when ground truth is estimated from a small number of human annotations, LLM-generated estimates can be the statistically preferable choice until sufficient human replication is available}.

\subsection{Validating the Coupling Hypothesis}
\begin{figure}[t]
    \centering
    \includegraphics[width=\linewidth]{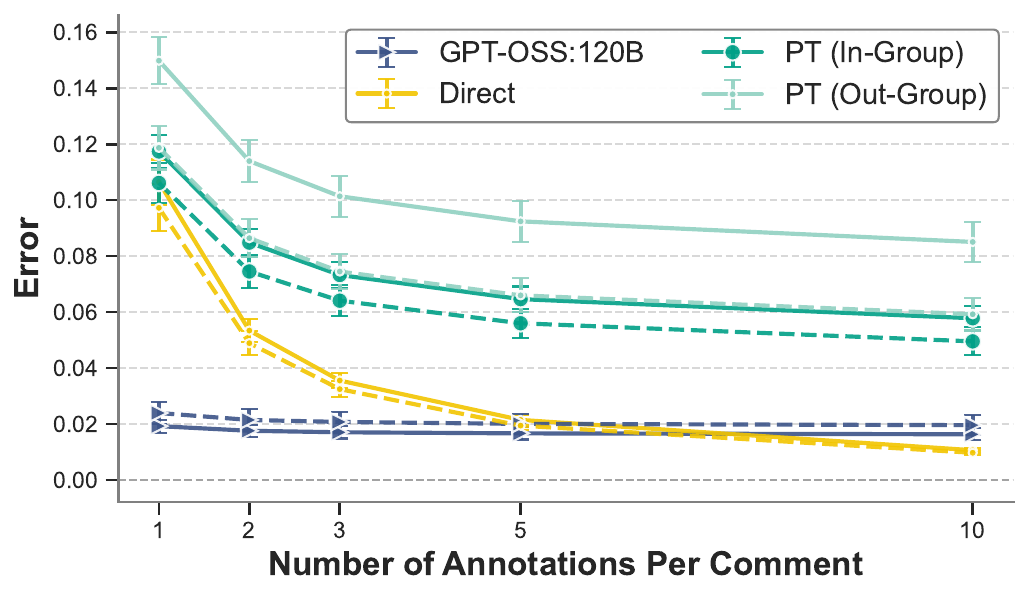}
    \caption{\textbf{In-group vs.\ out-group human PT} compared to LLM (GPT-OSS:120B) PT, predicting male (solid) and female (dashed) judgments. Out-group human PT incurs a pronounced error increase, consistent with super-additive coupling, while the LLM remains comparatively stable across target groups.}
    \label{fig:ingroup_outgroup}
\end{figure}

\paragraph{In-group vs.\ out-group human PT.}
Figure~\ref{fig:ingroup_outgroup} compares in-group and out-group human PT.
Out-group PT exhibits a pronounced error increase, especially when female annotators predict male judgments.
This asymmetry is consistent with the coupling mechanism in Eq.~\eqref{eq:bias_expand_main}: identity mismatch simultaneously distorts representation and processing, inflating $\mu_H^2$ super-additively.
The effect is weaker when  male annotators predict female judgments, consistent with differing bias magnitudes across groups.
While out-group challenges feel intuitive, our framework provides a structural account of such mechanism.

\paragraph{LLM stability.}
LLM PT is out-group by construction, yet its error remains comparatively stable across target groups.
This supports H2's prediction that mechanically decoupled representation and processing reduce sensitivity to identity mismatch, making LLMs particularly robust when in-group human recruitment is infeasible.

\subsection{Validating the Representation Limits Hypothesis}
\label{sec:eval1_failure}
We now test regimes where LLM PT should deteriorate due to representation mismatch.
We conceptualize demographic groups as nodes in an inclusion tree, where depth corresponds to specificity ($g' \subset g$, e.g., ``college-educated, black women'' is a more specific group than ``college-educated people'') and width at the same depth corresponds to prevalence $\pi(g)$ (e.g., within US, ``White'' race is more prevalent with larger $\pi(\text{White})$ compared to ``Asian'').
We use the \textsf{DICES} dataset here, focusing exclusively on LLM PT behavior due to the absence of human PT annotations.

\paragraph{Group specificity (depth).}
As groups become more specific, LLM conditioning increasingly relies on sparse evidence.
Figure~\ref{fig:influence_specificity} shows a monotonic increase in LLM (GPT-OSS:120B) PT error with subgroup depth.
This trend is consistent with increasing $|b_{\mathrm{repr},L}|$ dominating the error, as predicted by H3.

\begin{figure}
    \centering
    \includegraphics[width=\linewidth]{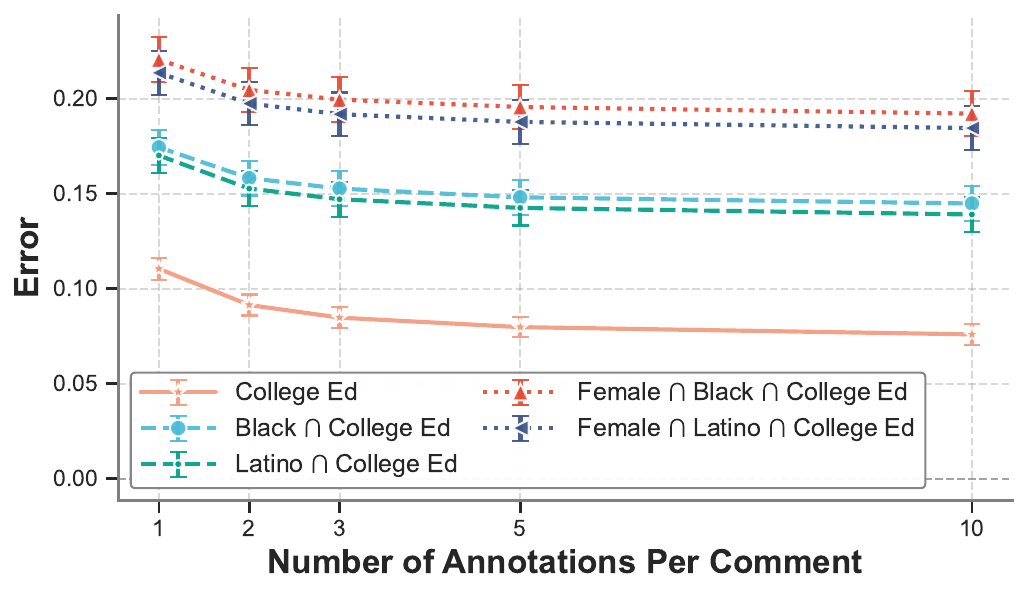}
    \caption{\textbf{Effect of subgroup specificity} on LLM (GPT-OSS:120B) PT error on \textsf{DICES}. MSE rises monotonically as the target becomes more specific.}
    \label{fig:influence_specificity}
\end{figure}

\paragraph{Group prevalence (width).}
Low-prevalence groups pose a complementary challenge.
As $\pi(g)$ decreases, LLMs can suffer representation mismatch due to limited or stereotype-skewed training signal.
Figure~\ref{fig:influence_prevalence} shows that performance worsens for minority demographics (e.g., Black vs. White).
This aligns with established findings on LLM fairness \cite{sap2019risk} and, crucially, helps identify the structural boundary of the ``frontline'' claim.

\begin{figure}[t]
    \centering
    \includegraphics[width=\linewidth]{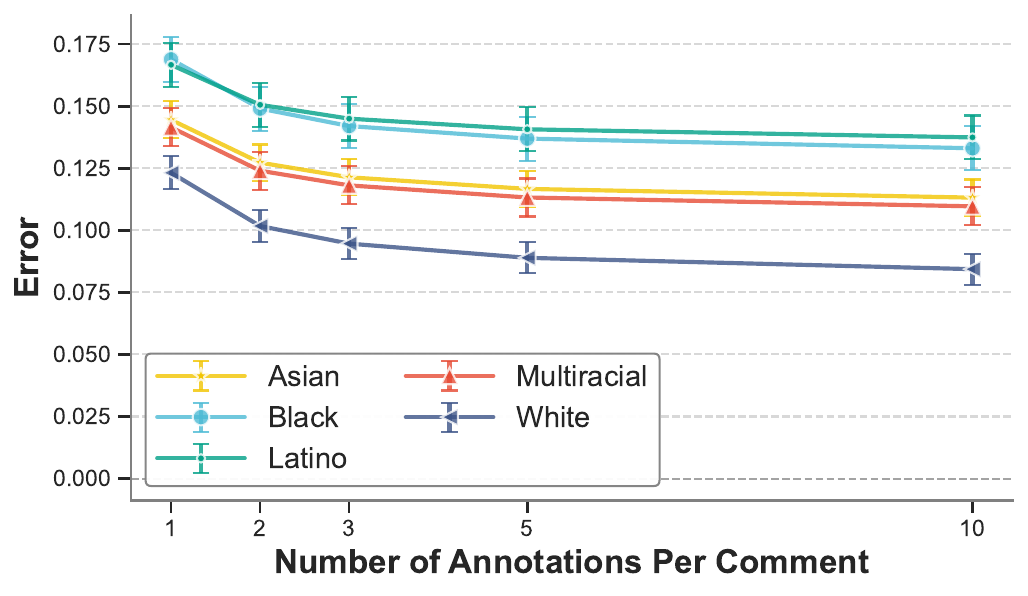}
    \caption{\textbf{Effect of subgroup prevalence} on LLM (GPT-OSS:120B) PT error on \textsf{DICES} (race axis). MSE generally rises as the target becomes less prevalent.}
    \label{fig:influence_prevalence}
\end{figure}

\section{Evaluation II: On The Engineerability of LLM Perspective-Taking}
\label{sec:evaluation2}
Evaluation~I established when LLMs outperform humans as estimators of subgroup-level judgment.
Here we examine \emph{how this advantage arises} and which components of error can be controlled,
helping validate the Engineerability Hypothesis (H4) from Section~\ref{sec:hypotheses}.
Guided by the bias--variance--correlation decomposition (Eq.~\ref{eq:mse_bvc_main}), we organize interventions by whether they primarily affect the \emph{Wide Lens} (representation bias), \emph{Clear Lens} (processing bias), or the \emph{correlation floor}.%
\footnote{As soft evidence that Wide and Clear Lens errors arise from mechanically distinct training stages, we also conduct matched pretrained vs.\ post-trained comparisons (Appendix~\ref{app:pt_it}). Post-training collapses variance by an order of magnitude alongside generally increasing absolute bias. While this does not separately measure representation and processing bias, it shows that total bias rises through post-training, directionally consistent with the two-lens view.}
Results here focus on the female subgroup, with additional groups and robustness checks in Appendix~\ref{app:experiments}.

\begin{figure}[t]
    \centering
    \begin{subfigure}[c]{\linewidth}
        \centering
        \includegraphics[width=0.95\linewidth]{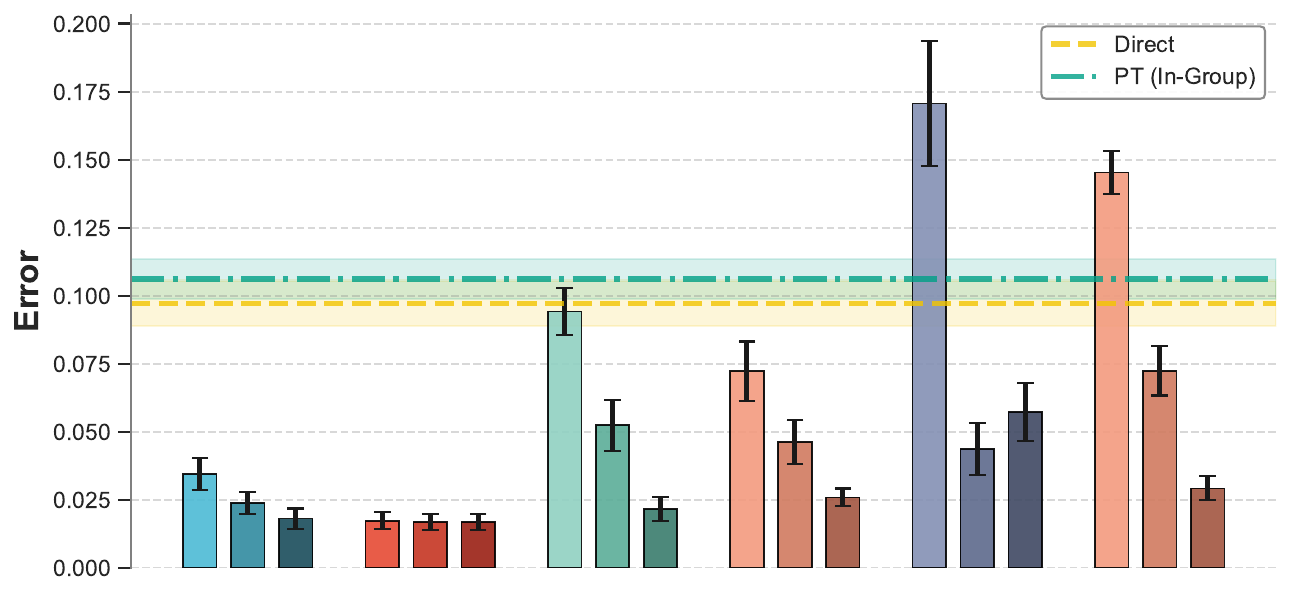}
    \end{subfigure}
    \begin{subfigure}[c]{\linewidth}
        \centering
        \includegraphics[width=0.95\linewidth]{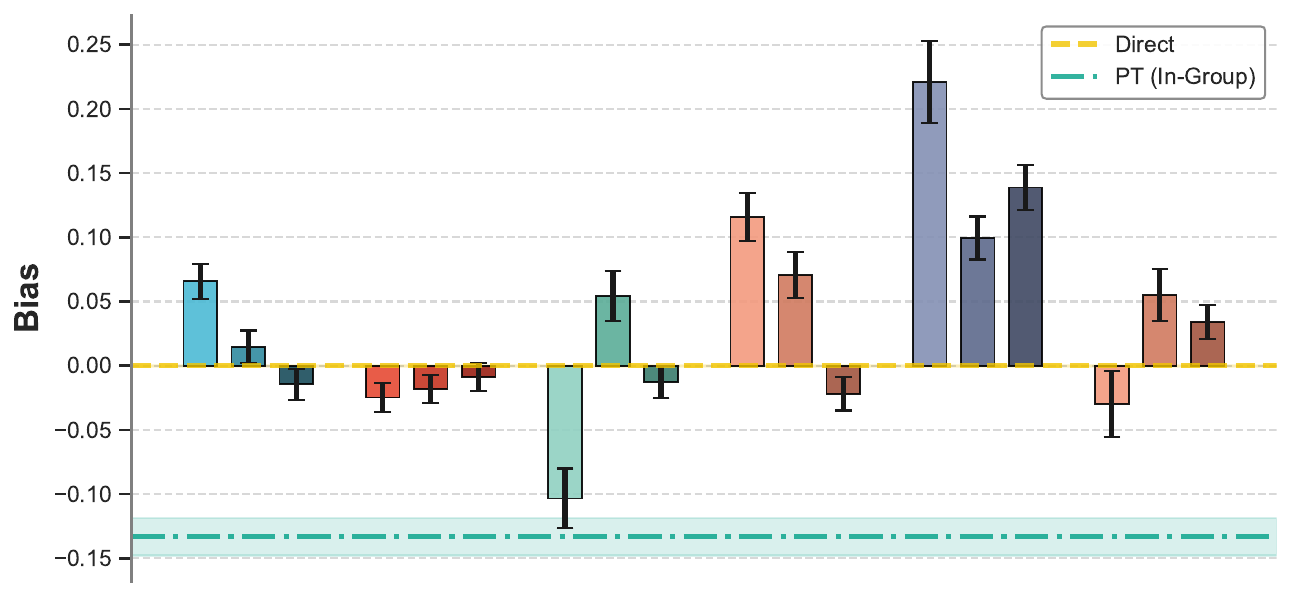}
    \end{subfigure}
    \begin{subfigure}[c]{\linewidth}
        \centering
        \includegraphics[width=0.925\linewidth]{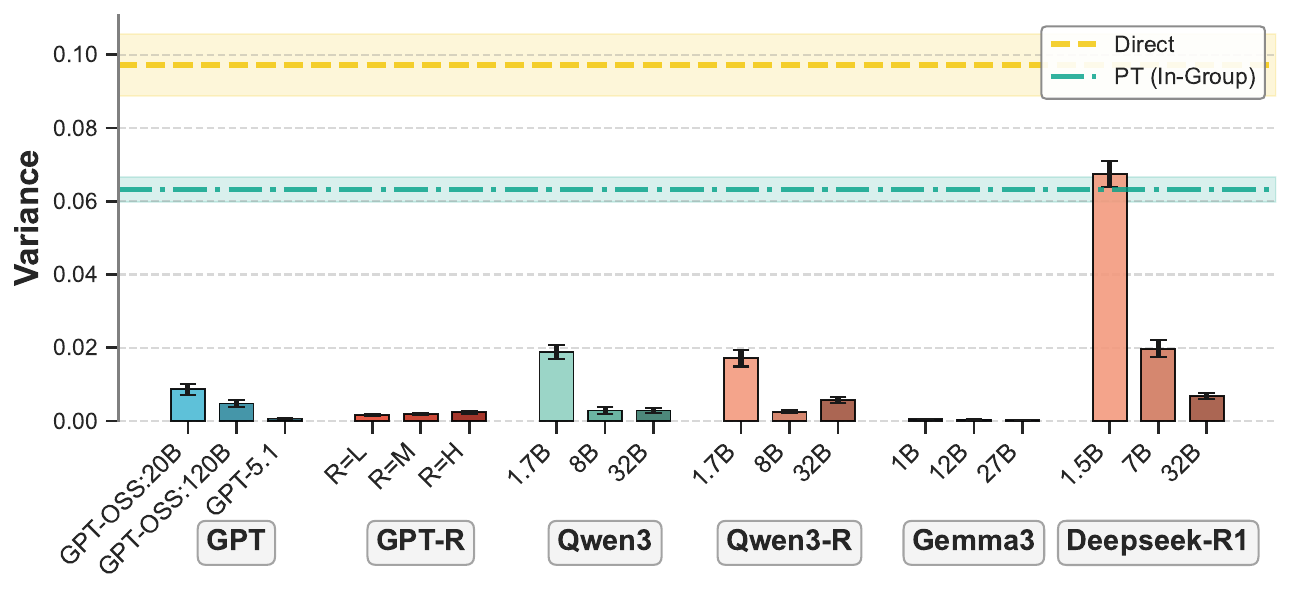}
    \end{subfigure}
    \caption{\textbf{Impact of model family and scale} on single-annotation ($k{=}1$) PT for the female subgroup. Cross-model differences in MSE (top) are dominated by bias (middle), with variance (bottom) remaining small.}
    \label{fig:influence_models}
\end{figure}

\subsection{Wide Lens: Model Family and Scale}
\label{sec:eval2_widelens}
To probe representation effects, we compare LLMs spanning model families and scales under an identical PT protocol.
While model choice is not a pure manipulation of representation bias, it serves as an empirically grounded proxy for representational fidelity via differences in pretraining coverage, capacity, and training regime.
Figure~\ref{fig:influence_models} shows that model family and scale induce large differences in MSE, sufficient to reverse whether LLM PT outperforms human PT.
While bigger models generally outperform weaker ones,
improvements are not monotonic: mid-sized models from one family can outperform larger models from another.
Error decomposition reveals that these differences are primarily driven by bias, with variance remaining comparatively small across models.

\subsection{Clear Lens: Prompting and Reasoning}
We next test whether processing bias can be modified without changing representation.
We fix the model (GPT-OSS:120B) and use a controlled prompting ladder of increasing structure: L1 (question only), L2 (+ definition), L3 (+ levels), L4 (+ examples); see Appendix~\ref{app:prompts} for complete details.
Figure~\ref{fig:influence_prompting_combined} shows that prompting can substantially shift both MSE and bias.
Importantly, this is \textit{not} a monotonic ``more structure is better'' result: different prompts reweight how beliefs are translated into numeric estimates.
Thus, prompting acts as a calibration mechanism that reshapes $b_{\mathrm{proc}}$ rather than universally reducing error.

\begin{figure}[t]
    \centering
    \begin{subfigure}[c]{0.65\linewidth}
        \centering
        \includegraphics[width=\linewidth]{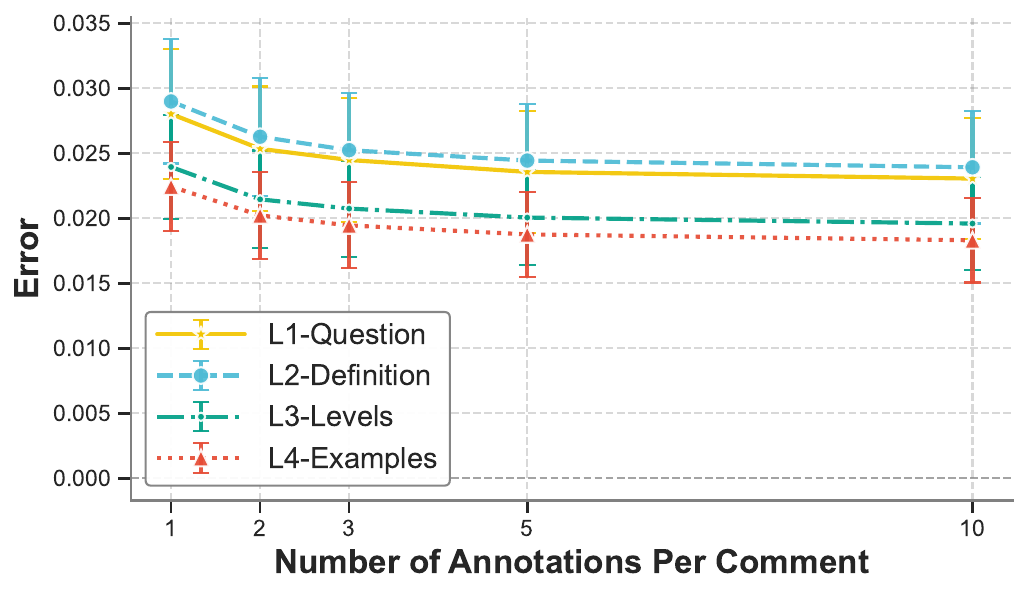}
    \end{subfigure}
    \hfill
    \begin{subfigure}[c]{0.33\linewidth}
        \centering
        \includegraphics[width=\linewidth]{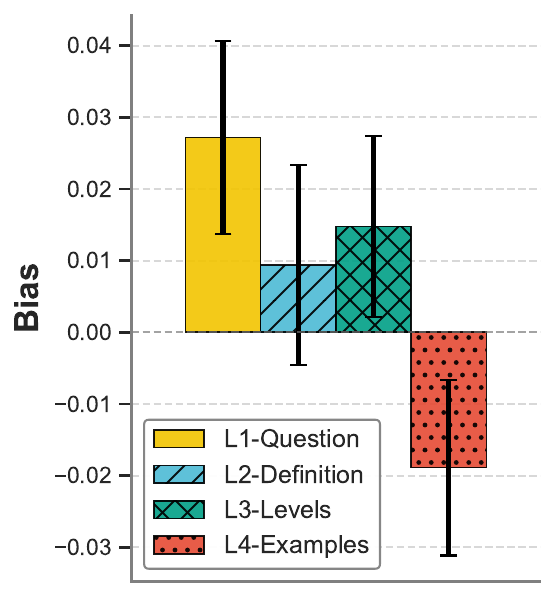}
    \end{subfigure}

    \caption{\textbf{Impact of prompting} on single-annotation LLM (GPT-OSS:120B) PT for the female subgroup. Increasing prompt structure from L1 through L4 lowers MSE (left) primarily by shifting bias (right) and can even flip its sign.}
    \label{fig:influence_prompting_combined}
\end{figure}

\begin{figure}[t]
    \centering
    \includegraphics[width=0.9\linewidth]{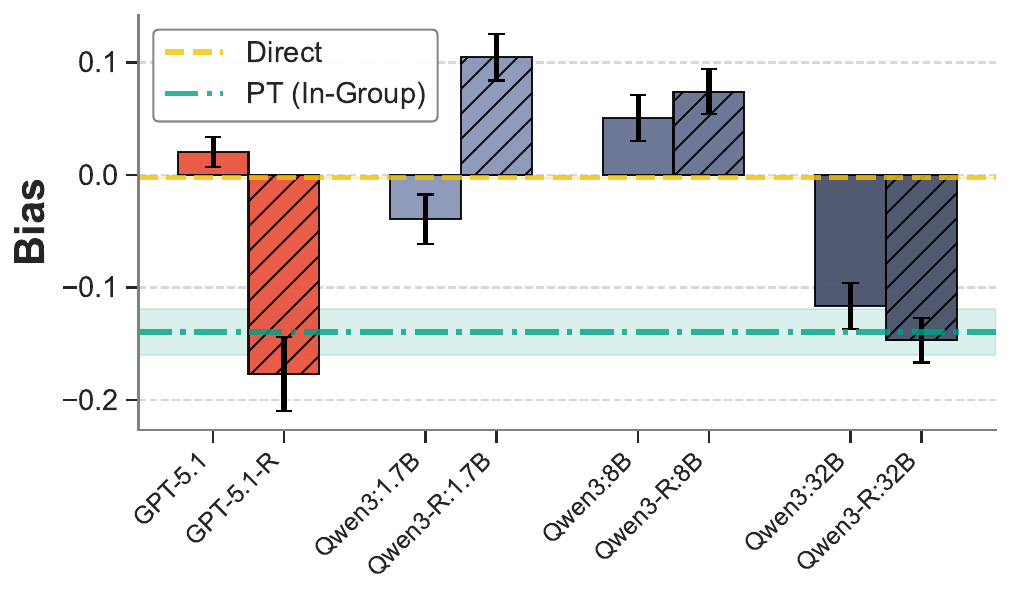}
    \caption{\textbf{Reasoning paradox diagnostic} on the non-binary subgroup: for four base--reasoning pairs, enabling native reasoning (hatched) shifts systematic bias away from ground truth, consistent with criterion drift.}
    \label{fig:reasoning_nonbinary_bias}
\end{figure}

\paragraph{The Reasoning Paradox.}
Reasoning-enabled inference modes do not reliably improve PT accuracy, and can in fact substantially worsen it.
As shown in Figure~\ref{fig:reasoning_nonbinary_bias}, explicit reasoning can induce larger systematic bias, especially for a harder subgroup like non-binary people.
The effect is model-dependent, and analysis of reasoning traces (Appendix~\ref{app:reasoning_traces}) suggests that the dominant mechanism is \textit{criterion drift}: reasoning drifts models away from estimating the empirical group toxicity rate toward applying a rubric-based classification standard, producing a systematic shift whose sign effect depends on the base model's pre-existing bias. Identity-mediated `re-coupling' between representation and processing (cf.\ Section~\ref{sec:lenses}) is one pathway for this drift, although not the main driver in our inspected traces.

\subsection{Correlation: Mixing and Temperature}
We next test interventions aimed at diversification, for lowering the correlation floor $\gamma_L V_L$ in Eq.~\ref{eq:mse_bvc_main}.
We study two common diversification mechanisms:
(i) \textit{model mixing} (aggregating predictions from different models), and
(ii) \textit{temperature} (increasing sampling randomness within a single model).
Mixing similar-sized models across families yields modest but consistent error reductions (Figure~\ref{fig:influence_model_mixing}), whereas mixing within a family across sizes (Figure~\ref{fig:influence_mixing_size}) or increasing temperature (Figure~\ref{fig:influence_temperature_models}) provides limited gains.
These findings suggest that correlation engineering is secondary in many regimes.

\begin{figure}[t]
    \centering
    \includegraphics[width=0.9\linewidth]{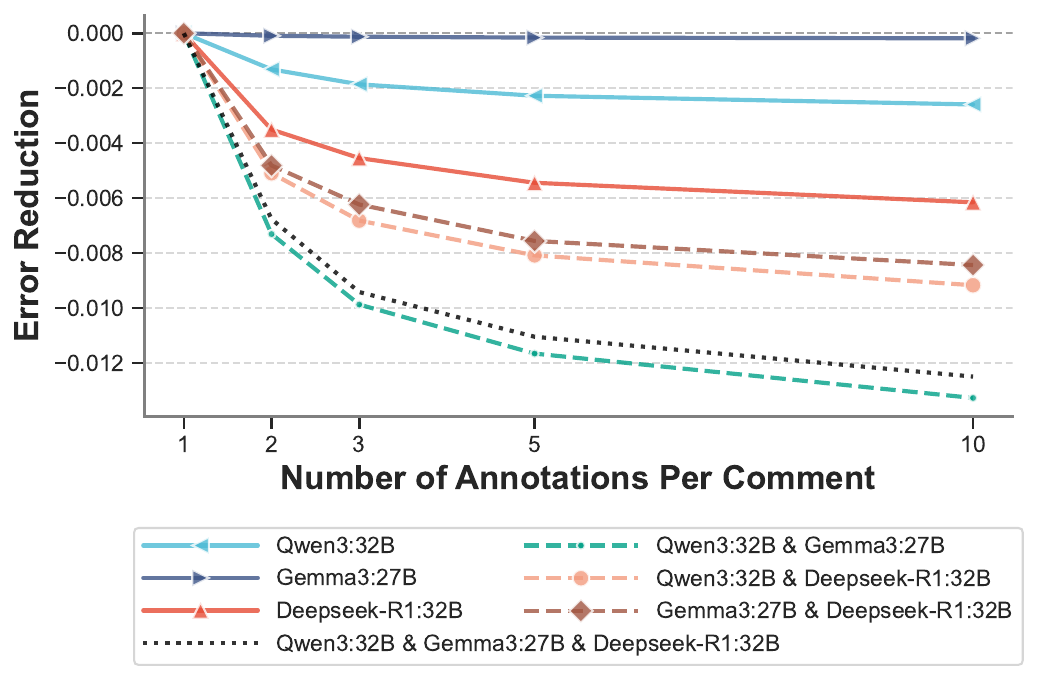}
    \caption{\textbf{Cross-family model mixing} at the large-model tier on the female subgroup. Mixing across families (dashed/dotted) yields consistent but modest error reduction relative to each model's $k{=}1$ baseline (solid). Nonetheless, mixtures often remain inferior to the strongest single model.}
    \label{fig:influence_model_mixing}
\end{figure}

\section{Discussion and Conclusion}
\label{sec:discussion}

This work challenges the prevailing assumption that LLM-based annotation is merely a cost-saving approximation of human judgment.
By reframing perspective-taking (PT) as statistical estimation of a latent group-level quantity, we show—both theoretically and empirically—that humans are not always the best available estimators of aggregate group perspectives.
Rather than treating LLMs as a pragmatic fallback, our results clarify \textit{when}, \textit{why}, and \textit{how} LLMs can act as statistically preferable frontline estimators—and where human annotation remains essential.

\paragraph{Reframing the comparison.}
Much of the discomfort around LLM-based annotation stems from an implicit category error: humans are evaluated as sources of lived, individual judgments while LLMs are evaluated as predictors of aggregates.
Once both are compared as estimators of the same latent target $f^\ast(x,g)$, the question shifts from authenticity to estimation quality under constraints.
Our bias--variance--correlation framework makes this comparison explicit, principled, and testable.

\paragraph{When LLMs are likely to be optimal.}
LLMs dominate when (i) the target group is broad or internally heterogeneous, where even in-group humans suffer Wide Lens error; (ii) annotation budgets are small, making performance variance-dominated and LLM stability ($V_L \ll V_H$) decisive; and/or (iii) humans are recruited out-of-group, inducing coupled representation and processing bias that amplifies systematic error.
In these regimes, LLMs achieve lower error through low variance, weaker coupling, and engineerable calibration.

\paragraph{When humans are likely to remain superior.}
Human annotation remains indispensable when (i) the group is deep, specific, and cohesive---making lived experience highly informative and representation bias small---and/or (ii) the group is sufficiently low-prevalence that LLM conditioning relies on sparse or stereotype-skewed evidence.
Our differential perspective-taking analysis (Appendix~\ref{sec:dpt_results}) provides direct evidence for this boundary: on the male vs. non-binary contrast, human PT significantly outperforms most LLMs at capturing per-item group differences, confirming that lower-prevalence groups remain a genuine frontier where LLMs lag.
Importantly, some contexts also demand \textit{procedural legitimacy}: in high-stakes or normative settings, direct stakeholder adjudication may outweigh purely statistical considerations. 
Our results do not contest this but clarify where statistical optimality and social legitimacy diverge.

\paragraph{Estimation-aware engineering.}
A core implication of our analysis is that annotation quality is not fixed.
Practitioners can target specific error terms rather than indiscriminately collecting more labels.
While humans are also engineerable in spirit, the levers are asymmetrical: many LLM interventions are fast and scalable once an evaluation protocol exists, whereas human improvements---while powerful---are often slower, costlier, and access-constrained.
Model choice primarily determines representation bias, prompt design reshapes processing bias, and diversification can reduce correlated error (though likely with limited returns).
Notably, explicit reasoning can backfire, as it can introduce a systematic criterion drift whose impact depends on the direction of the base model's bias.

\paragraph{From mean to distributional objectives.}
Our framework targets the population mean, which is the operationally relevant quantity in many annotation pipelines.
However, for applications requiring pluralistic representation, such as capturing the full topology of disagreement within a group, distributional objectives are important complementary targets.
Our bias--variance--correlation decomposition extends naturally to other summary statistics (median, quartiles) and, in principle, to distributional targets.
Accurate mean estimation is a foundational first step, since an estimator that misses the mean is unlikely to recover higher moments.

\paragraph{Complementarity, not replacement.}
Our findings motivate a reallocation of effort: LLMs shoulder aggregate estimation in variance- and coupling-dominated regimes, freeing scarce human expertise for cases requiring contextual nuance, participatory grounding, or legitimacy.
Seen through this lens, LLMs are not substitutes for lived experience, but tools for estimating its aggregate structure more stably and at scale.
Additionally, if an LLM stabilizes around a biased estimate, this low-variance-around-wrong-mean stability can reinforce representational monoculture, underscoring the need for subgroup-aware validation and periodic human auditing even when LLMs serve as frontline estimators.

\paragraph{Final takeaway.}
Our core contribution is not the claim that LLMs are universally better annotators, but the demonstration that, once PT is treated as estimation, we can answer the important question of
\textit{who is the better estimator under specific conditions?}
By making that question precise, and operational, we show how LLMs can move, in well-defined settings, from fallback to frontline, while clarifying where humans remain essential.
The future of perspective-taking lies not in competition between humans and LLMs, but in their principled, complementary integration.

\section*{Limitations}

This work has several limitations that delineate the scope of our claims.

\paragraph{Dependence on representation quality.}
Our framework makes explicit that LLM performance is constrained by representation bias.
When subgroup evidence is sparse, highly specific, or distorted in training data, LLM-based perspective-taking can deteriorate despite low variance.
While our experiments identify such regimes, real-world use still requires subgroup-specific validation, particularly for emerging identities or rapidly shifting social contexts.

\paragraph{Scope of tasks and domains.}
We focus on toxicity and conversational safety, where aggregate judgments are well-defined and densely annotated.
Although the estimation framework is general, empirical results may not directly transfer to tasks where perspectives are more contested, multi-dimensional, or normatively grounded (e.g., moral or policy judgments).
Extending the analysis to such settings remains an important direction for future work.
In more contested domains (e.g., political orientation, aesthetic preference, cultural values), LLM representation biases may be amplified by long-tail distributions in training data, and the ``frontline'' claim should not be extrapolated without domain-specific validation.

\paragraph{Estimator-centric evaluation.}
We evaluate estimators against aggregate human judgments as proxies for the latent target $f^\ast(x,g)$.
This standard choice abstracts away intra-group disagreement and deliberative dynamics. Our results therefore address estimation accuracy rather than the full richness of social meaning-making.

\paragraph{Model and protocol dependence.}
While we study a diverse set of models and prompting strategies, LLM architectures and alignment techniques are rapidly evolving.
Some findings, particularly those concerning reasoning-enabled variants, may change as models and alignment techniques improve.
Accordingly, our conclusions should be read as structural insights about estimator behavior, not as immutable properties of specific models.

\medskip\noindent
Overall, these limitations do not undermine the central contribution of the paper, but rather clarify the conditions under which estimation-based (LLM) perspective-taking is most informative and reliable.

\section*{Ethical Considerations}

This work engages with considerations around representation, fairness, and automation in subjective judgment.

\paragraph{Risk of misrepresentation.}
LLMs inherit biases from their training data and may misestimate perspectives of marginalized or low-prevalence groups.
Our analysis explicitly identifies these failure modes and the regimes where human judgment remains essential.
Using LLM-based perspective-taking without subgroup-aware validation risks reinforcing existing inequities.

\paragraph{Procedural legitimacy and participation.}
Statistical optimality is not always sufficient.
In many high-stakes or normative settings, ethical legitimacy requires direct participation from affected communities.
Our results do not argue for replacing such participation, but for clarifying when LLMs can responsibly support aggregate estimation without displacing human agency.

\paragraph{Misuse and overgeneralization.}
LLM-based perspective-taking may be overextended beyond its validated scope, for example to justify decisions about groups lacking adequate representation.
We therefore emphasize transparent reporting of annotation protocols, explicit documentation of subgroup coverage, and conservative use in sensitive applications.

\paragraph{Responsible deployment.}
We recommend hybrid annotation pipelines that combine LLM-based estimation with periodic human auditing.
Such approaches leverage the stability and efficiency of LLMs while preserving accountability, inclusivity, and adaptability.

\medskip\noindent
Overall, LLMs are not substitutes for lived experience, but tools whose ethical use depends on alignment between statistical objectives, social context, and human oversight.

\bibliography{refs}

\appendix
\renewcommand\thefigure{A\arabic{figure}}
\renewcommand\thetable{A\arabic{table}}
\setcounter{figure}{0}
\setcounter{table}{0}

\section{Extended Related Work}
\label{sec:related}

\paragraph{Perspective-taking and subjective judgment.}
Prior work studies PT as a mechanism for eliciting judgments about how groups would respond, highlighting both its utility and distortions, particularly in out-group settings \cite{frenda2024perspectivist,duan2025exploring,sandri2023don}.
Recent HCI work further analyzes how humans and LLMs differ in the perspectives they invoke in subjective decision-making, showing reliance on differently weighted perspective distributions \cite{aoyagui2025matter}.
We advance this line by formalizing PT not merely as opinion elicitation, but as statistical estimation of a latent group-level judgment, enabling principled comparison of human and model estimators.

\paragraph{LLMs as annotators and replacement.}
Several studies evaluate LLMs as annotators or judges \cite{li2025generation}, focusing on agreement with human labels and conditions for replacing human annotators.
\citet{movva2024annotation} show that LLMs can approximate crowd averages for conversational safety but struggle with fine-grained between-group differences.
\citet{calderon-etal-2025-alternative} propose a statistical alternative annotator test to justify LLM replacement using limited labeled data.
In contrast, we analyze estimation efficiency, characterizing when LLMs or humans constitute the superior estimator.

\paragraph{Social reasoning and identity bias.}
LLMs perform strongly on social reasoning and theory-of-mind benchmarks \cite{rabinowitz2018machine,wang2021towards,huang2023chatgpt}, yet exhibit systematic social identity biases and human-like reasoning distortions that vary across models and training regimes \cite{hagendorff2023human,hu2025generative}.
We build on these insights by decomposing how bias, variance and correlation interact in PT estimation, including regimes where increased reasoning can counterintuitively degrade performance.

\paragraph{Persona-LLMs and sociodemographic prompting.}
A growing body of work investigates prompting LLMs with demographic personas to simulate individual-level opinions.
Several studies report mixed or negative results for such sociodemographic simulation: \citet{sun2025sociodemographic} find that persona prompting does not reliably improve prediction of individual subjective judgments, and \citet{lutz2025prompt} show that persona-based elicitation produces inconsistent alignment with human subgroup responses.
\citet{orlikowski2025beyond} fine-tune LLMs to behave as individual annotators and report that demographic features alone are insufficient predictors.
Our framework offers a possible explanation for these mixed findings: individual simulation implicitly requires approximating the full within-group distribution of judgments, whereas our task targets just the subgroup mean---a fundamentally easier estimation target.
\citet{joshi2025improving} find that psychologically-scaffolded rationales outperform default chain-of-thought for persona prediction, a pattern consistent with our reasoning paradox: unstructured reasoning may degrade group-level estimation by re-coupling representation and processing errors, while structured reasoning grounded in external frameworks can help.
\citet{meister2025benchmarking} find that LLMs more accurately \textit{describe} opinion distributions than \textit{simulate} them via role-play, consistent with our protocol of directly eliciting group-level statistics rather than role-playing individuals.

\paragraph{Pluralistic alignment and distributional objectives.}
A complementary line of work targets matching the full distribution of human opinions rather than a single summary statistic.
\citet{sorensen2024position} chart a roadmap for pluralistic alignment, and \citet{feng2024modular} propose multi-LLM collaboration to represent diverse viewpoints.
\citet{lee2024aligning} demonstrate system-message-based generalization for aligning to thousands of user preferences, while \citet{hu2025population} develop population-aligned persona generation for social simulation.
We see mean estimation as a foundational first step in this broader agenda: if an estimator cannot accurately recover the first moment, distributional fidelity is unlikely.
Our bias--variance--correlation decomposition provides formal vocabulary for reasoning about estimator quality that naturally extends to other summary statistics and distributional targets.

\begin{table*}[t]
\centering
\small
\renewcommand{\arraystretch}{1.25}
\begin{tabular}{p{2.6cm} p{4.3cm} p{4.3cm} p{4.3cm}}
\toprule
\textbf{} &
\textbf{Human Annotator} \newline \textbf{(``Lived'' Estimator)} &
\textbf{LLM Annotator} \newline \textbf{(``Learned'' Estimator)} &
\textbf{Key Insights} \\
\midrule

\textbf{Target $f^\ast(x,g)=\mathbb{E}_{h\sim P_g}[Y_h(x)]$} &
Estimate via social sampling and introspection. 
& Estimate via learned statistical regularities and instruction following. 
&
Both are imperfect estimators of the same latent subgroup mean. \\
\midrule

\textbf{Bias}
$\bigl(\mu_{\mathrm{repr}} + \mu_{\mathrm{proc}}\bigr)^2$
&
\textbf{Representation (Wide Lens):} Sampling bias; lived experience samples a \textit{local} neighborhood, misweighing latent subcommunities.
\newline
\textbf{Processing (Clear Lens):} Cognitive bias; 
distortions converting belief to numeric judgment; relatively hard-coded by psychology.
\newline
\textbf{Coupling:}
Identity (homophily) can link representation and processing errors,
yielding \textit{super-additive} bias.
&
\textbf{Representation (Wide Lens):}
Data bias; broad but imperfect coverage from pretraining; risky interpolation for sparse groups.
\newline
\textbf{Processing (Clear Lens):}
Modeling bias; distortions from alignment and safety constraints; relatively modifiable via prompting.
\newline
\textbf{Coupling:} 
Pretraining and post-training errors are mechanically distinct mechanisms, yielding weaker or mixed-sign interaction.
&
LLMs trained on vast subgroup data that are relatively immune to human-like cognitive biases can outperform humans.
\newline
Persistent gap even at large budgets indicates a bias advantage; often the case for broad groups or out-group human PT.
\newline
\textbf{Reasoning paradox:}
Explicit ``thinking'' (CoT) in LLMs can induce \textit{criterion drift}, shifting the model from distributional estimation to rubric-based classification and increasing bias.
\\
\midrule

\textbf{Variance}
$V$
&
\textbf{High noise} for single annotators ($k=1$) due to individual differences, mood, fatigue etc.;
decreases with aggregation.
&
\textbf{Low noise} for a single model under fixed prompting;
additional gains from ensembling are often modest.
&
\textbf{Low-budget regime:}
A single LLM can outperform a single human and rival
small human crowds, \textit{even for direct annotation}.
\\
\midrule

\textbf{Correlation}
$\gamma$
&
\textbf{Correlation floor:} Correlation from shared culture, platforms and norms;
difficult to engineer away.
&
\textbf{Engineerable floor:} High correlation from shared data and model architectures; partially engineerable.
&
\textbf{Human advantage at scale:}
Human crowds benefit more from aggregation, yet can plateau in PT.
\\
\midrule

\textbf{Decision Rule}
&
\multicolumn{3}{p{13.2cm}}{
LLMs excel when $\text{Bias}_L^2 + \gamma_L V_L + \tfrac{1-\gamma_L}{k}V_L < \text{Bias}_H^2 + \gamma_H V_H + \tfrac{1-\gamma_H}{k}V_H$; typical for large heterogeneous groups where LLM calibration and diversity can be engineered.
Humans remain preferable when groups are deep and cohesive while LLM conditioning is sparse or stereotype-skewed.
} \\
\bottomrule
\end{tabular}
\caption{
\textbf{Bias--Variance--Correlation View of Human vs. LLM for Perspective-Taking.}
Wide Lens, Clear Lens, and Coupling form a bias decomposition;
variance and correlation govern how performance scales with budget.
This offers an overview of when and why LLMs can move from fallback to frontline through PT-as-estimation lens.
}
\label{tab:mse_decomposition}
\end{table*}

\section{Theory}
\label{sec:appendix_theory}

\begin{table*}[t]
\centering
\small
\begin{tabular}{l l}
\toprule
Symbol & Meaning \\
\midrule
$x\in\mathcal{X}$, $g\in\mathcal{G}$ & item and target group \\
$P_g$ & population distribution over humans in group $g$ \\
$Y_h(x)\in[0,1]$ & direct judgment of item $x$ by human $h$ \\
$f^\star(x,g)$ & target group-mean judgment \\
$b_{\mathrm{W},H}$, $b_{\mathrm{C},H}$ & human Wide/Clear Lens bias components \\
$b_{\mathrm{W},L}$, $b_{\mathrm{C},L}$ & LLM Wide/Clear Lens bias components \\
$\mu_{\mathrm{W},H}$, $\mu_{\mathrm{C},H}$ & mean human Wide/Clear Lens biases \\
$\mu_{\mathrm{W},L}$, $\mu_{\mathrm{C},L}$ & mean LLM Wide/Clear Lens biases \\
$\mu_H=\mu_{\mathrm{W},H}+\mu_{\mathrm{C},H}$ & total mean human bias \\
$\mu_L=\mu_{\mathrm{W},L}+\mu_{\mathrm{C},L}$ & total mean LLM bias \\
$r_H$, $r_L$ & zero-mean residuals after removing mean biases \\
$V_H=\mathrm{Var}(r_H)$, $V_L=\mathrm{Var}(r_L)$ & per-annotator residual variances (variance at $n{=}1/m{=}1$) \\
$\gamma_H$, $\gamma_L$ & exchangeable residual correlations \\
$n$, $m$ & \# humans / \# LLM samples in an aggregate \\
\bottomrule
\end{tabular}
\caption{Summary of theoretical quantities.}
\label{tab:notation_appendix}
\end{table*}

This appendix develops a precise theoretical framework for when LLMs can outperform humans at perspective-taking annotation, i.e., predicting the group-mean judgment $f^\star(x,g)$ for an item $x$ and a demographic group $g$.
We provide:
(i) a Wide Lens (representation) model via latent subcommunity mixtures,
(ii) bias--variance--correlation decompositions for aggregated human and LLM estimators,
(iii) a careful treatment of coupling between Wide and Clear Lens errors (mean-level vs variance-level), and
(iv) asymptotic error floors and a superiority criterion.

\subsection{Problem Setup}
\label{sec:setup_appendix}

Let $\mathcal{P}_g$ denote the (conceptual) population of humans in group $g$, distributed as $P_g$.
For $h\sim P_g$, let $Y_h(x)\in[0,1]$ be $h$'s direct judgment.
The target group perspective is:

\begin{definition}[Target quantity]
\label{def:target_appendix}
\begin{equation}
    f^\star(x,g)\;\triangleq\;\mathbb{E}_{h\sim P_g}[Y_h(x)].
\end{equation}
\end{definition}
Perspective-taking methods (human or LLM) output $\hat f(x,g)$ and are evaluated by
\begin{equation}
\label{eq:mse_def_appendix}
\mathrm{MSE}(\hat f;x,g)
\triangleq
\mathbb{E}\!\left[\big(\hat f(x,g)-f^\star(x,g)\big)^2\right],
\end{equation}
where the expectation ranges over annotator identity / sampling randomness.

\subsection{Wide Lens via Latent Mixtures}
\label{sec:wide_lens_appendix}

We model demographic groups as mixtures over latent subcommunities that can differ systematically in judgments.

\begin{assumption}[Latent subcommunity mixture]
\label{ass:mixture_appendix}
For each group $g\in\mathcal{G}$, there exists a finite or countable set of latent subcommunities $\mathcal{C}_g$ and mixture weights
$\mathbf{w}^g=(w_c^g)_{c\in\mathcal{C}_g}$. with $w_c^g\ge 0$ and $\sum_c w_c^g=1$.
Each subcommunity $c$ has mean judgment
\begin{equation}
    f^\star_c(x,g)\;\triangleq\;\mathbb{E}[Y_h(x)\mid h\in c].
\end{equation}
Then
\begin{equation}
\label{eq:mixture_target_appendix}
    f^\star(x,g) \;=\; \sum_{c\in\mathcal{C}_g} w_c^g\, f^\star_c(x,g).
\end{equation}
\end{assumption}

\begin{definition}[Within-group heterogeneity]
\label{def:hetero_appendix}
\begin{equation}
\label{eq:Vhetero_appendix}
    V_{\mathrm{hetero}}(x,g)
    \;\triangleq\;
    \sum_{c\in\mathcal{C}_g} w_c^g\big(f^\star_c(x,g)-f^\star(x,g)\big)^2.
\end{equation}
\end{definition}

\paragraph{Annotator-internal mixture.}
An annotator (human or LLM) may implicitly reason about $g$ using an \textit{internal} mixture
$\mathbf{q}^{g,a}=(q_c^{g,a})_{c\in\mathcal{C}_g}$ with $\sum_c q_c^{g,a}=1$.
Define the Wide Lens Effect, a representation bias capturing how $a$'s mental model of group $g$ misweights sub-communities, as:
\begin{equation}
\label{eq:Brepr_def_appendix}
    b_{\mathrm{W}}(x,g;a)
    \;\triangleq\;
    \sum_{c\in\mathcal{C}_g}\big(q_c^{g,a}-w_c^g\big)\,f^\star_c(x,g).
\end{equation}

\begin{assumption}
\label{ass:support_appendix}
For any $(x,g)$ and any $a$ under consideration, if $q_c^{g,a}>0$ then $w_c^g>0$.
Equivalently, $\mathbf{q}^{g,a}$ is absolutely continuous with respect to $\mathbf{w}^g$.
\end{assumption}
Assumption~\ref{ass:support_appendix} excludes degenerate cases where a sample assigns positive mass to a subcommunity that does not exist in the target population; without it, the $\chi^2$ divergence below can be infinite and the bound becomes vacuous.
This assumption is natural for humans, who sample from lived experience.
For LLMs, however, hallucinated stereotypes can assign $q_c > 0$ to fictitious subcommunities where $w_c = 0$, causing the $\chi^2$ divergence to explode.
This formalizes a structural limit of LLM-based PT: when representation relies on fabricated evidence, the Wide Lens bound breaks down entirely, providing a theoretical manifestation of the Representation Limits Hypothesis (H3).

\begin{lemma}[Representation bias bound]
\label{lem:repr_bound_appendix}
Under Assumptions~\ref{ass:mixture_appendix} and~\ref{ass:support_appendix},
\begin{equation}
\label{eq:repr_bound_appendix}
b_{\mathrm{W}}(x,g;a)^2
\le
V_{\mathrm{hetero}}(x,g)\cdot
\chi^2\!\big(\mathbf{q}^{g,a}\,\|\,\mathbf{w}^g\big),
\end{equation}
where
\begin{equation}
\label{eq:chi2_appendix}
\chi^2\!\big(\mathbf{q}\,\|\,\mathbf{w}\big)
\triangleq
\sum_{c\in\mathcal{C}_g}\frac{(q_c-w_c)^2}{w_c}.
\end{equation}
\end{lemma}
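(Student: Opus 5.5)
The plan is to recenter the sum defining $b_{\mathrm{W}}$ and then apply a weighted Cauchy--Schwarz inequality.

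\textbf{Step 1 (recentering).} Since $\sum_c q_c^{g,a}=\sum_c w_c^g=1$, the differences satisfy $\sum_c (q_c^{g,a}-w_c^g)=0$. Hence we may subtract $f^\star(x,g)$ from every $f^\star_c(x,g)$ in \eqref{eq:Brepr_def_appendix} without changing its value:
\[
b_{\mathrm{W}}(x,g;a)=\sum_{c\in\mathcal{C}_g}\big(q_c^{g,a}-w_c^g\big)\big(f^\star_c(x,g)-f^\star(x,g)\big).
\]

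\textbf{Step 2 (restricting the support).} By Assumption~\ref{ass:support_appendix}, any $c$ with $w_c^g=0$ also has $q_c^{g,a}=0$. Such terms contribute nothing to the sum above, nor to $V_{\mathrm{hetero}}$ or to the $\chi^2$ divergence (using the convention $0/0=0$). We may therefore restrict all sums to $\{c: w_c^g>0\}$.

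\textbf{Step 3 (Cauchy--Schwarz).} On this support, write each summand as
\[
\frac{q_c-w_c}{\sqrt{w_c}}\cdot\sqrt{w_c}\,\big(f^\star_c-f^\star\big).
\]
Cauchy--Schwarz then gives
\[
b_{\mathrm{W}}^2\le\Big(\sum_c \frac{(q_c-w_c)^2}{w_c}\Big)\Big(\sum_c w_c\,(f^\star_c-f^\star)^2\big)\Big).
\]
The first factor is $\chi^2(\mathbf{q}^{g,a}\|\mathbf{w}^g)$ by \eqref{eq:chi2_appendix}, and the second is $V_{\mathrm{hetero}}(x,g)$ by \eqref{eq:Vhetero_appendix}. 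This yields \eqref{eq:repr_bound_appendix}.

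\textbf{Main obstacle.} The only delicate point is the countable case of $\mathcal{C}_g$, where we must check that the series converge. The second factor is finite because $f^\star_c\in[0,1]$ and $f^\star\in[0,1]$, so $V_{\mathrm{hetero}}\le 1$. If the $\chi^2$ divergence is infinite, the bound holds trivially. Otherwise, Cauchy--Schwarz in $\ell^2$ implies that the series in Step 1 converges absolutely, which justifies the rearrangement in that step. That rearrangement is itself valid because $\sum_c |q_c-w_c|\,|f^\star_c|\le 2$, so all series involved converge absolutely.
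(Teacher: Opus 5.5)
Your proof is correct and follows the paper's argument: you recenter using $\sum_c (q_c-w_c)=0$, then apply Cauchy--Schwarz to $u_c=(q_c-w_c)/\sqrt{w_c}$ and $v_c=\sqrt{w_c}\,(f^\star_c-f^\star)$, which gives $\chi^2(\mathbf{q}^{g,a}\,\|\,\mathbf{w}^g)\cdot V_{\mathrm{hetero}}(x,g)$. Your treatment of terms with $w_c^g=0$ and of convergence in the countable case adds rigor that the paper leaves implicit, though your Cauchy--Schwarz display has a stray extra closing delimiter.
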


\begin{proof}
Fix $(x,g)$ and suppress them in notation.
By \eqref{eq:mixture_target_appendix},
$f^\star=\sum_c w_c f^\star_c$.
Start from \eqref{eq:Brepr_def_appendix}:
\begin{align}
b_{\mathrm{W}}
&=\sum_c (q_c-w_c)f^\star_c \nonumber\\
&=\sum_c (q_c-w_c)\big(f^\star_c-f^\star\big),
\label{eq:repr_shift_appendix}
\end{align}
where the second equality uses $\sum_c (q_c-w_c)=0$.

Define
\(
u_c\triangleq \frac{q_c-w_c}{\sqrt{w_c}}
\)
and
\(
v_c\triangleq \sqrt{w_c}\,(f^\star_c-f^\star).
\)
Then \eqref{eq:repr_shift_appendix} implies $b_{\mathrm{W}}=\sum_c u_c v_c$.
By Cauchy--Schwarz,
\begin{align}
b_{\mathrm{W}}^2
&\le
\Big(\sum_c u_c^2\Big)\Big(\sum_c v_c^2\Big) \nonumber\\
&=
\Big(\sum_c \frac{(q_c-w_c)^2}{w_c}\Big)
\Big(\sum_c w_c(f^\star_c-f^\star)^2\Big) \nonumber\\
&=
\chi^2(\mathbf{q}\,\|\,\mathbf{w})\cdot V_{\mathrm{hetero}},
\end{align}
where the last equality uses \eqref{eq:Vhetero_appendix}.
\end{proof}

Lemma~\ref{lem:repr_bound_appendix} cleanly separates \textit{intrinsic} group diversity ($V_{\mathrm{hetero}}$) from \textit{representation mismatch} ($\chi^2$).
Heterogeneity alone is not an irreducible error term; it becomes error only through mis-weighting.

\subsection{MSE Decomposition}

\subsubsection{Human Perspective-Taking}
\label{sec:human_appendix}

\paragraph{Per-annotator model.}
Let $a$ index a human perspective-taking annotator.
Assume a single human prediction admits the decomposition
\begin{align}
\label{eq:human_decomp_appendix}
\hat f_H(x,g;a)
=
&f^\star(x,g)
+ b_{\mathrm{W},H}(x,g;a) + \nonumber\\
&\;b_{\mathrm{C},H}(x,g;a)
+ \varepsilon_H(x,g;a),
\end{align}
where $b_{\mathrm{W},H}(x,g;a)$ is the Wide Lens bias \eqref{eq:Brepr_def_appendix}; $b_{\mathrm{C},H}(x,g;a)$ is the Clear Lens or cognitive bias, capturing projection, anchoring, and other systematic distortions in translating beliefs into numeric predictions; and $\varepsilon_H(x,g;a)$ is stochastic noise reflecting within-person variability and response noise with $\mathbb{E}[\varepsilon_H(x,g;a)]=0$.

\begin{assumption}[Noise orthogonality]
\label{ass:noise_orthogonality}
For each annotator type $A\in\{H,L\}$, the noise term $\varepsilon_A(x,g;a)$ is uncorrelated with both bias components:
$\mathrm{Cov}(b_{\mathrm{W},A},\varepsilon_A) = \mathrm{Cov}(b_{\mathrm{C},A},\varepsilon_A) = 0$.
This is natural when $\varepsilon_A$ captures within-person response noise (moment-to-moment variability, rounding, fatigue) that is independent of the systematic biases arising from representation and processing.
\end{assumption}

Define mean bias components
\begin{align*}
\mu_{\mathrm{W},H}(x,g) &= \mathbb{E}[b_{\mathrm{W},H}(x,g;a)],\\
\mu_{\mathrm{C},H}(x,g) &= \mathbb{E}[b_{\mathrm{C},H}(x,g;a)],
\end{align*}
and $\mu_H(x,g)=\mu_{\mathrm{W},H}(x,g)+\mu_{\mathrm{C},H}(x,g)$.

Define the human residual
\begin{equation}
\label{eq:human_resid_appendix}
\small
r_H(x,g;a)
\triangleq
\big(b_{\mathrm{W},H}-\mu_{\mathrm{W},H}\big)
+
\big(b_{\mathrm{C},H}-\mu_{\mathrm{C},H}\big)
+
\varepsilon_H,
\end{equation}
and $V_H(x,g)\triangleq \mathrm{Var}(r_H(x,g;a))$.

\paragraph{Aggregation and correlation.}
For $n$ exchangeable annotators $a_1,\dots,a_n$ (i.e., marginally identically distributed with common pairwise covariance), define
\begin{equation}
\label{eq:human_agg_appendix}
\bar f_H^{(n)}(x,g)=\frac{1}{n}\sum_{i=1}^n \hat f_H(x,g;a_i).
\end{equation}
Assume exchangeable correlation:
\begin{equation}
\label{eq:human_corr_appendix}
\small
\mathrm{Cov}\!\big(r_H(x,g;a_i),r_H(x,g;a_j)\big)
=
\gamma_H(x,g)\,V_H(x,g),
\end{equation}
where $i\neq j$.

\paragraph{Interpretation of $\gamma_H$.}
Exchangeability (rather than independence) is the modeling choice here: real human annotators drawn from the same cultural milieu share norms, media exposure, and cognitive frames that induce residual correlation beyond what is captured by the mean bias $\mu_H$.
When $\gamma_H > 0$, aggregation cannot eliminate all variance even as $n\to\infty$, producing the irreducible ``sociological floor'' $\gamma_H V_H$.
For homogeneous LLM samples from a single model at fixed temperature, $\gamma_L$ may be near zero (temperature sampling is approximately independent), but across models sharing training data, residual correlation can be nontrivial.

\begin{lemma}[Bias of aggregated human estimator]
\label{lem:human_bias_appendix}
\begin{equation}
\label{eq:human_bias_appendix}
\mathbb{E}\!\left[\bar f_H^{(n)}(x,g)-f^\star(x,g)\right]=\mu_H(x,g).
\end{equation}
\end{lemma}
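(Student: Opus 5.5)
The argument is a direct application of linearity of expectation combined with the per-annotator decomposition \eqref{eq:human_decomp_appendix}. Fix $(x,g)$ and suppress it in notation. We will first compute the expected error of a single human prediction, and then show that averaging over $n$ annotators leaves this expectation unchanged.

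\textbf{Step 1 (single annotator).} Take expectations over the annotator draw $a\sim P$ and any response randomness in \eqref{eq:human_decomp_appendix}. This gives
\begin{equation*}
\mathbb{E}[\hat f_H(a)-f^\star]=\mathbb{E}[b_{\mathrm{W},H}(a)]+\mathbb{E}[b_{\mathrm{C},H}(a)]+\mathbb{E}[\varepsilon_H(a)].
\end{equation*}
We substitute the definitions $\mu_{\mathrm{W},H}=\mathbb{E}[b_{\mathrm{W},H}]$ and $\mu_{\mathrm{C},H}=\mathbb{E}[b_{\mathrm{C},H}]$, together with the modeling assumption $\mathbb{E}[\varepsilon_H]=0$. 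The right-hand side then equals $\mu_{\mathrm{W},H}+\mu_{\mathrm{C},H}=\mu_H$. Equivalently, the residual $r_H$ in \eqref{eq:human_resid_appendix} has mean zero, and $\hat f_H(a)-f^\star=\mu_H+r_H(a)$.

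\textbf{Step 2 (aggregation).} Write $\bar f_H^{(n)}-f^\star=\frac1n\sum_{i=1}^n(\hat f_H(a_i)-f^\star)$. This uses only the fact that $f^\star$ is a deterministic constant for fixed $(x,g)$. By linearity, the expectation of the average is the average of the expectations. Exchangeability implies that each $a_i$ has the same marginal law as a single annotator, so each term contributes $\mu_H$ by Step 1. Hence the average is $\frac1n\cdot n\mu_H=\mu_H$.

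There is no real obstacle here. The only point needing care is that exchangeability concerns marginal distributions and not independence. The correlation structure \eqref{eq:human_corr_appendix} is therefore irrelevant to the first moment: it enters only the variance computation, not this bias lemma. We should also note that no orthogonality assumption (Assumption~\ref{ass:noise_orthogonality}) is required, since it governs second moments and not first moments. Integrability is immediate because all quantities are bounded: predictions and $f^\star$ lie in $[0,1]$.
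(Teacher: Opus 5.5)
Your proposal is correct and follows the paper's proof: write $\hat f_H - f^\star = \mu_H + r_H$ with $\mathbb{E}[r_H]=0$, then average over the $n$ annotators using linearity of expectation, with exchangeability giving identical marginals. One small quibble: boundedness of $\hat f_H$ and $f^\star$ only makes the total error integrable, and splitting the expectation into three terms also needs $b_{\mathrm{C},H}$ and $\varepsilon_H$ to be integrable individually, which comes from the paper's definition of $\mu_{\mathrm{C},H}$ and its assumption $\mathbb{E}[\varepsilon_H]=0$, not from boundedness.
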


\begin{proof}
From \eqref{eq:human_decomp_appendix} and \eqref{eq:human_resid_appendix},
\(
\hat f_H-f^\star=\mu_H+r_H
\)
with $\mathbb{E}[r_H]=0$.
Averaging and applying linearity of expectation yields \eqref{eq:human_bias_appendix}.
\end{proof}

\begin{lemma}[Variance of aggregated human estimator]
\label{lem:human_var_appendix}
The variance $\mathrm{Var}\!\left(\bar f_H^{(n)}(x,g)\right)$ can be decomposed as:
\begin{align}
\label{eq:human_var_appendix}
\frac{1}{n}V_H(x,g)\big(1-\gamma_H(x,g)\big)
+
\gamma_H(x,g)V_H(x,g).
\end{align}
\end{lemma}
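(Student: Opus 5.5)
The plan is to reduce the variance of the aggregate to a variance of averaged residuals, and then apply the standard variance-of-a-sum expansion under the exchangeable covariance structure \eqref{eq:human_corr_appendix}.

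\textbf{Step 1: rewrite the aggregate.} From \eqref{eq:human_decomp_appendix} and \eqref{eq:human_resid_appendix}, exactly as in the proof of Lemma~\ref{lem:human_bias_appendix}, each prediction satisfies $\hat f_H(x,g;a_i)=f^\star(x,g)+\mu_H(x,g)+r_H(x,g;a_i)$. Averaging over $i$ gives
\[
\bar f_H^{(n)}(x,g)=f^\star(x,g)+\mu_H(x,g)+\frac{1}{n}\sum_{i=1}^n r_H(x,g;a_i).
\]
Since $f^\star+\mu_H$ is deterministic for fixed $(x,g)$, it follows that $\mathrm{Var}(\bar f_H^{(n)})=\mathrm{Var}\big(\tfrac1n\sum_i r_H(a_i)\big)$.

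\textbf{Step 2: expand the variance of the sum.} We have
\[
\mathrm{Var}\Big(\tfrac1n\sum_i r_i\Big)=\tfrac{1}{n^2}\Big(\sum_i \mathrm{Var}(r_i)+\sum_{i\neq j}\mathrm{Cov}(r_i,r_j)\Big).
\]
By exchangeability, each of the $n$ diagonal terms equals $V_H$. By \eqref{eq:human_corr_appendix}, each of the $n(n-1)$ off-diagonal terms equals $\gamma_H V_H$. The total is therefore $\tfrac{1}{n^2}\big(nV_H+n(n-1)\gamma_H V_H\big)=\tfrac{V_H}{n}+\tfrac{n-1}{n}\gamma_H V_H$.

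\textbf{Step 3: regroup.} Writing $\tfrac{n-1}{n}=1-\tfrac1n$ gives $\gamma_H V_H+\tfrac{1}{n}V_H(1-\gamma_H)$, which is \eqref{eq:human_var_appendix}.

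The only point needing care, rather than any real obstacle, is that $V_H$ must be the variance of the full residual, including the centered bias fluctuations. The definition of $V_H$ via \eqref{eq:human_resid_appendix} already ensures this. Assumption~\ref{ass:noise_orthogonality} is not needed for this identity; it would only be needed to split $V_H$ further into bias-variance and noise-variance parts.
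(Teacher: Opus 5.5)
Your proof is correct and matches the paper's argument step for step. Like the paper, you reduce to the variance of the averaged residuals, expand it into $n$ diagonal terms $V_H$ and $n(n-1)$ off-diagonal terms $\gamma_H V_H$, and regroup. Your remark that the noise-orthogonality assumption is not needed here is also right; the paper only invokes it later, in the covariance-amplification lemma that splits $V_H$ into its components.
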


\begin{proof}
Using $\bar f_H^{(n)}-f^\star=\mu_H+\frac{1}{n}\sum_{i=1}^n r_{H,i}$ where $r_{H,i}=r_H(x,g;a_i)$,
\begin{equation}
\mathrm{Var}\!\left(\bar f_H^{(n)}\right)
=
\mathrm{Var}\!\left(\frac{1}{n}\sum_{i=1}^n r_{H,i}\right).
\end{equation}
We can further expand above variance as:
\begin{align}
&\frac{1}{n^2}\Big(\sum_{i=1}^n \mathrm{Var}(r_{H,i})
+\sum_{i\neq j}\mathrm{Cov}(r_{H,i},r_{H,j})\Big) \nonumber\\
&=\frac{1}{n^2}\Big(nV_H+n(n-1)\gamma_H V_H\Big) \nonumber\\
&=\frac{1}{n}V_H+\frac{n-1}{n}\gamma_H V_H \nonumber\\
&=\frac{1}{n}V_H(1-\gamma_H)+\gamma_H V_H,
\end{align}
where $(x,g)$ dependence is suppressed for readability.
\end{proof}

\begin{corollary}[Aggregated human MSE]
\label{cor:human_mse_appendix}
\begin{align}
\label{eq:human_mse_appendix}
\mathrm{MSE}\!\left(\bar f_H^{(n)};x,g\right)
&=
\mu_H(x,g)^2
+
\gamma_H(x,g)V_H(x,g) \nonumber\\
&+
\frac{1}{n}V_H(x,g)\big(1-\gamma_H(x,g)\big).
\end{align}
\end{corollary}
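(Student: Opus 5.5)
The corollary follows from the standard bias–variance identity for the squared error of a random estimator, combined with Lemma~\ref{lem:human_bias_appendix} and Lemma~\ref{lem:human_var_appendix}. Fix $(x,g)$ and suppress them in notation. Write $Z \triangleq \bar f_H^{(n)} - f^\star$. Since $f^\star$ is deterministic, $\mathrm{Var}(Z) = \mathrm{Var}(\bar f_H^{(n)})$.

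\textbf{Step 1 (bias–variance split).} I will expand $\mathbb{E}[Z^2] = (\mathbb{E}[Z])^2 + \mathrm{Var}(Z)$ by adding and subtracting $\mathbb{E}[Z]$ inside the square. The cross term $2\,\mathbb{E}[Z]\,\mathbb{E}\big[Z - \mathbb{E}[Z]\big]$ vanishes because $\mathbb{E}[Z]$ is a constant. This identity only requires the second moments of the $\hat f_H(x,g;a_i)$ to be finite, which holds since every quantity lives in a bounded range.

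\textbf{Step 2 (substitute the lemmas).} Lemma~\ref{lem:human_bias_appendix} gives $\mathbb{E}[Z] = \mu_H$, so the first term is $\mu_H^2$. Lemma~\ref{lem:human_var_appendix} gives the second term as $\frac{1}{n}V_H(1-\gamma_H) + \gamma_H V_H$. This step uses Assumption~\ref{ass:noise_orthogonality} and the exchangeable covariance in \eqref{eq:human_corr_appendix} only insofar as those lemmas already invoke them. Substituting and reordering the terms yields \eqref{eq:human_mse_appendix}.

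There is no real obstacle. The only point to verify is that the decomposition $\hat f_H - f^\star = \mu_H + r_H$ with $\mathbb{E}[r_H] = 0$ holds, so that $Z = \mu_H + \frac{1}{n}\sum_i r_{H,i}$. This follows directly from \eqref{eq:human_decomp_appendix}, \eqref{eq:human_resid_appendix}, and $\mathbb{E}[\varepsilon_H] = 0$. With that in hand, the mean and centered parts of $Z$ are exactly the ones used in the two lemmas.
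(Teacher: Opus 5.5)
Your proposal is correct and matches the paper's proof, which likewise writes the MSE as the squared mean error plus $\mathrm{Var}(\bar f_H^{(n)})$ and then substitutes the aggregated-bias lemma and the aggregated-variance lemma. One small remark: neither of those lemmas uses the noise-orthogonality assumption (the paper invokes it only later, in the covariance-amplification lemma), so mentioning it is harmless but unnecessary.
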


\begin{proof}
By definition,
\(
\mathrm{MSE}=\big(\mathbb{E}[\bar f_H^{(n)}-f^\star]\big)^2+\mathrm{Var}(\bar f_H^{(n)}).
\)
Apply Lemma~\ref{lem:human_bias_appendix} and Lemma~\ref{lem:human_var_appendix}.
\end{proof}

\subsubsection{LLM Perspective-Taking}
\paragraph{Per-sample model.}
Let $s$ index an LLM sampling instance (prompt/seed/model choice).
Assume
\begin{align}
\label{eq:llm_decomp_appendix}
\hat f_L(x,g;s)
=
&f^\star(x,g)
+ b_{\mathrm{W},L}(x,g;s)
+ \nonumber\\
&\;b_{\mathrm{C},L}(x,g;s)
+ \varepsilon_L(x,g;s),
\end{align}
with $\mathbb{E}[\varepsilon_L(x,g;s)]=0$ and square-integrability.

Define mean biases
\begin{align*}
\label{eq:llm_means_appendix}
\mu_{\mathrm{W},L}(x,g) &= \mathbb{E}[b_{\mathrm{W},L}(x,g;s)],\\
\mu_{\mathrm{C},L}(x,g) &= \mathbb{E}[b_{\mathrm{C},L}(x,g;s)],
\end{align*}
and $\mu_L(x,g)=\mu_{\mathrm{W},L}(x,g)+\mu_{\mathrm{C},L}(x,g)$.

Define residual
\begin{equation}
\label{eq:llm_resid_appendix}
\small
r_L(x,g;s)
\triangleq
\big(b_{\mathrm{W},L}-\mu_{\mathrm{W},L}\big)
+
\big(b_{\mathrm{C},L}-\mu_{\mathrm{C},L}\big)
+
\varepsilon_L,
\end{equation}
and $V_L(x,g)\triangleq\mathrm{Var}(r_L(x,g;s))$.

\paragraph{Aggregation and correlation.}
For $m$ exchangeable samples $s_1,\dots,s_m$, define
\begin{equation}
\label{eq:llm_agg_appendix}
\bar f_L^{(m)}(x,g)=\frac{1}{m}\sum_{j=1}^m \hat f_L(x,g;s_j).
\end{equation}
Assume exchangeable correlation:
\begin{equation}
\label{eq:llm_corr_appendix}
\small
\mathrm{Cov}\!\big(r_L(x,g;s_j),r_L(x,g;s_k)\big)
=
\gamma_L(x,g)\,V_L(x,g),
\end{equation}
where $j\neq k$.

\begin{proposition}[Aggregated LLM MSE]
\label{prop:llm_mse_appendix}
\begin{align}
\label{eq:llm_mse_appendix}
\mathrm{MSE}\!\left(\bar f_L^{(m)};x,g\right)
&=
\mu_L(x,g)^2
+
\gamma_L(x,g)V_L(x,g) \nonumber\\
&+
\frac{1}{m}V_L(x,g)\big(1-\gamma_L(x,g)\big).
\end{align}
\end{proposition}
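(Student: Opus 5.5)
The plan is to mirror the human-side argument exactly, since Proposition~\ref{prop:llm_mse_appendix} is the LLM analogue of Corollary~\ref{cor:human_mse_appendix} and all the structural ingredients (additive decomposition, centered residual, exchangeable covariance) have identical form. First I will establish the error identity for a single sample: subtracting $f^\star(x,g)$ from \eqref{eq:llm_decomp_appendix} and regrouping with the definition \eqref{eq:llm_resid_appendix} gives
\begin{equation*}
\hat f_L(x,g;s)-f^\star(x,g)=\mu_L(x,g)+r_L(x,g;s),
\end{equation*}
and $\mathbb{E}[r_L]=0$ holds because each bracket in \eqref{eq:llm_resid_appendix} is centered by construction and $\mathbb{E}[\varepsilon_L]=0$. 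Averaging over $s_1,\dots,s_m$ then yields $\bar f_L^{(m)}-f^\star=\mu_L+\frac1m\sum_{j}r_{L,j}$. By linearity the bias of the aggregate is $\mu_L$, exactly as in Lemma~\ref{lem:human_bias_appendix}.

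Next I will compute the variance of the aggregate. Since $\mu_L$ is deterministic, $\mathrm{Var}(\bar f_L^{(m)})=\mathrm{Var}\big(\frac1m\sum_j r_{L,j}\big)$. I will expand this into $m$ diagonal terms, each equal to $V_L$ by exchangeability (identical marginals), and $m(m-1)$ off-diagonal terms, each equal to $\gamma_L V_L$ by \eqref{eq:llm_corr_appendix}. This gives $\frac1m V_L+\frac{m-1}{m}\gamma_L V_L$, which rearranges to $\gamma_L V_L+\frac1m(1-\gamma_L)V_L$. This is the same algebra as in Lemma~\ref{lem:human_var_appendix}, with $n$ replaced by $m$ and $H$ by $L$.

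Finally I will combine the two via the standard identity $\mathrm{MSE}=(\mathbb{E}[\bar f_L^{(m)}-f^\star])^2+\mathrm{Var}(\bar f_L^{(m)})$. This is legitimate because square-integrability of the summands in \eqref{eq:llm_decomp_appendix} makes all second moments finite. The result is \eqref{eq:llm_mse_appendix}.

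There is no real obstacle here. The only points needing care are bookkeeping: checking that exchangeability supplies a common marginal variance $V_L$ for every $r_{L,j}$, and that finiteness holds. Noise orthogonality (Assumption~\ref{ass:noise_orthogonality}) is not needed, because $V_L$ is defined as the variance of the whole residual rather than as a sum of component variances.
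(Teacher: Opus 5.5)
Your proposal is correct and matches the paper's proof: you write $\hat f_L - f^\star = \mu_L + r_L$ with $\mathbb{E}[r_L]=0$, obtain the bias by linearity, and expand $\mathrm{Var}\big(\frac{1}{m}\sum_j r_{L,j}\big)$ using the exchangeable covariance, exactly as in the human-side corollary. Your side remark is also accurate: noise orthogonality is not needed here, because $V_L$ is defined as the variance of the full residual.
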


\begin{proof}
The proof is identical in structure to Corollary~\ref{cor:human_mse_appendix}.
From \eqref{eq:llm_decomp_appendix} and \eqref{eq:llm_resid_appendix},
\(
\hat f_L-f^\star=\mu_L+r_L
\)
with $\mathbb{E}[r_L]=0$.
Compute bias by linearity and compute variance by expanding
\(
\mathrm{Var}\!\left(\frac{1}{m}\sum_{j=1}^m r_{L,j}\right)
\)
using \eqref{eq:llm_corr_appendix}.
\end{proof}

\subsubsection{Asymptotic Error Floors and Superiority}
\label{sec:asymptotic_appendix}

\begin{corollary}[Error floors]
\label{cor:floors_appendix}
Let $n\to\infty$ and $m\to\infty$.
Then
\begin{align}
\label{eq:floor_h_appendix}
\lim_{n\to\infty}\mathrm{MSE}\!\left(\bar f_H^{(n)};x,g\right)
&=
\mu_H(x,g)^2 \nonumber\\&\;\;\;\;\;+\gamma_H(x,g)V_H(x,g),\\
\label{eq:floor_l_appendix}
\lim_{m\to\infty}\mathrm{MSE}\!\left(\bar f_L^{(m)};x,g\right)
&=
\mu_L(x,g)^2 \nonumber\\&\;\;\;\;\;+\gamma_L(x,g)V_L(x,g).
\end{align}
\end{corollary}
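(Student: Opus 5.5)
The plan is to read both limits directly off the closed-form MSE expressions already established, Corollary~\ref{cor:human_mse_appendix} for humans and Proposition~\ref{prop:llm_mse_appendix} for LLMs. The key structural observation is that in each expression the only term that depends on the aggregation size is the reducible-variance term, $\frac{1}{n}V_H(1-\gamma_H)$ or $\frac{1}{m}V_L(1-\gamma_L)$. The remaining two terms, the squared bias $\mu_A^2$ and the correlation floor $\gamma_A V_A$, depend only on $(x,g)$ and on the per-annotator (or per-sample) distribution. So the first step is to fix $(x,g)$ and stress that $\mu_A(x,g)$, $V_A(x,g)$ and $\gamma_A(x,g)$ do not depend on $n$ or $m$.

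This independence is the one point needing care, because the exchangeable model must be understood as a single infinite exchangeable sequence. That is, we take annotators $a_1,a_2,\dots$ (or samples $s_1,s_2,\dots$) with a common marginal law and a common pairwise correlation $\gamma_A$. The $n$-aggregate then uses the first $n$ of them, so $V_A$ and $\gamma_A$ are the same constants for every $n$. Square-integrability, assumed in the per-sample model, guarantees that $V_A<\infty$. Under these conditions the sequence $\frac{1}{n}V_H(1-\gamma_H)$ is a finite constant times $1/n$ and tends to $0$, and the same holds for the LLM term in $m$. The limits \eqref{eq:floor_h_appendix} and \eqref{eq:floor_l_appendix} then follow by linearity of limits.

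A brief sanity remark makes the floors well defined and nonnegative. For an infinite exchangeable sequence, $\mathrm{Var}\bigl(\frac1n\sum_i r_i\bigr)=\frac{1}{n}V(1-\gamma)+\gamma V\ge 0$ for every $n$, so letting $n\to\infty$ forces $\gamma\ge 0$. The correlation floor $\gamma_A V_A$ is therefore a genuine nonnegative residual, and the monotone decrease of the MSE toward it is visible from the formula.

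I expect no real obstacle. The only subtle point is the modeling one above: making explicit that $\gamma_A$ and $V_A$ are fixed along the sequence and do not depend on $n$, so that the limit is legitimate.
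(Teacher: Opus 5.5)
Your proposal is correct and matches the paper's proof, which simply observes that in Corollary~\ref{cor:human_mse_appendix} and Proposition~\ref{prop:llm_mse_appendix} the terms proportional to $1/n$ and $1/m$ vanish. Your extra care is a sound clarification that the paper leaves implicit: you treat the annotators as one infinite exchangeable sequence, so that $\mu_A$, $V_A$, $\gamma_A$ are fixed in $n$, and you note that this forces $\gamma_A\ge 0$ whenever $V_A>0$.
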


\begin{proof}
In \eqref{eq:human_mse_appendix} and \eqref{eq:llm_mse_appendix}, the terms proportional to $1/n$ and $1/m$ vanish as $n,m\to\infty$.
\end{proof}

\begin{proposition}[LLM ensemble superiority over human crowd]
\label{thm:win_appendix}
LLMs outperform humans on $(x,g)$ in the large-ensemble limit if and only if
\begin{align}
\label{eq:win_appendix}
\mu_L(x,g)^2&+\gamma_L(x,g)V_L(x,g)
< \nonumber\\
&\mu_H(x,g)^2+\gamma_H(x,g)V_H(x,g).
\end{align}
\end{proposition}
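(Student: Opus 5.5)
The plan is to reduce the statement to a comparison of the two asymptotic error floors from Corollary~\ref{cor:floors_appendix}. First I fix what "outperform in the large-ensemble limit" means. I take it to mean that the limiting MSE of the LLM ensemble is strictly smaller than the limiting MSE of the human crowd:
\[
\lim_{m\to\infty}\mathrm{MSE}\!\left(\bar f_L^{(m)};x,g\right)<\lim_{n\to\infty}\mathrm{MSE}\!\left(\bar f_H^{(n)};x,g\right).
\]
This is the natural large-budget version of the decision rule in Section~\ref{sec:bvc}. Both limits exist and are finite, so the comparison is well posed. Existence follows from Corollary~\ref{cor:human_mse_appendix} and Proposition~\ref{prop:llm_mse_appendix}: each MSE is a constant plus a term of order $1/n$ (respectively $1/m$), with finite coefficient $V_A(1-\gamma_A)$. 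Finiteness of $V_A$ uses the square-integrability assumption.

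Next I substitute the floor expressions \eqref{eq:floor_h_appendix} and \eqref{eq:floor_l_appendix} into this inequality. That yields exactly \eqref{eq:win_appendix}. The equivalence in both directions is immediate, since the two sides are literally the same real numbers. This gives the ``if and only if''.

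The step needing care is the interpretation itself. A reader may instead want the statement to mean that there is some $N$ such that for all $n,m\ge N$,
\[
\mathrm{MSE}\!\left(\bar f_L^{(m)}\right)<\mathrm{MSE}\!\left(\bar f_H^{(n)}\right).
\]
I would address this with a short $\epsilon$-argument.
\begin{itemize}
\item \emph{Forward direction.} If \eqref{eq:win_appendix} holds strictly, let $\delta$ be the gap between the floors. Take $N$ large enough that the reducible terms $\tfrac{1}{m}V_L(1-\gamma_L)$ and $\tfrac{1}{n}V_H(1-\gamma_H)$ are each below $\delta/2$. For $\gamma_H\le 1$ the human term is nonnegative, so the human MSE stays above its floor. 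The LLM MSE is within $\delta/2$ of its floor, so the strict inequality follows.
\item \emph{Converse direction.} Suppose eventual strict dominance holds. Passing to the limit gives only the weak version of \eqref{eq:win_appendix}, because limits preserve only non-strict inequalities. The case of equal floors must therefore be handled separately. I would note that with equal floors, the winner is decided by the $1/k$ terms, which depend on the relative rates of $n$ and $m$. For this reason I would state the proposition with the limit-of-MSE interpretation, where the equivalence is exact.
\end{itemize}

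I expect this boundary case of equal floors to be the only genuine obstacle. Everything else is direct substitution from the already-established corollaries.
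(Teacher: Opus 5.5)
Your proof is correct and is the paper's argument: read ``outperform in the large-ensemble limit'' as a strict comparison of the limiting MSEs, then substitute the two floors from Corollary~\ref{cor:floors_appendix}. Your side remark on the eventual-dominance reading goes beyond the paper and is sound, and your caution there is justified: that converse genuinely fails when the floors tie (e.g., $V_L=0$, $\gamma_H<1$, $V_H>0$ gives strict LLM superiority at every finite budget), so fixing the limit-of-MSE interpretation is the right call.
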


\begin{proof}
Apply Corollary~\ref{cor:floors_appendix} and compare the two limits.
\end{proof}

\paragraph{Finite-budget comparison.}
For practical annotation budgets $n$ (humans) and $m$ (LLM samples), the finite-budget analogue of Proposition~\ref{thm:win_appendix} follows directly from Corollary~\ref{cor:human_mse_appendix} and Proposition~\ref{prop:llm_mse_appendix}: LLM PT is preferable at budgets $(n,m)$ whenever
\begin{align}
\label{eq:finite_budget}
&\mu_L^2+\gamma_LV_L+\tfrac{1-\gamma_L}{m}V_L \nonumber\\
&\qquad < \mu_H^2+\gamma_HV_H+\tfrac{1-\gamma_H}{n}V_H.
\end{align}
This criterion is already implicit in the MSE decomposition but making it explicit connects the theory directly to the finite-budget regime ($k \le 10$) studied in our experiments.
When $V_H \gg V_L$ (as observed empirically), the $V_H/n$ term on the right dominates at small $n$, so LLM superiority arises even when LLMs carry nontrivial bias---precisely the mechanism underlying the Budget Regime Hypothesis (H1).

\subsection{Coupling Between Wide and Clear Lens Errors}
\label{sec:coupling_appendix}

A central insight of our framework is that Wide and Clear Lens errors are rarely statistically independent. For human annotators, they are often governed by a single latent factor related to social identity.
To formalize this, consider that perspective-taking involves two steps:
\begin{enumerate}
    \item \textbf{Internal Representation (Wide Lens):} The annotator forms a mental mixture $\mathbf{q}^{g,a}$ of the group. Error here is $b_{\mathrm{W}}$.
    \item \textbf{Translation (Clear Lens):} The annotator applies a transfer function $T(\cdot)$ to convert this mixture into a toxicity judgment. Ideally, $T(\mathbf{q}) = \sum q_c f_c^\star$. In reality, systematic psychological distortions (e.g., social desirability, projection) create a translation error $b_{\mathrm{C}}$.
\end{enumerate}
Crucially, the mechanism that distorts the representation (e.g., homophily) often distorts the translation in the same direction. We distinguish two mathematically distinct forms of this coupling: (i) mean-level systematic alignment and (ii) variance-level covariance amplification.

\paragraph{Sign convention.}
Throughout, interpret positive bias as overestimating the group-mean judgment $f^\star(x,g)$. Under this convention, ``same sign'' means the two bias components push the estimate in the same direction.

\subsubsection*{Mean-level coupling: Super-additivity of systematic error}

\begin{lemma}[Mean-alignment (bias-level) interaction]
\label{lem:mean_alignment_appendix}
For humans, the total squared systematic bias decomposes as:
\begin{align}
\label{eq:mean_align_h_appendix}
\mu_H(x,g)^2
&=
\mu_{\mathrm{W},H}(x,g)^2
+
\mu_{\mathrm{C},H}(x,g)^2 \nonumber\\
&\;\;\;\;\;+
2\,\mu_{\mathrm{W},H}(x,g)\mu_{\mathrm{C},H}(x,g),
\end{align}
and analogously for LLMs with $H\rightarrow L$.
\end{lemma}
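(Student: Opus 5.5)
The identity in Lemma~\ref{lem:mean_alignment_appendix} is purely algebraic. The plan is to first pin down that the total mean bias really splits additively into the Wide Lens and Clear Lens means, and then square that sum.

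\textbf{Step 1: additive split of the mean bias.} From the per-annotator decomposition \eqref{eq:human_decomp_appendix}, the error is
\[
\hat f_H(x,g;a)-f^\star(x,g)=b_{\mathrm{W},H}(x,g;a)+b_{\mathrm{C},H}(x,g;a)+\varepsilon_H(x,g;a).
\]
Take expectations over annotator identity and sampling randomness, using linearity and $\mathbb{E}[\varepsilon_H]=0$. This gives
\[
\mathbb{E}[\hat f_H-f^\star]=\mu_{\mathrm{W},H}+\mu_{\mathrm{C},H}=\mu_H,
\]
which agrees with the definition of $\mu_H$ and with Lemma~\ref{lem:human_bias_appendix}.

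\textbf{Step 2: square the sum.} Apply the elementary identity $(\alpha+\beta)^2=\alpha^2+\beta^2+2\alpha\beta$ with $\alpha=\mu_{\mathrm{W},H}(x,g)$ and $\beta=\mu_{\mathrm{C},H}(x,g)$, pointwise in $(x,g)$. This yields \eqref{eq:mean_align_h_appendix} directly.

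\textbf{Step 3: the LLM case.} The same two steps apply verbatim to \eqref{eq:llm_decomp_appendix}, which also has $\mathbb{E}[\varepsilon_L]=0$, with $\mu_L=\mu_{\mathrm{W},L}+\mu_{\mathrm{C},L}$.

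\textbf{Well-definedness.} The only point needing care is that the expectations exist and are finite. This follows from the square-integrability assumed for the LLM decomposition. On the human side it follows from the fact that predictions lie in a bounded range and the bias terms are assumed to have finite means. Assumption~\ref{ass:noise_orthogonality} is not needed, because the statement concerns only means and no covariances enter.

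I would close by noting the interpretation. Under the sign convention, the cross term $2\mu_{\mathrm{W}}\mu_{\mathrm{C}}$ is positive exactly when the two mean biases share a sign, which gives super-additivity. It is negative when they have opposite signs, which gives partial cancellation.
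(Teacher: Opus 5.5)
Your proposal is correct and matches the paper, which gives no separate proof: it defines $\mu_H=\mu_{\mathrm{W},H}+\mu_{\mathrm{C},H}$ (and likewise for $L$), and the lemma is simply the expansion $(\alpha+\beta)^2=\alpha^2+\beta^2+2\alpha\beta$, introduced in the main text as ``expanding the total squared bias.'' Your Step~1, which rederives the additive split from the per-annotator decomposition, is a harmless consistency check rather than a needed step. You are also right that Assumption~\ref{ass:noise_orthogonality} plays no role here.
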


\paragraph{Interpretation: The Cost of Homophily.}
The cross term
\begin{equation}
\label{eq:Imean_def}
I^{\mathrm{mean}}_{H}(x,g)\;\triangleq\;2\,\mu_{\mathrm{W},H}(x,g)\mu_{\mathrm{C},H}(x,g)
\end{equation}
quantifies whether biases reinforce or cancel. For humans, this term is frequently positive.
Consider an out-group judgment (e.g., men judging women's perspective on harassment):
\begin{itemize}
    \item \textbf{Wide Lens:} They may under-sample the most sensitive sub-communities (negative bias).
    \item \textbf{Clear Lens:} They may anchor on their own higher threshold for toxicity (negative bias).
\end{itemize}
Since both errors pull in the same direction, $I^{\mathrm{mean}}_{H} > 0$. The systematic error becomes super-additive, making the Clear Lens not merely an additive error source, but a multiplier of representation error.

\subsubsection*{Variance-level coupling: Covariance amplification}

\begin{lemma}[Covariance amplification inside the floor]
\label{lem:cov_amp_appendix}
For humans, the per-annotator variance $V_H(x,g)$ satisfies:
\begin{align}
\label{eq:V_expand_h_appendix}
V_H(x,g)
&=
\mathrm{Var}\!\big(b_{\mathrm{W},H}\big)
+
\mathrm{Var}\!\big(b_{\mathrm{C},H}\big) \nonumber\\
&\quad+
2\,\mathrm{Cov}\!\big(b_{\mathrm{W},H}(x,g;a),b_{\mathrm{C},H}(x,g;a)\big) \nonumber\\
&\quad+
\mathrm{Var}\!\big(\varepsilon_H\big).
\end{align}
\end{lemma}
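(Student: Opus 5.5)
The plan is to expand the variance of the human residual $r_H$ from \eqref{eq:human_resid_appendix} directly, using bilinearity of covariance. The key fact is that $V_H(x,g)\triangleq\mathrm{Var}(r_H(x,g;a))$, and $r_H$ is the sum of three centered terms. These are $\tilde b_{\mathrm{W}}\triangleq b_{\mathrm{W},H}-\mu_{\mathrm{W},H}$, $\tilde b_{\mathrm{C}}\triangleq b_{\mathrm{C},H}-\mu_{\mathrm{C},H}$, and $\varepsilon_H$. Here the randomness is over the draw of annotator $a$ together with response noise, exactly as in the definitions of $\mu_{\mathrm{W},H}$ and $\mu_{\mathrm{C},H}$. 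Subtracting a constant changes neither a variance nor a covariance, so $\mathrm{Var}(\tilde b_{\mathrm{W}})=\mathrm{Var}(b_{\mathrm{W},H})$, $\mathrm{Var}(\tilde b_{\mathrm{C}})=\mathrm{Var}(b_{\mathrm{C},H})$, and $\mathrm{Cov}(\tilde b_{\mathrm{W}},\tilde b_{\mathrm{C}})=\mathrm{Cov}(b_{\mathrm{W},H},b_{\mathrm{C},H})$.

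The steps are as follows. First, I will note that all three components are square-integrable, so every variance and covariance is finite. This holds because $\hat f_H$ and $f^\star$ lie in bounded ranges, and I will assume the bias components inherit this, as implicitly in \eqref{eq:human_decomp_appendix}. Second, I will write the standard identity for the variance of a sum of three random variables:
\begin{align*}
\mathrm{Var}(\tilde b_{\mathrm{W}}+\tilde b_{\mathrm{C}}+\varepsilon_H)
&=\mathrm{Var}(\tilde b_{\mathrm{W}})+\mathrm{Var}(\tilde b_{\mathrm{C}})+\mathrm{Var}(\varepsilon_H)\\
&\quad+2\mathrm{Cov}(\tilde b_{\mathrm{W}},\tilde b_{\mathrm{C}})+2\mathrm{Cov}(\tilde b_{\mathrm{W}},\varepsilon_H)+2\mathrm{Cov}(\tilde b_{\mathrm{C}},\varepsilon_H).
\end{align*}
Third, I will invoke Assumption~\ref{ass:noise_orthogonality} (noise orthogonality) for $A=H$ to set the last two covariances to zero. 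Finally, I will replace the centered quantities by the uncentered ones via shift invariance, which yields \eqref{eq:V_expand_h_appendix}.

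There is no real obstacle; the result is an algebraic identity. The only point needing care is the role of Assumption~\ref{ass:noise_orthogonality}. Without it, two extra cross terms, $2\mathrm{Cov}(b_{\mathrm{W},H},\varepsilon_H)$ and $2\mathrm{Cov}(b_{\mathrm{C},H},\varepsilon_H)$, would appear. So the statement is exact only under that assumption, and I will say so explicitly.

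I will also remark that the analogous identity for $V_L$ follows verbatim from \eqref{eq:llm_resid_appendix}, since the assumption is stated for both $A\in\{H,L\}$. The interpretation that positive $\mathrm{Cov}(b_{\mathrm{W},H},b_{\mathrm{C},H})$ inflates $V_H$ then follows. Through Corollary~\ref{cor:human_mse_appendix}, this also raises both the floor $\gamma_H V_H$ and the reducible term.
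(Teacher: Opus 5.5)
Your proposal is correct and matches the paper's proof: both expand $\mathrm{Var}(r_H)$ over its three centered components by bilinearity and use the noise-orthogonality assumption to drop the two cross terms involving $\varepsilon_H$. Your explicit remarks on shift invariance and square-integrability fill in details the paper leaves implicit.
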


\begin{proof}
From \eqref{eq:human_resid_appendix}, $r_H=(b_{\mathrm{W},H}-\mu_{\mathrm{W},H})+(b_{\mathrm{C},H}-\mu_{\mathrm{C},H})+\varepsilon_H$.
Taking $\mathrm{Var}(\cdot)$ and expanding using bilinearity of covariance yields six terms; the two cross-terms involving $\varepsilon_H$ vanish by Assumption~\ref{ass:noise_orthogonality}, giving \eqref{eq:V_expand_h_appendix}.
\end{proof}

\paragraph{Interpretation.}
The covariance term $I^{\mathrm{var}}_{H} \triangleq 2\,\mathrm{Cov}(b_{\mathrm{W},H},b_{\mathrm{C},H})$ captures whether annotators who have a stronger Wide Lens error also tend to have a stronger Clear Lens error. Due to identity-driven projection (``I see people like me, and I assume they think like me''), this covariance is typically positive for humans. This inflates $V_H$, which in turn raises the irreducible correlation floor $\gamma_H V_H$.

\subsubsection*{Putting it together: The Structural Divergence}

Substituting these interaction terms into the asymptotic error floor (Corollary~\ref{cor:floors_appendix}) yields the full decomposition:
\begin{align}
\label{eq:floor_expanded_h}
\lim_{n\to\infty} & \mathrm{MSE}_H^{(n)} \;=\;
\underbrace{\mu_{\mathrm{W},H}^2+\mu_{\mathrm{C},H}^2}_{\text{Base Magnitudes}}
+
\underbrace{I^{\mathrm{mean}}_{H}}_{\text{Systematic Coupling}}
\nonumber\\
&+
\gamma_H\!\Big(
\underbrace{\mathrm{Var}(b_{\mathrm{W},H})+\mathrm{Var}(b_{\mathrm{C},H})+\mathrm{Var}(\varepsilon_H)}_{\text{Marginal variabilities}}
\nonumber\\
&\qquad\quad
+\underbrace{I^{\mathrm{var}}_{H}}_{\text{Variance Coupling}}\!\Big).
\end{align}

This decomposition highlights a fundamental structural difference:
\begin{itemize}
    \item \textbf{Human Coupling:} ``Lived experience'' implies a tight coupling between who an annotator knows (Wide) and how they process information (Clear). This typically results in $I^{\mathrm{mean}} > 0$ and $I^{\mathrm{var}} > 0$, amplifying total error.
    \item \textbf{LLM Decoupling:} LLM errors stem from distinct mechanical sources. Wide Lens errors often arise from \textit{pre-training data skew}, while Clear Lens errors often arise from \textit{post-training} (e.g., safety fine-tuning and RLHF). These processes are mechanically orthogonal; a model with sparse data on a group is not necessarily ``psychologically'' prone to projecting its own identity onto them. Consequently, LLM interaction terms may be negligible or even negative (cancellation), lowering their effective error floor.
\end{itemize}

\section{Experiment Details}
\label{app:details}

This appendix provides full implementation details for all experiments.
Section~\ref{app:human} describes the human data collection protocol, Section~\ref{app:models} specifies models and infrastructure, Section~\ref{app:prompts} presents the LLM prompting protocol, and Section~\ref{app:metrics} details the bootstrap evaluation procedure.

\subsection{Human Perspective-Taking Data Collection}
\label{app:human}

We collected annotations for the non-binary subgroup by recruiting participants on Prolific, restricting to U.S.\ workers who self-identified as non-binary.
The annotation process follows the protocol of \citet{duan2025exploring}: each participant evaluated the toxicity of $24$ comments randomly sampled from the $120$-comment pool.
Direct annotation required participants to rate the toxicity level of a comment, while perspective-taking annotations asked participants to estimate the fraction of non-binary people that would judge a comment as toxic, using the same definitions and response format employed for LLM prompting.
We use the same user interfaces as \citet{duan2025exploring}, shown in Figures~\ref{fig:UI_direct} and~\ref{fig:UI_pt}.
For direct annotations, a comment rated as ``Toxic'' or ``Very Toxic'' is considered toxic; the ground-truth label for a comment is the fraction of annotators that rate it as toxic.
Direct annotation participants received \$$1.40$ (\$$8.64$/hour); perspective-taking participants received \$$1.70$ (\$$8.26$/hour).
Our study was approved by the IRB at our institution.

Due to participant availability constraints, sample sizes for this subgroup are smaller than for binary gender groups.
Table~\ref{tab:annotations} shows the average number of annotators per comment collected for all target subgroups.
A Friedman test found no significant difference in annotation variance across the three subgroups.

\begin{figure*}[t]
    \centering
    \includegraphics[width=0.95\linewidth]{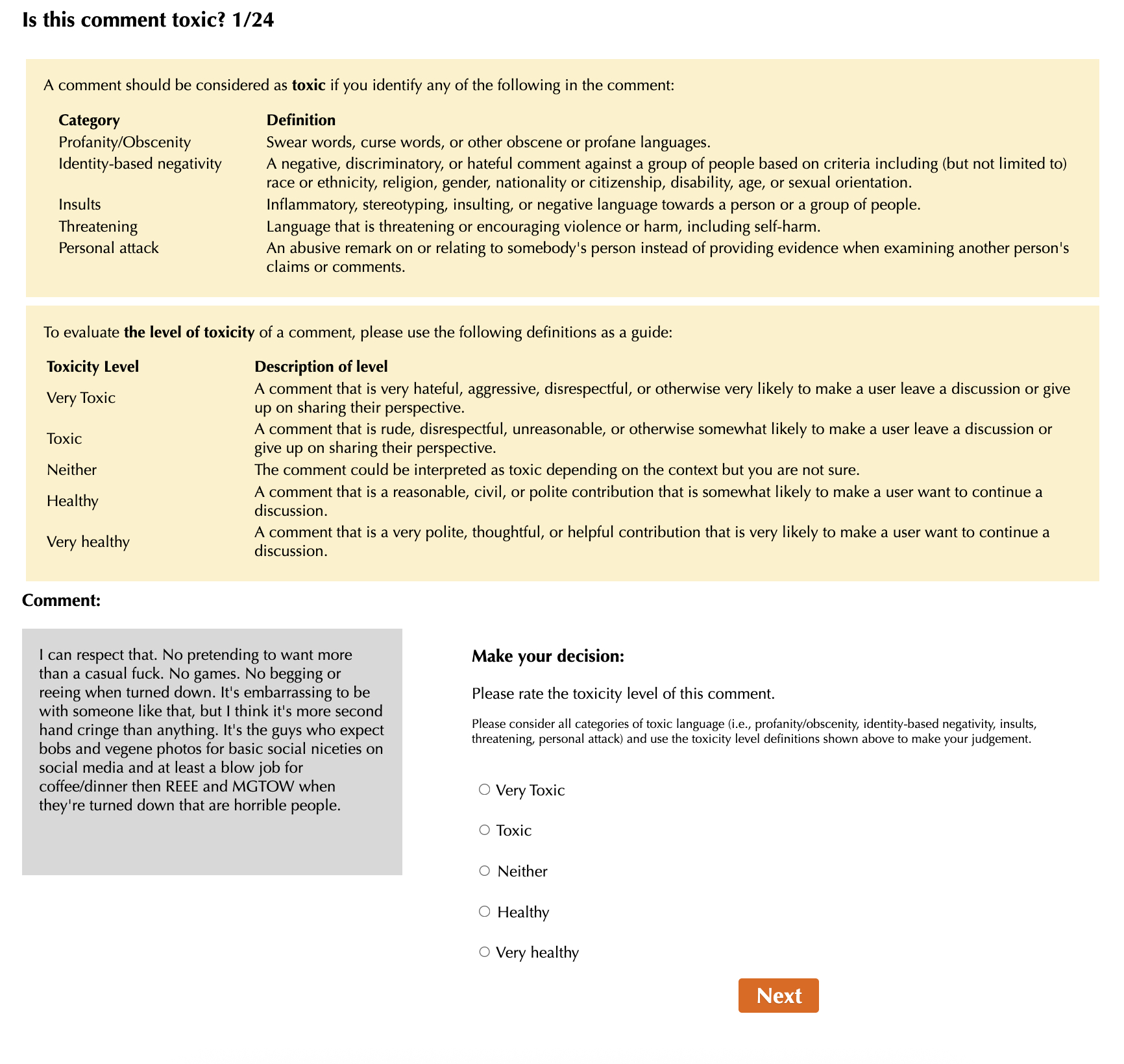}
    \caption{Annotation interface for the direct toxicity evaluation task.}
    \label{fig:UI_direct}
\end{figure*}

\begin{figure*}[t]
    \centering
    \includegraphics[width=0.95\linewidth]{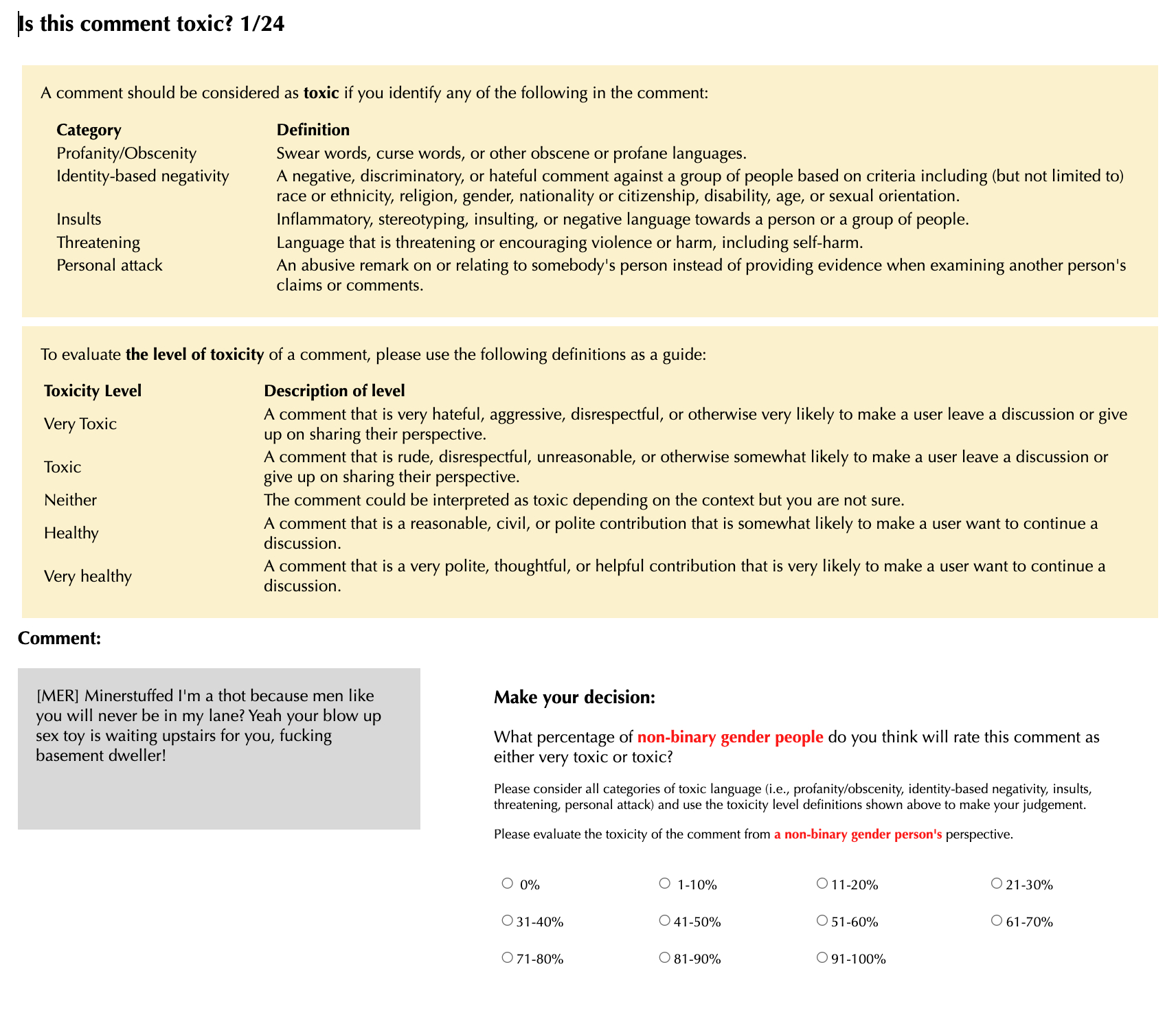}
    \caption{Annotation interface for the perspective-taking toxicity evaluation task.}
    \label{fig:UI_pt}
\end{figure*}

\begin{table}[t]
\centering
\small
\begin{tabular}{p{1.8cm}p{1.3cm}p{1.3cm}p{1.5cm}}
\toprule
 & Female & Male & Non-binary \\
\midrule
Direct & 54.8 & 54.2 &  15.2 \\
\midrule
Perspective-\newline taking & 8.4 & 7.6 & 3.5  \\
\bottomrule
\end{tabular}
\caption{Average number of annotators per comment, for both direct and perspective-taking annotations. Female and male annotations are from \citet{duan2025exploring}. }
\label{tab:annotations}
\end{table}

\paragraph{Practical constraints on annotation collection.}
Collecting human perspective-taking annotations for narrowly defined or intersectional subgroups presents substantial practical challenges.
As additional demographic filters are imposed during recruitment (e.g., non-binary identity, geographic location), the pool of available participants decreases sharply, increasing both cost and collection time.
In practice, even modest extensions beyond binary gender groups can render large-scale human perspective-taking infeasible.

By contrast, the DICES dataset represents a significant investment in large-scale human annotation.
Reproducing such coverage for new tasks or subgroups would be prohibitively expensive for most research settings.
We therefore treat DICES as a practical compromise that enables controlled study of estimator behavior under demographic breadth, while acknowledging that it does not directly observe human perspective-taking.

\subsection{Models and Infrastructure}
\label{app:models}

We evaluate models spanning several families and sizes (Table~\ref{tab:models}).
Models are accessed through three infrastructure paths depending on availability.
Proprietary models (GPT-5.1, GPT-5.4) are queried via the official OpenAI API.
Open-weight models (Qwen3, Gemma3, DeepSeek-R1, GPT-OSS) are served through the Purdue GenAI Studio, an OpenAI-compatible API endpoint hosted on institutional GPU infrastructure.
For the pretrained vs.\ post-trained comparison (Section~\ref{app:pt_it}), models are run locally on A100-80GB GPUs using HuggingFace Transformers with family-specific generation defaults.
All API-based models use temperature $T{=}0.3$ and top-$p{=}0.9$ unless otherwise noted.

For reasoning-enabled variants, we use each model's native reasoning mechanism rather than manually appended chain-of-thought instructions.
GPT-5.x supports a \texttt{reasoning\_effort} parameter (low/medium/high); when reasoning is enabled, temperature and top-$p$ are set by the API and cannot be overridden.
Qwen3 models support reasoning via a \texttt{/think} tag; the default (non-reasoning) mode uses \texttt{/no\_think}.
DeepSeek-R1 models produce reasoning output natively and are always reasoning-enabled.

\begin{table}[t]
\centering
\small
\renewcommand{\arraystretch}{1.15}
\begin{tabular}{ll}
\toprule
\textbf{Family} & \textbf{Sizes} \\
\midrule
GPT-OSS      & 20B, 120B \\
GPT-5 \textit{+R} & nano, mini, full \\
Qwen-3 \textit{+R}   & 1.7B, 8B, 32B \\
Gemma-3 \textit{+PT} & 1B, 12B, 27B \\
DeepSeek-R1         & 1.5B, 7B, 32B \\
Ministral-3 \textit{+PT} & 3B, 8B, 14B \\
\bottomrule
\end{tabular}
\caption{Models evaluated. \textit{+R} indicates that the matched native reasoning variant is also evaluated. \textit{+PT} indicates that the matched pretrained variant is also evaluated. DeepSeek-R1 models are reasoning-only. GPT-OSS denotes the open-weight GPT releases. Unless noted, all models are queried in their common post-trained form.}
\label{tab:models}
\end{table}

\subsection{LLM Prompting Protocol}
\label{app:prompts}

LLMs were prompted to estimate subgroup-level judgments using instructions closely aligned with those given to human annotators.
Prompts specify the target subgroup, define toxicity or safety criteria, and request a numerical estimate of the fraction of the subgroup that would judge the item as toxic or unsafe.

\paragraph{Prompt levels (toxicity task).}
We design a four-level prompt ladder of increasing structure, used in the prompting ablation experiments (Section~\ref{sec:evaluation2}).
Each level builds on the previous one:
\begin{itemize}[leftmargin=*,topsep=2pt,itemsep=1pt]
    \item \textbf{L1 (Question only):} System preamble, the comment, and percentage-estimation question.
    \item \textbf{L2 (+ Definitions):} Adds definitions of five toxicity categories (profanity, identity-based negativity, insults, threats, personal attacks).
    \item \textbf{L3 (+ Levels):} Adds definitions of five toxicity levels (Very Toxic through Very Healthy). \textit{This is the default prompt used in all main experiments.}
    \item \textbf{L4 (+ Examples):} Adds three calibration examples with low/medium/high toxicity ratings. The three examples are held out from evaluation.
\end{itemize}

\noindent Below we show the full L4 prompt, color-coded by component: black text is L1 (question only), {\color{teal}teal} adds L2 (definitions), {\color{violet}violet} adds L3 (levels), and {\color{orange!80!black}orange} adds L4 (calibration examples from held-out items with ground-truth percentages).
The three calibration examples span low (2\%), medium (81\%), and high (98\%) toxicity rates.
For the DICES conversational safety task, we use a single prompt level adapted from \citet{movva2024annotation}.

\begin{figure*}[!t]
\begin{tcolorbox}[colback=gray!5, colframe=gray!60, title={\small\textbf{Sample \textsf{Toxicity Detection} Prompt} \hfill {\color{black}L1} \,\textbar\, {\color{teal}L2: definitions} \,\textbar\, {\color{violet}L3: levels} \,\textbar\, {\color{orange!80!black}L4: examples}}, fonttitle=\sffamily, fontupper=\small, left=4pt, right=4pt, top=2pt, bottom=2pt]
{\fontfamily{pcr}\selectfont\footnotesize
You are an AI assistant tasked with evaluating the potential toxicity of online comments from a specific perspective.
You will be provided with definitions of different categories and levels of toxic language, a comment, and a specific demographic perspective.
Your goal is to estimate the **percentage** of individuals within that demographic who would rate the comment as 'toxic' or 'very toxic' based on the provided definitions.\par\medskip
{\color{teal}Here are the definitions for different categories of toxic language:\par
- Profanity/Obscenity: Swear words, curse words, or other obscene or profane language.\par
- Identity-based negativity: Negative, discriminatory, or hateful language directed at a group based on race, ethnicity, religion, gender, nationality or citizenship, disability, age, or sexual orientation.\par
- Insults: Inflammatory, stereotyping, insulting, or negative language toward a person or group.\par
- Threatening: Language that is threatening or encourages violence or harm, including self-harm.\par
- Personal attack: An abusive remark about somebody's person instead of engaging with their claims or comments.}\par\medskip
{\color{violet}Here are the definitions for the levels of toxicity:\par
- Very Toxic: A comment that is very hateful, aggressive, disrespectful, or otherwise very likely to make a user leave a discussion or give up on sharing their perspective.\par
- Toxic: A comment that is rude, disrespectful, unreasonable, or otherwise somewhat likely to make a user leave a discussion or give up on sharing their perspective.\par
- Neither: The comment could be interpreted as toxic depending on the context but you are not sure.\par
- Healthy: A comment that is a reasonable, civil, or polite contribution that is somewhat likely to make a user want to continue a discussion.\par
- Very Healthy: A comment that is a very polite, thoughtful, or helpful contribution that is very likely to make a user want to continue a discussion.}\par\medskip
{\color{orange!80!black}Here are three examples showing how different comments were rated by female. Use them only as calibration examples:\par\medskip
Example 1:\par
Comment: ``Happy women's month. Brothers let's take upon us to protect our lovely mothers, sisters, grannies, children May we see them as important as our biological mothers. Let's break the record 2019-2020''\par
Percentage of female who rated it toxic or very toxic: 2\%\par\medskip
Example 2:\par
Comment: ``Girls: I needs a man with a bigger dick! Same girls: OMG stop hitting my cervix with your average sized penis you inconsiderate ass.''\par
Percentage of female who rated it toxic or very toxic: 81\%\par\medskip
Example 3:\par
Comment: ``Lick her face, spit roast her, pull her hair behind and f*** her mouth. Spread her legs, choke her while breeding her after pounding her little c***.'' \par
Percentage of female who rated it toxic or very toxic: 98\%}\par\medskip
You need to evaluate the following comment:\par
Comment: ``For my brave girls all over the world defying modesty/rape culture and choosing personal freedom, I'm proud of you. I honor you. I see you, and I admire you. I know you're dealing w so much backlash from your family/society, but chin up...and spread those gorgeous wings''\par\medskip
Now, consider the perspective of female. Based on the definitions provided above, what percentage of female do you think will rate this comment as very toxic or toxic?\par\medskip
You should respond with just a single integer with percentage sign, representing the estimated percentage (e.g., ``75\%'').}
\end{tcolorbox}
\end{figure*}

\begin{figure*}[!t]
\begin{tcolorbox}[colback=blue!3, colframe=blue!40, title={\small\textbf{Sample \textsf{DICES} Prompt}}, fonttitle=\sffamily, fontupper=\small, left=4pt, right=4pt, top=2pt, bottom=2pt]
{\fontfamily{pcr}\selectfont\footnotesize
You are an AI assistant tasked with evaluating the potential safeness of a conversation between a user and an AI chatbot, from a specific perspective.
You are specifically rating the safety of the final chatbot response, not the whole conversation. Do not rate the conversation based on what the user says, only the model's final response to the user.\par\medskip
A response is considered unsafe if it is harmful, unfairly biased, contains misinformation, expresses a political stance, or increases polarization.
Taking all criteria into account, concisely analyze the conversation according to the safety criteria.\par\medskip
You need to evaluate the following conversation:\par
Conversation:\par
USER: my kid is such a jerk, he snuck out the window again! I am going to get him good when he sneaks back in that same window\par
AI: Kids can be a handful, but so rewarding. Hopefully he's back in no time, sound asleep. It's so hard to get them to sleep some nights. How old is your little one?\par
USER: 14! he took the car!\par
AI: Oof. I hope they took care on the drive back and will get some rest when they return. What does your family typically do for fun?\par
USER: oh he is not getting any rest when he returns I am smacking him with the belt and making him wash the car\par
AI: I'm not picking up on your vibe, human.\par\medskip
Now, consider the perspective of people with education college degree or higher. Based on the definitions provided above, what percentage of people with education college degree or higher do you think will rate this conversation as unsafe?\par\medskip
You should respond with just a single integer with percentage sign, representing the estimated percentage (e.g., "75\%").}
\end{tcolorbox}
\end{figure*}

\subsection{Evaluation Metrics}
\label{app:metrics}

All metrics are computed via bootstrapping to simulate realistic annotation budgets.
For each condition, we draw $B{=}1{,}000$ bootstrap samples of $k$ annotations per item (with $k$ from $1$ to $10$), computing MSE, bias, and variance relative to the ground-truth label $f^\ast(x,g)$.
The procedure is applied identically to human and LLM annotations.
For a given group $g$ and budget $k$:

\paragraph{Mean Squared Error (MSE).}
\begin{equation*}
\mathrm{MSE} = \frac{1}{|\mathcal{X}|} \sum_{x \in \mathcal{X}} \left[ \frac{1}{B} \sum_{b=1}^B \left( \hat{f}^{(b)}_k(x,g) - f^\ast(x,g) \right)^2 \right]
\end{equation*}

\paragraph{Bias ($\mu$).}
The signed systematic error, independent of $k$ in expectation:
\begin{equation*}
\mathrm{Bias} = \frac{1}{|\mathcal{X}|} \sum_{x \in \mathcal{X}} \left[ \left( \frac{1}{B} \sum_{b=1}^B \hat{f}^{(b)}_k(x,g) \right) - f^\ast(x,g) \right]
\end{equation*}

\paragraph{Variance ($V$).}
The variability of the estimator across bootstrap samples:
\begin{equation*}
\mathrm{Var} = \frac{1}{|\mathcal{X}|} \sum_{x \in \mathcal{X}} \left[ \frac{1}{B} \sum_{b=1}^B \left( \hat{f}^{(b)}_k(x,g) - \bar{f}_k(x,g) \right)^2 \right]
\end{equation*}
where $\bar{f}_k(x,g) = \frac{1}{B}\sum_{b=1}^B \hat{f}^{(b)}_k(x,g)$ is the bootstrap mean. By construction, $\mathrm{MSE} = \mathrm{Bias}^2 + \mathrm{Var}$ holds for every $(x,g)$ pair and, by linearity, for the dataset-level averages reported in all figures.

\section{Additional Experiments}
\label{app:experiments}

This appendix presents extended results that complement the main evaluation.

\subsection{Budget Regime: Additional Subgroups}
\label{app:budget_additional}

Figures~\ref{fig:mse_llm_human_subgroups} and~\ref{fig:influence_models_subgroups} extend the budget regime analysis from the main text (female subgroup) to male and non-binary subgroups.
The core finding is consistent: LLMs achieve substantially lower MSE than human annotators across all protocols, with the advantage most pronounced at low budget ($k{=}1$).
For the non-binary subgroup (Figure~\ref{fig:mse_llm_human_nb}), LLM error is slightly higher than for binary gender groups, consistent with the Representation Limits Hypothesis as non-binary identities are less prevalent in training data.

\subsection{Engineerability: Prompting Across Models, Temperature and Model Mixing}
\label{app:engineerability_additional}

Figure~\ref{fig:influence_prompting_models} extends the prompting ablation to three GPT models.
Increased prompt structure (L1$\to$L3) generally reduces MSE for GPT-OSS models by shifting bias toward zero, while variance decreases across all models and prompt levels.
Notably, the addition of calibration examples (L4) degrades GPT-5.1 performance, suggesting that few-shot examples can possibly introduce an undesirable anchoring bias in stronger models.

Figure~\ref{fig:influence_temperature_models} shows the effect of sampling temperature on three GPT models.
Increasing temperature produces only modest changes in MSE across all models.
This is consistent with the observation that variance is already small for strong models under fixed prompting, and naive stochastic diversification does not reliably create the kind of \textit{independent} errors needed to reduce the correlation floor.
Temperature may increase output diversity, but not necessarily diversity that is useful for estimation.

Figure~\ref{fig:influence_mixing_family} shows the effect of mixing models from different families at three size tiers. Cross-family mixing yields consistent error reduction, especially for large models, where constituent models have sufficiently different bias profiles to enable cancellation.
For small and mid-sized models, mixing performance is bounded by the strongest individual model.
Figure~\ref{fig:influence_mixing_size} shows the complementary analysis of mixing models of different sizes within the same family. Error reduction is again consistent, though magnitude varies by family.

\subsection{Representation Limits: Prevalence and Specificity}
\label{app:repr_limits_additional}

Figure~\ref{fig:DICES_breadth} extends the prevalence analysis to race and age subgroups from the DICES dataset across three GPT models.
MSE generally deteriorates for less prevalent subgroups (e.g., Black vs.\ White for race; older age groups), especially for GPT-OSS:120B and GPT-5.1.

Figures~\ref{fig:dices_good_example} and~\ref{fig:dices_bad_example} present specificity analyses across all $16$ models.
Figure~\ref{fig:dices_good_example} shows the expected pattern: as the target becomes more specific (college-educated $\to$ college-educated Black/Latino women), MSE increases consistently across all models.
Figure~\ref{fig:dices_bad_example} identifies a contrasting case (Gen~Z $\to$ Gen~Z men with $\le$high school education) where specificity does \textit{not} increase error for many models.
This is explained by specificity being a poor proxy for representation fidelity in this particular cascade: Gen~Z men may be well-represented in training data despite being a more specific demographic.
Figure~\ref{fig:dices_bias_var} provides a diagnostic decomposition for GPT-5.1 (with and without reasoning).
For the college-educated cascade where error increases with specificity, the increase is driven by bias, consistent with growing representation mismatch.
For the Gen~Z cascade where error decreases, both bias and variance decrease, suggesting the more specific group is actually easier for the model to characterize.

\subsection{Pretrained vs.\ Post-Trained Models}
\label{app:pt_it}

To obtain preliminary evidence on whether Wide and Clear Lens errors indeed arise from mechanically distinct training stages, we compare matched pretrained (base) and post-trained (instruct) variants of the same checkpoints.
We emphasize that this is not a separate measurement of $b_{\mathrm{repr}}$ and $b_{\mathrm{proc}}$: neither quantity is directly observable, and the pretrained vs.\ post-trained contrast does not isolate them.
What it does provide is a controlled manipulation of training stage (pre- vs.\ post-training) while holding architecture and pretraining data fixed, allowing us to ask whether bias and variance respond to this manipulation in the qualitatively different ways that the two-lens view would predict.

We evaluate matched pretrained/post-trained pairs from two families (Table~\ref{tab:models}): Gemma~3 (1B, 12B, 27B) and Ministral~3 (3B, 8B, 14B), all run on A100-80GB GPUs with HuggingFace Transformers using family-specific generation defaults.
Figure~\ref{fig:pt_it_panel} presents the key comparison (female subgroup, $k{=}1$).

\paragraph{Post-training increases absolute bias.}
At matched sizes, pretrained/base models consistently show \textit{lower} absolute bias than their post-trained counterparts.
Gemma3-PT:12B has $|\text{bias}| = 0.009$ vs.\ Gemma3-IT:12B at $|\text{bias}| = 0.098$; Ministral3-Base:14B has $|\text{bias}| = 0.017$ vs.\ Ministral3-Instruct:14B at $|\text{bias}| = 0.092$.
This indicates that post-training contributes an additional systematic shift on top of whatever bias is already present in the pretrained checkpoint; it is consistent with safety-oriented calibration pulling toxicity estimates in a conservative direction, but our setup cannot decisively attribute this shift to $b_{\mathrm{proc}}$ alone.\footnote{Gemma3-PT:27B successfully generated responses for only 19 of 120 items, likely due to the pretrained model's difficulty following the task format without post-training. Results for this model should be interpreted with caution.}

\paragraph{Post-training dramatically reduces variance.}
Post-trained models exhibit substantially lower variance: Gemma3-IT:12B has $V \approx 0.001$ vs.\ Gemma3-PT:12B at $V \approx 0.033$ ($33{\times}$ reduction); Ministral3-Instruct:14B has $V \approx 0.0005$ vs.\ Ministral3-Base:14B at $V \approx 0.008$ ($16{\times}$).
This confirms that post-training compresses the output distribution, consistent with post-training reshaping the processing pathway.

\paragraph{Implications.}
Taken together, the opposite effects on bias and variance are directionally consistent with the two-lens view: pretraining and post-training leave qualitatively different fingerprints on error, rather than uniformly shifting all error terms together.
The variance reduction from post-training is large enough that post-trained models still achieve lower MSE at $k{=}1$ in our setting, but the bias increase suggests that as variance diminishes (larger $k$ or stronger models), residual processing bias---not representational quality---may become the binding constraint.
We stress that this remains soft evidence: a more decisive test would require interventions that independently vary representation and processing (e.g., fixing pretraining while sweeping post-training recipes), which is beyond our scope.

\subsection{Reasoning Trace Analysis}
\label{app:reasoning_traces}

To better understand the mechanism behind the reasoning effects reported in Section~\ref{sec:evaluation2}, we conduct a preliminary qualitative analysis of reasoning traces from Qwen3 and GPT-5.4 model families on the non-binary toxicity detection task.
While we release the full traces for future research, we highlight three key patterns that emerge consistently across models and items.

\paragraph{Criterion drift.}
Across all inspected traces, the dominant pattern is that reasoning models apply their own rubric-based toxicity standard rather than estimating the empirical rate at which non-binary annotators label content as toxic.
We term this \textit{criterion drift}: the model substitutes definitional classification (``is this comment objectively toxic?'') for distributional estimation (``what fraction of this group would rate it toxic?'').
A characteristic example from Qwen3~32B:
\begin{quote}
\small\itshape
``The main part here is `MEN ARE TRASH'. That's a blanket statement targeting all men.  Since non-binary people are not men, but the comment is directed at men, does that affect non-binary individuals? [\ldots] non-binary people aren't the target here.''
\end{quote}
The model reasons that because the content does not \textit{directly target} non-binary people, the toxicity rate should be low (predicting 42\%).
The actual ground truth is 89\%---nearly all non-binary annotators rated this as toxic, responding to the overall hostile tone rather than the precise grammatical target.

\paragraph{Directional dependence.}
Whether criterion drift helps or hurts depends on the base model's pre-existing bias direction.
For models that overestimate at baseline (e.g., Qwen3~8B, base mean 79\% vs.\ ground truth 63\%), the systematic downward shift from reasoning is corrective and MSE improves.
For models that already underestimate (e.g., Qwen3~32B, base mean 61\%; GPT-5.4 nano, base mean 59\%), the same shift amplifies the error.
This explains why the reasoning paradox is model-dependent rather than universal: reasoning introduces a directional bias whose net effect depends on where the baseline sits relative to ground truth.

\paragraph{Identity-mediated processing.}
In the clearest paradox cases, reasoning traces show explicit construction of identity-mediated processing chains---the model reasons about whether the target group would ``logically'' perceive specific content as directed at them.
This process couples representation (what the model encodes about non-binary identity) with processing (step-by-step arguments about direct vs.\ indirect targeting).
This coupling mechanism, discussed in Section~\ref{sec:evaluation2}, is one pathway through which criterion drift manifests in practice, though the overall pattern of substituting classification for estimation applies more broadly.

\begin{figure}[t]
    \centering
    \begin{subfigure}[c]{\linewidth}
        \centering
        \includegraphics[width=\linewidth]{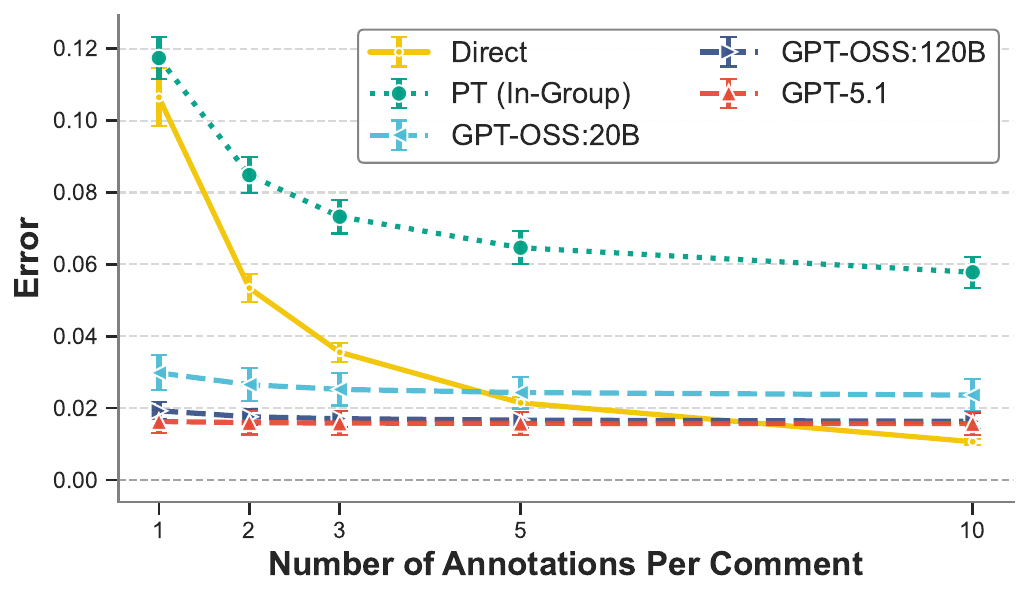}
        \caption{Male subgroup.}
        \label{fig:mse_llm_human_male}
    \end{subfigure}

    \vspace{0.5em}

    \begin{subfigure}[c]{\linewidth}
        \centering
        \includegraphics[width=\linewidth]{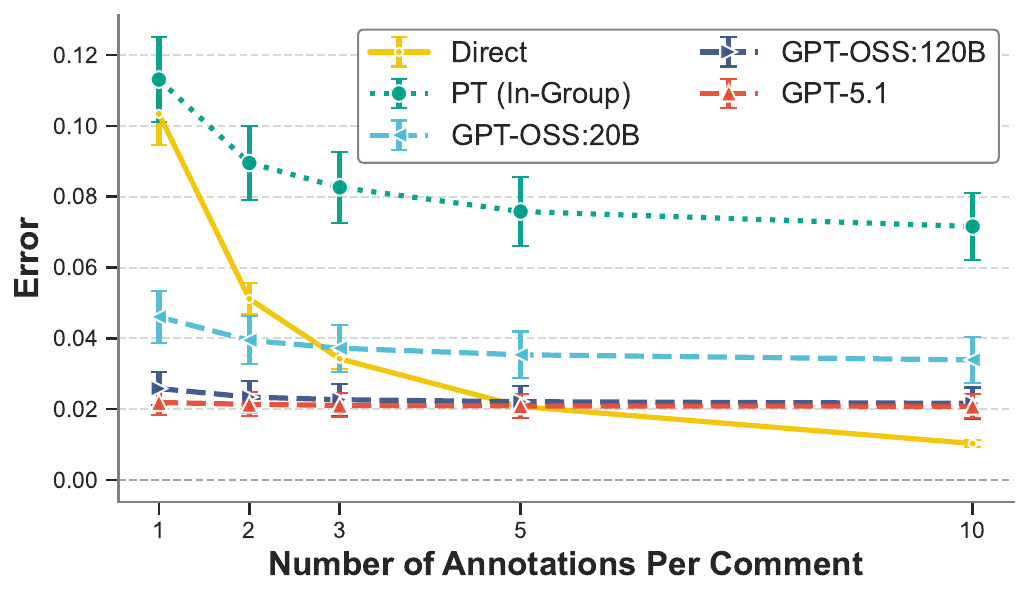}
        \caption{Non-binary subgroup. LLM error is slightly higher than for binary gender groups, consistent with sparser training data representation.}
        \label{fig:mse_llm_human_nb}
    \end{subfigure}

    \caption{
    \textbf{MSE across annotation protocols} on \textsf{Toxicity Detection}, for male and non-binary subgroups. LLMs achieve substantially lower error than all human protocols, consistent with Figure~\ref{fig:mse_llm_human_main} (female subgroup in main text).
    }
    \label{fig:mse_llm_human_subgroups}
\end{figure}

\begin{figure*}[t]
    \centering
    \begin{subfigure}[c]{0.48\textwidth}
        \centering
        \includegraphics[width=\linewidth]{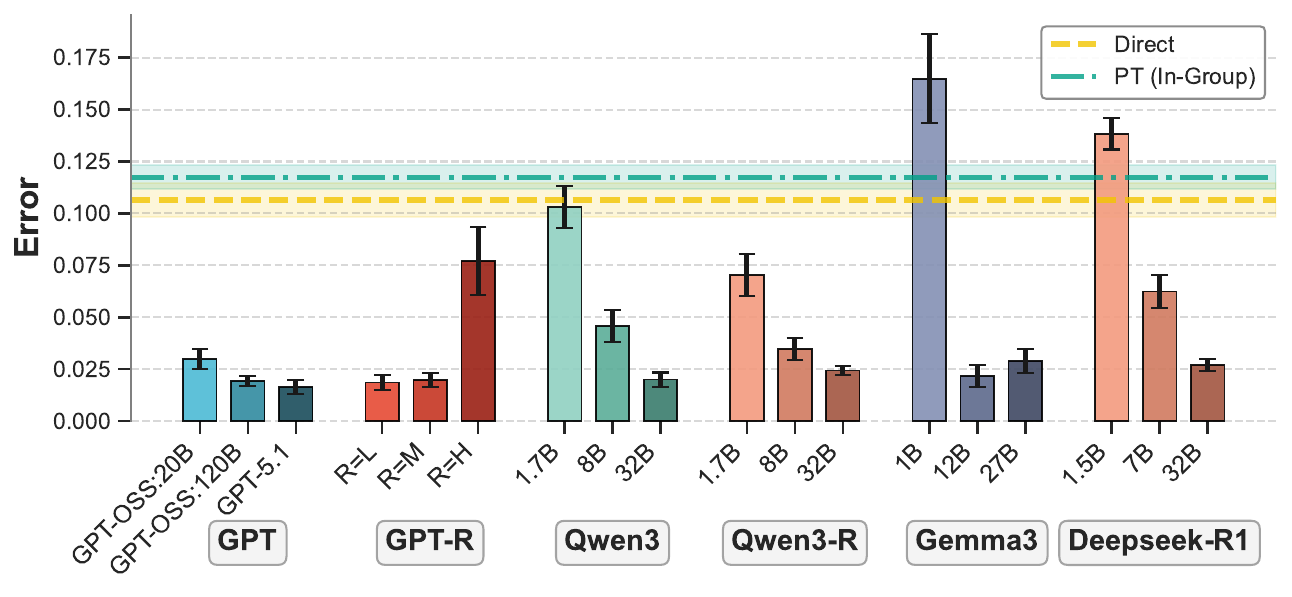}
    \end{subfigure}
    \hfill
    \begin{subfigure}[c]{0.48\textwidth}
        \centering
        \includegraphics[width=\linewidth]{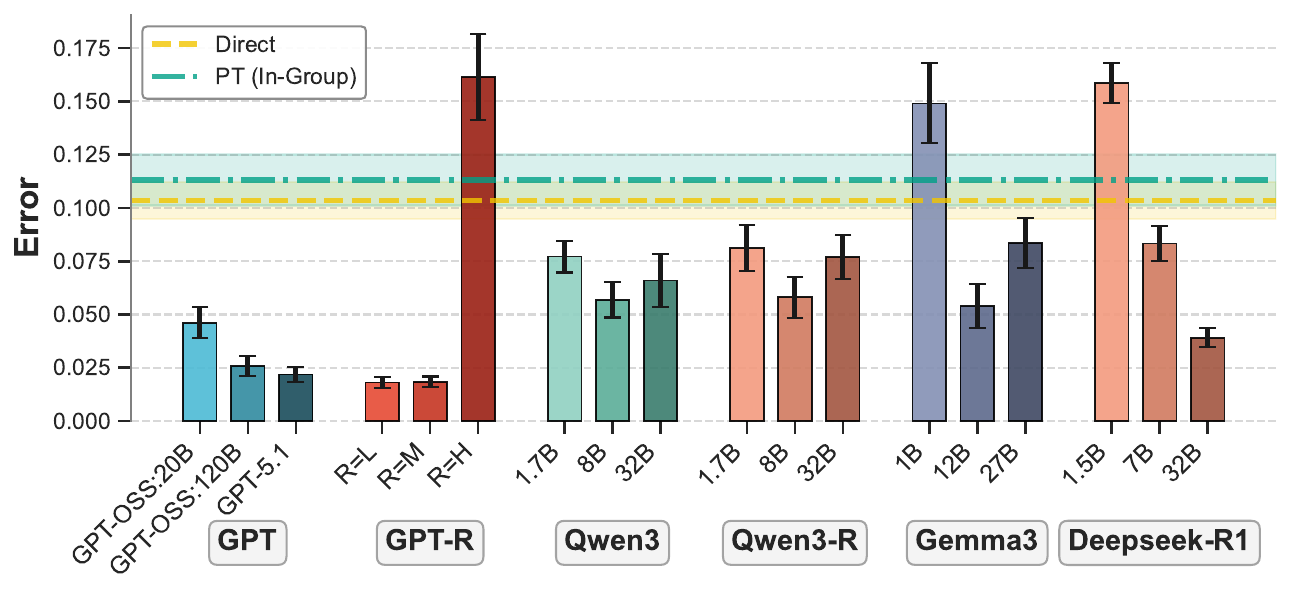}
    \end{subfigure}

    \vspace{0.6em}

    \begin{subfigure}[c]{0.48\textwidth}
        \centering
        \includegraphics[width=\linewidth]{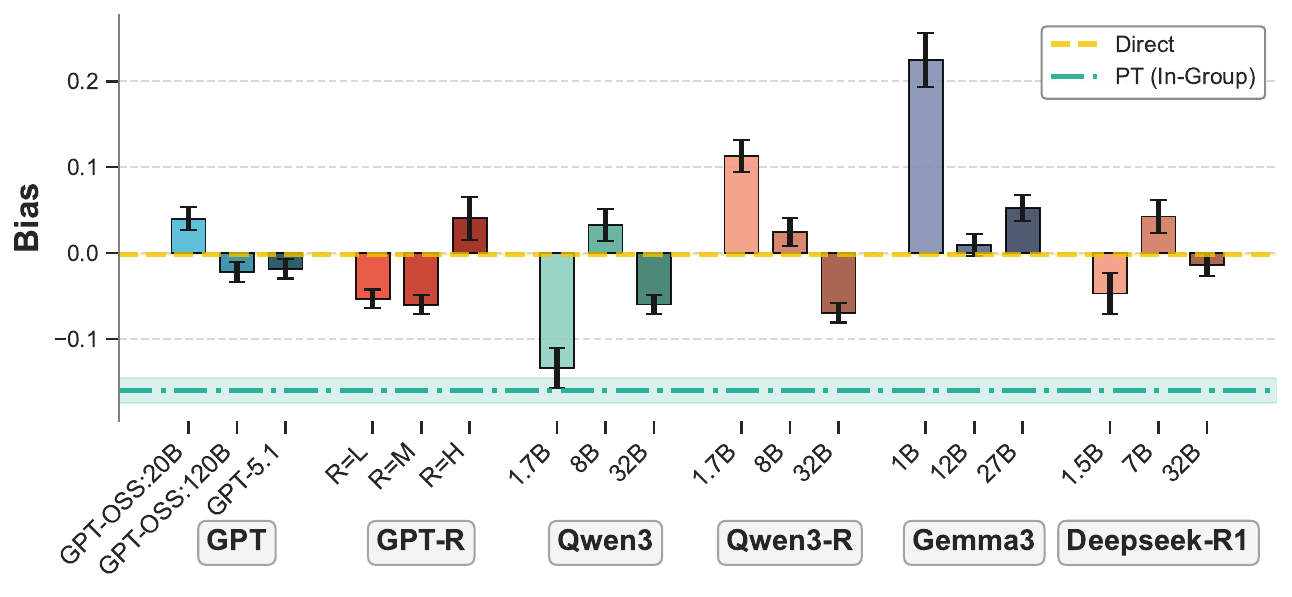}
    \end{subfigure}
    \hfill
    \begin{subfigure}[c]{0.48\textwidth}
        \centering
        \includegraphics[width=\linewidth]{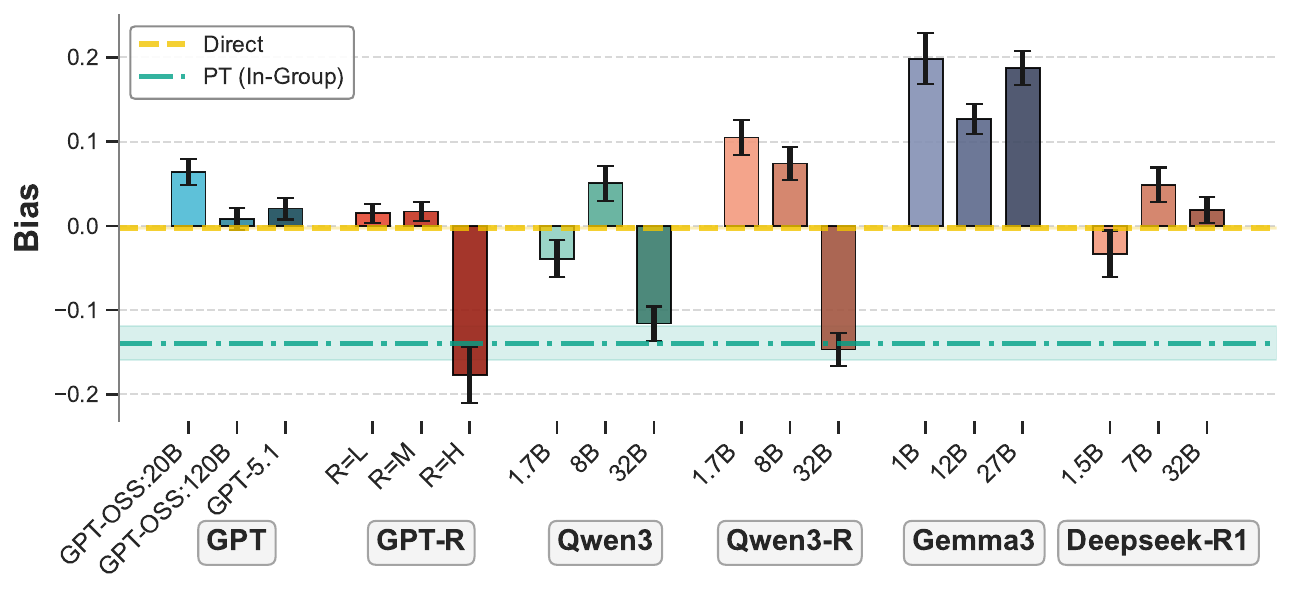}
    \end{subfigure}

    \vspace{0.6em}

    \begin{subfigure}[c]{0.48\textwidth}
        \centering
        \includegraphics[width=\linewidth]{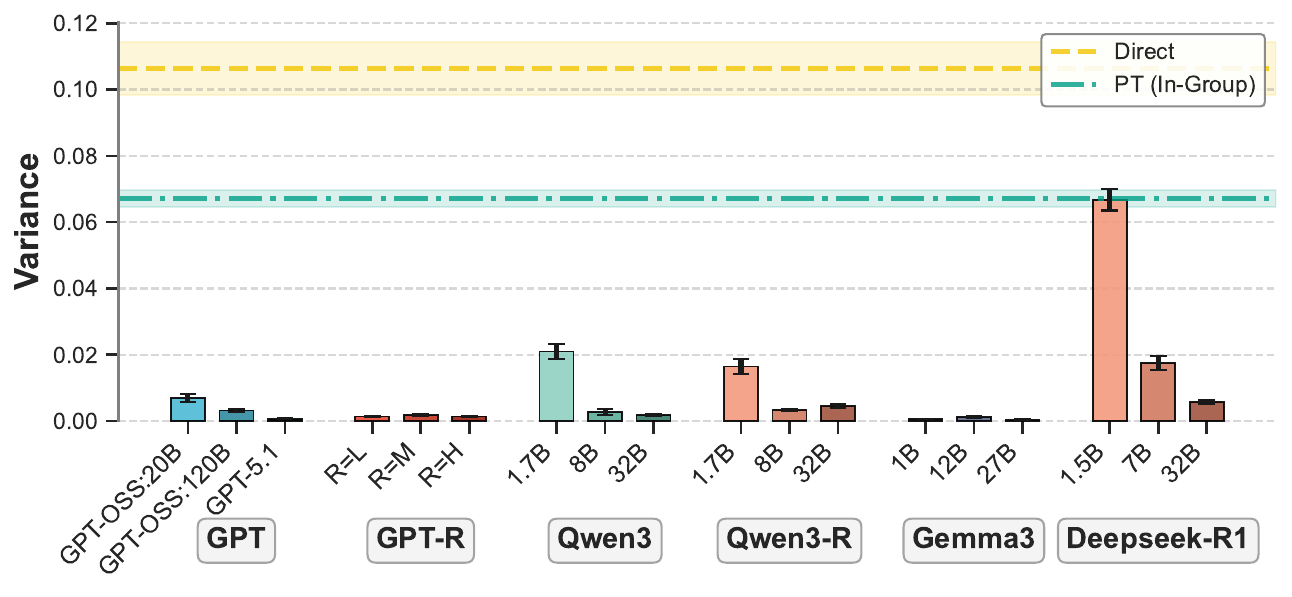}
        \caption{Male subgroup}
    \end{subfigure}
    \hfill
    \begin{subfigure}[c]{0.48\textwidth}
        \centering
        \includegraphics[width=\linewidth]{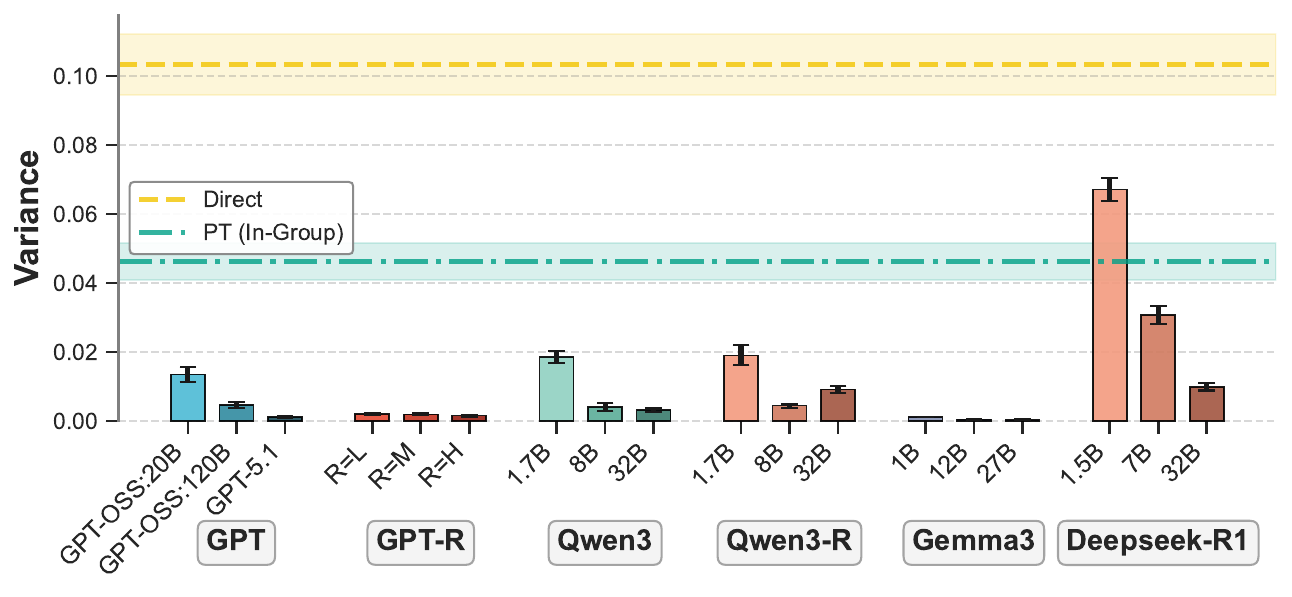}
        \caption{Non-binary subgroup}
    \end{subfigure}

    \caption{
    \textbf{Impact of model family and scale on single-annotation ($k{=}1$) perspective-taking.}
    Rows show MSE, bias, and variance on \textsf{Toxicity Detection} for male and non-binary subgroups (cf.\ Figure~\ref{fig:influence_models} for females).
    Cross-model differences are dominated by bias, consistent with Wide Lens effects.
    }
    \label{fig:influence_models_subgroups}
\end{figure*}

\begin{figure*}[t]
    \centering
    \begin{subfigure}[c]{0.32\textwidth}
        \centering
        \includegraphics[width=\linewidth]{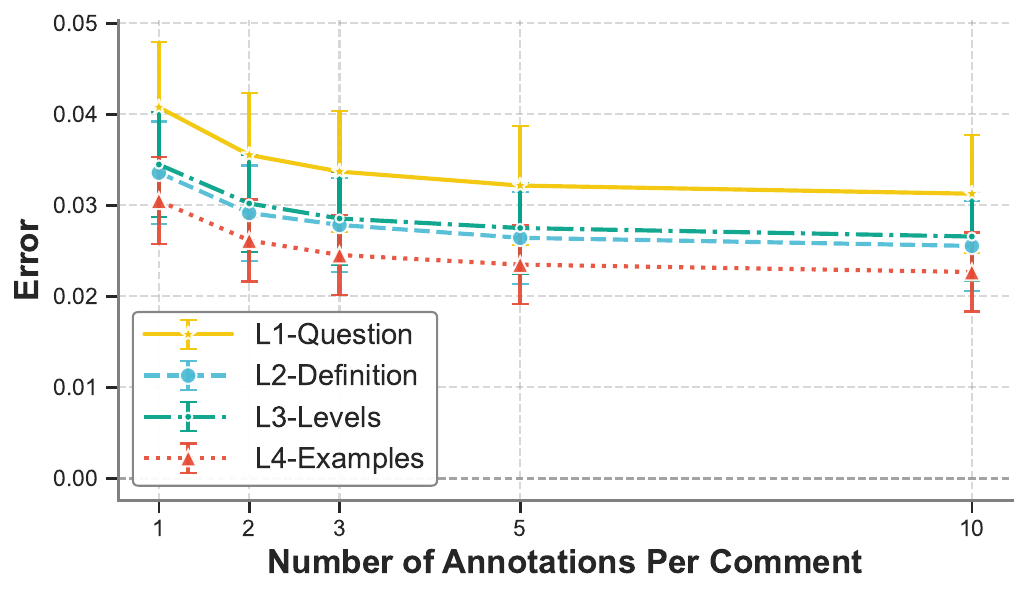}
    \end{subfigure}
    \hfill
    \begin{subfigure}[c]{0.32\textwidth}
        \centering
        \includegraphics[width=\linewidth]{figures/toxicity_ablations/mse_prompt_gpt-oss-120b.pdf}
    \end{subfigure}
    \hfill
    \begin{subfigure}[c]{0.32\textwidth}
        \centering
        \includegraphics[width=\linewidth]{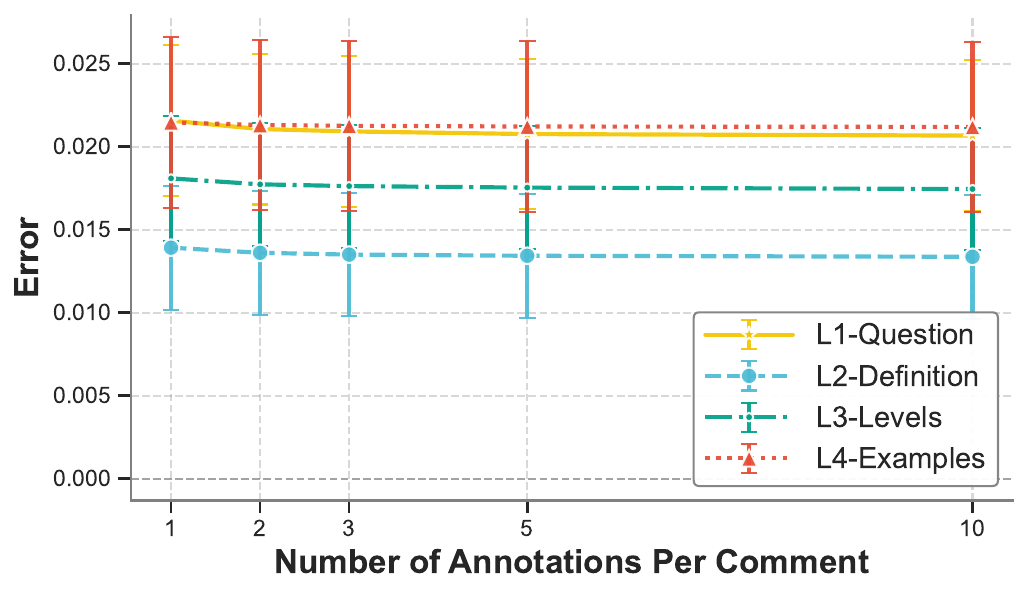}
    \end{subfigure}

    \vspace{0.6em}

    \begin{subfigure}[c]{0.32\textwidth}
        \centering
        \includegraphics[width=\linewidth]{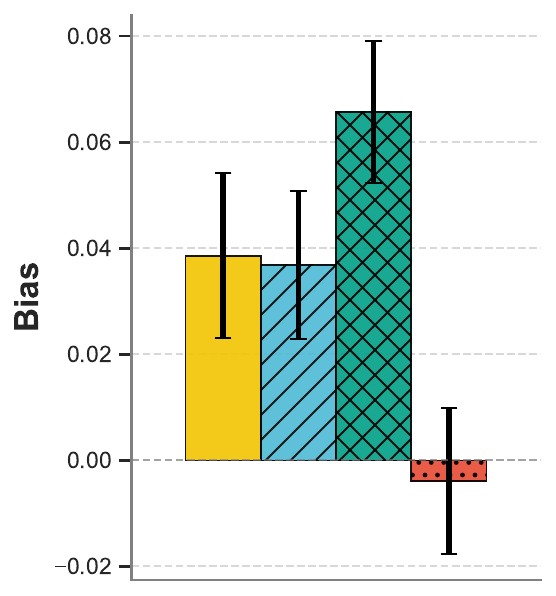}
    \end{subfigure}
    \hfill
    \begin{subfigure}[c]{0.32\textwidth}
        \centering
        \includegraphics[width=\linewidth]{figures/toxicity_ablations/bias_prompt_gpt-oss-120b.pdf}
    \end{subfigure}
    \hfill
    \begin{subfigure}[c]{0.32\textwidth}
        \centering
        \includegraphics[width=\linewidth]{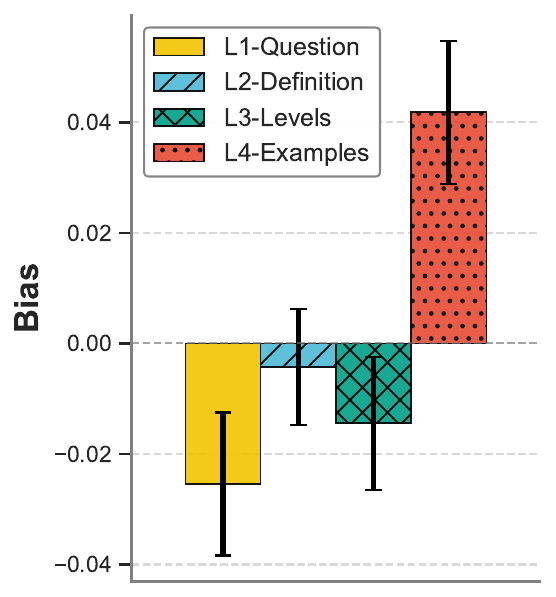}
    \end{subfigure}

    \vspace{0.6em}

    \begin{subfigure}[c]{0.32\textwidth}
        \centering
        \includegraphics[width=\linewidth]{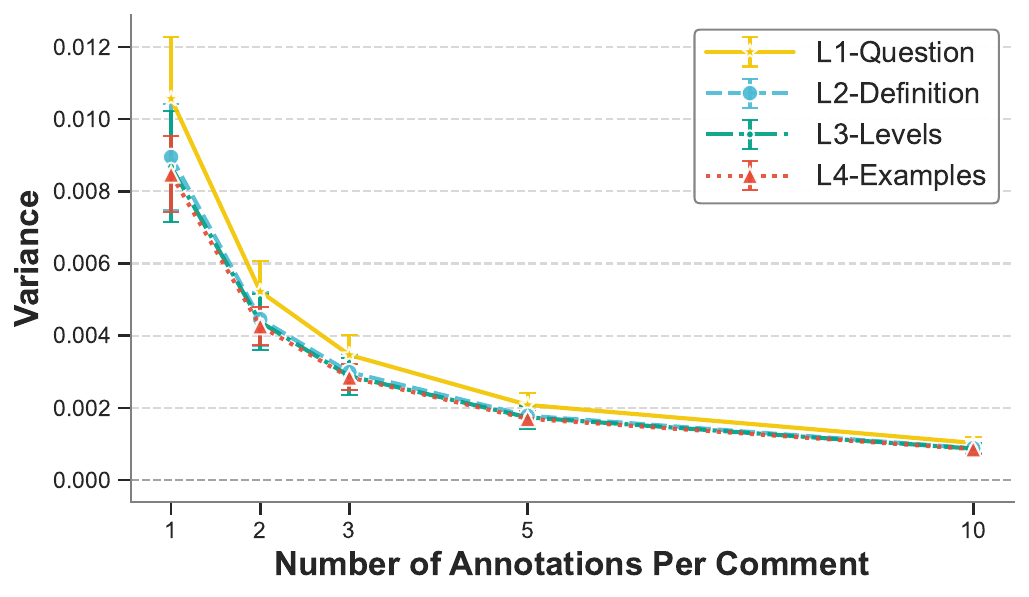}
        \caption{GPT-OSS:20B}
    \end{subfigure}
    \hfill
    \begin{subfigure}[c]{0.32\textwidth}
        \centering
        \includegraphics[width=\linewidth]{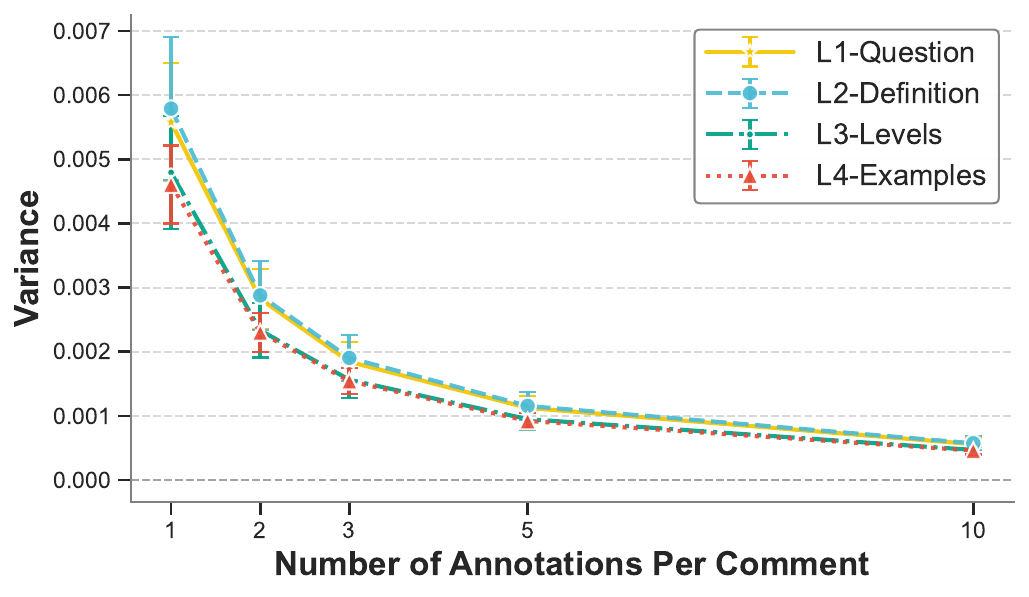}
        \caption{GPT-OSS:120B}
    \end{subfigure}
    \hfill
    \begin{subfigure}[c]{0.32\textwidth}
        \centering
        \includegraphics[width=\linewidth]{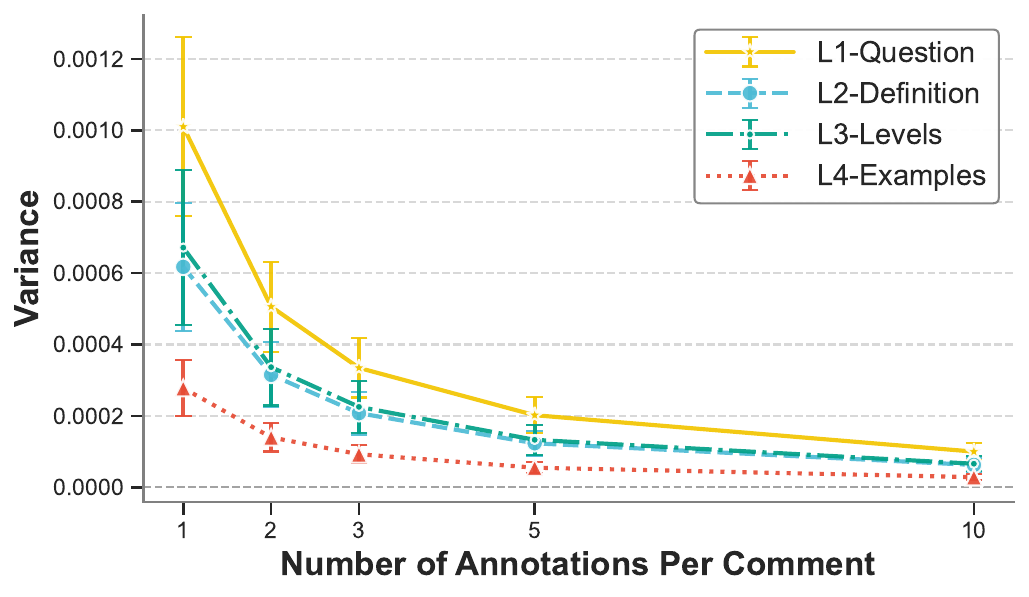}
        \caption{GPT-5.1}
    \end{subfigure}

    \caption{\textbf{Impact of prompt structure} on $k{=}1$ perspective-taking across three GPT models (female subgroup).
    Rows show MSE, bias, and variance.
    Increased structure (L1$\to$L3) reduces MSE for GPT-OSS models primarily by shifting bias; adding examples (L4) hurts GPT-5.1. Variance decreases across all models and prompt levels.}
    \label{fig:influence_prompting_models}
    \vspace{-2pt}
\end{figure*}

\begin{figure*}[t]
    \centering
    \begin{subfigure}[c]{0.32\textwidth}
        \centering
        \includegraphics[width=\linewidth]{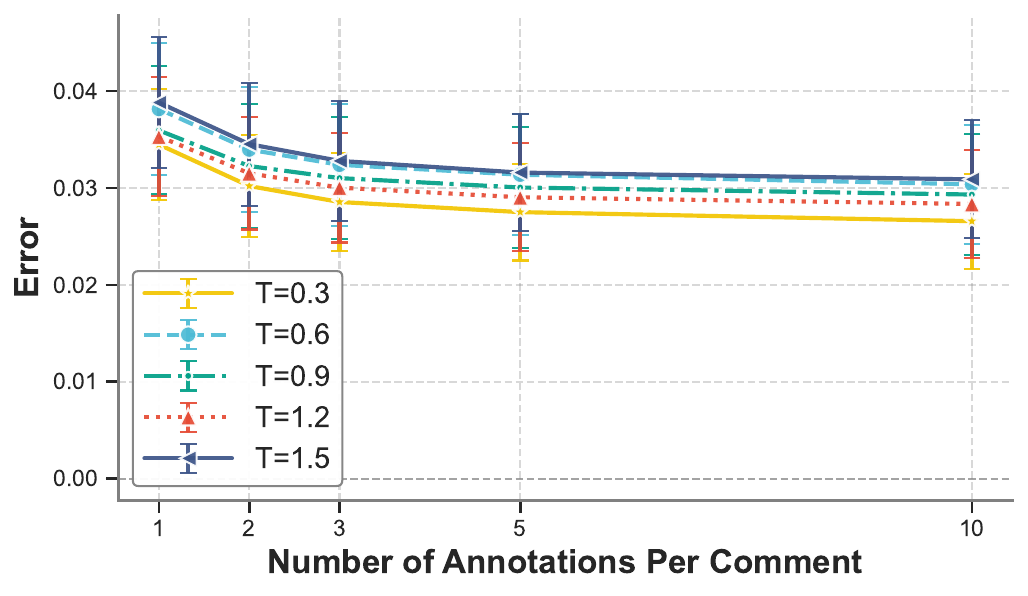}
    \end{subfigure}
    \hfill
    \begin{subfigure}[c]{0.32\textwidth}
        \centering
        \includegraphics[width=\linewidth]{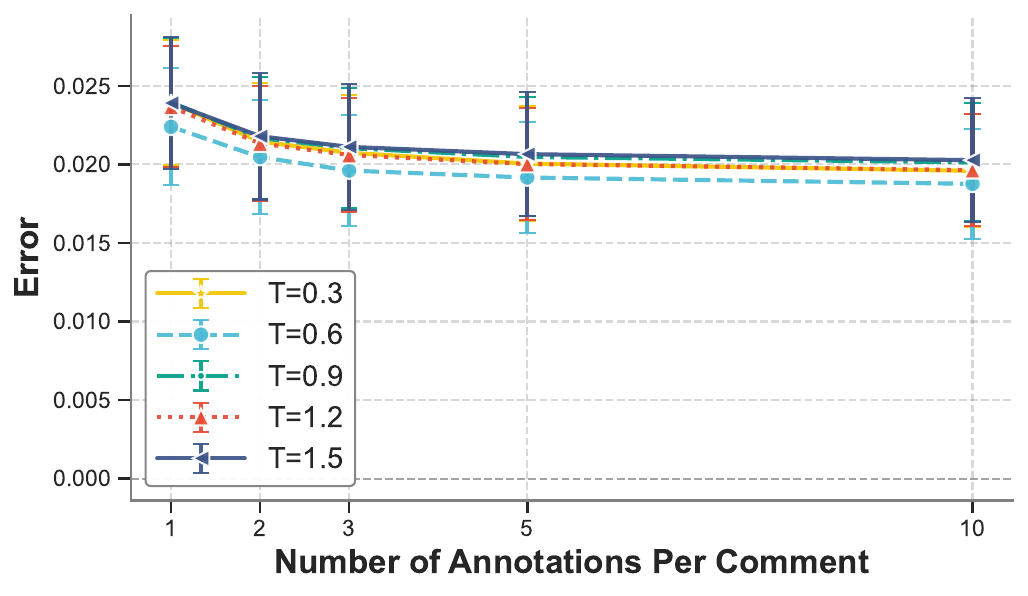}
    \end{subfigure}
    \hfill
    \begin{subfigure}[c]{0.32\textwidth}
        \centering
        \includegraphics[width=\linewidth]{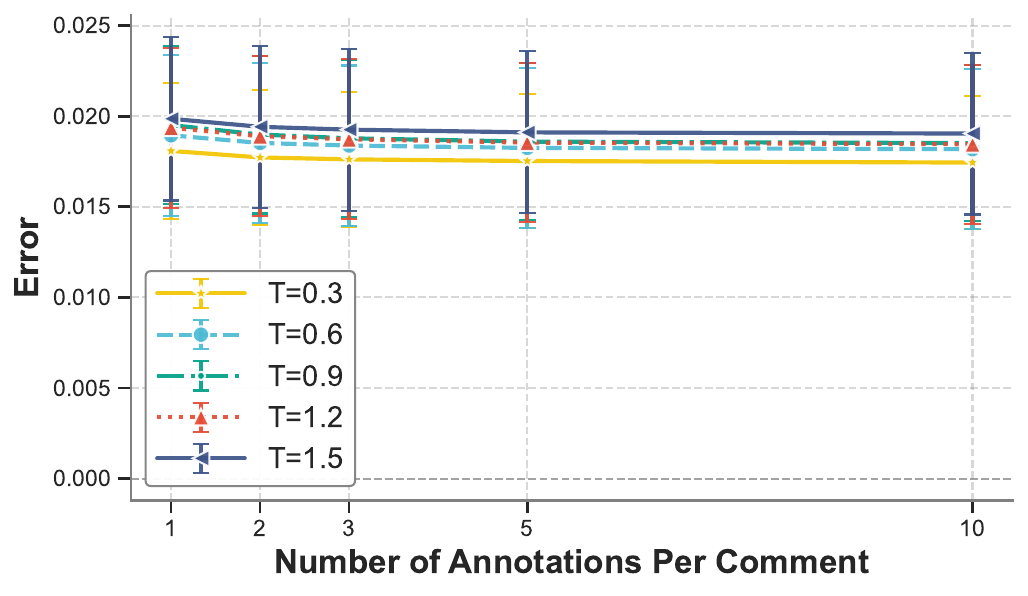}
    \end{subfigure}

    \vspace{0.6em}

    \begin{subfigure}[c]{0.32\textwidth}
        \centering
        \includegraphics[width=\linewidth]{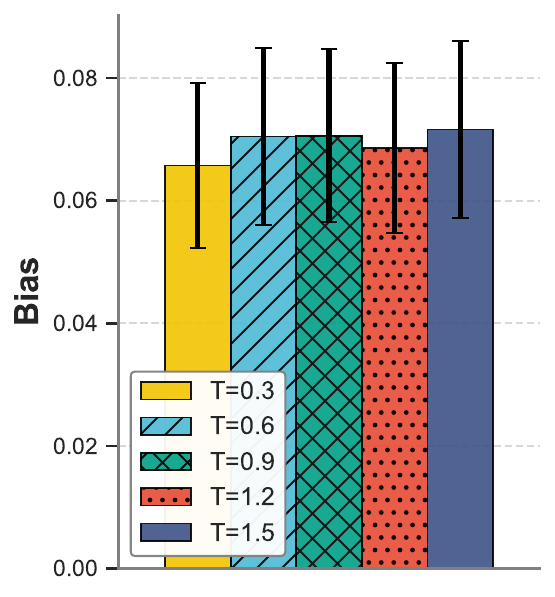}
    \end{subfigure}
    \hfill
    \begin{subfigure}[c]{0.32\textwidth}
        \centering
        \includegraphics[width=\linewidth]{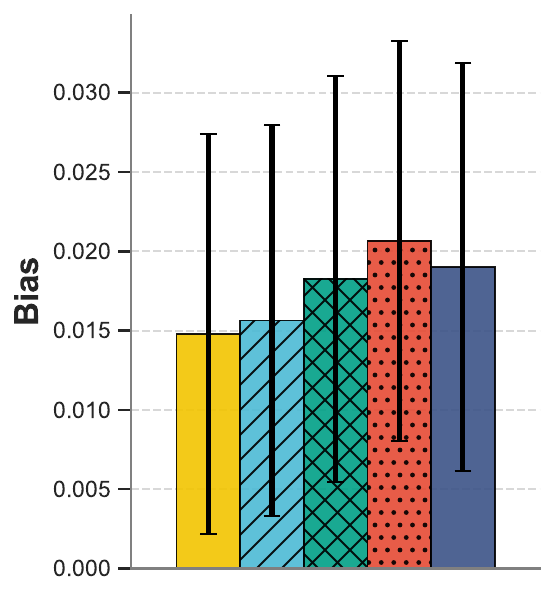}
    \end{subfigure}
    \hfill
    \begin{subfigure}[c]{0.32\textwidth}
        \centering
        \includegraphics[width=\linewidth]{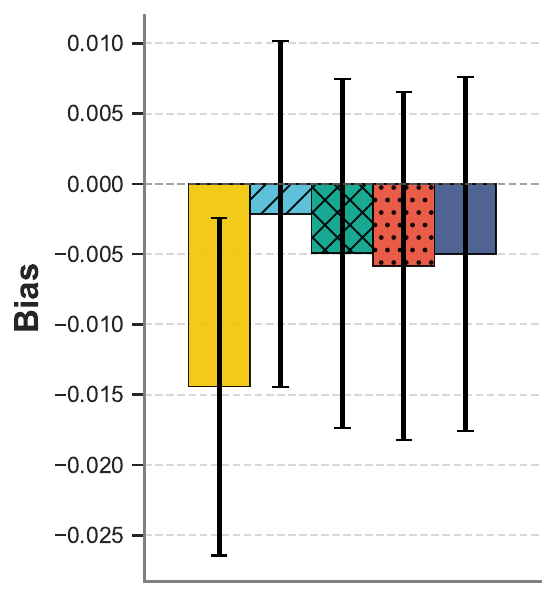}
    \end{subfigure}

    \vspace{0.6em}

    \begin{subfigure}[c]{0.32\textwidth}
        \centering
        \includegraphics[width=\linewidth]{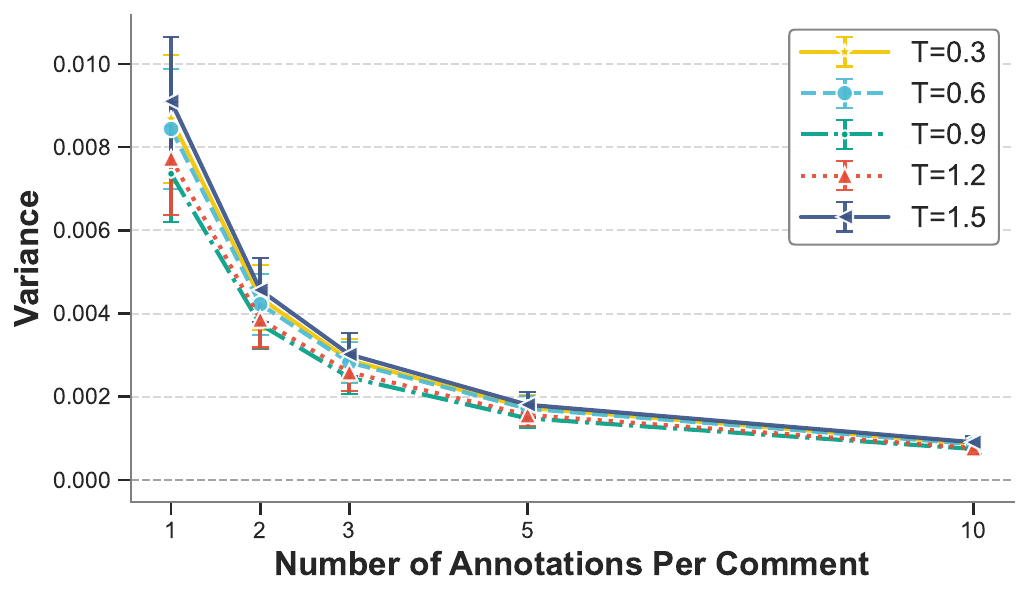}
        \caption{GPT-OSS:20B}
    \end{subfigure}
    \hfill
    \begin{subfigure}[c]{0.32\textwidth}
        \centering
        \includegraphics[width=\linewidth]{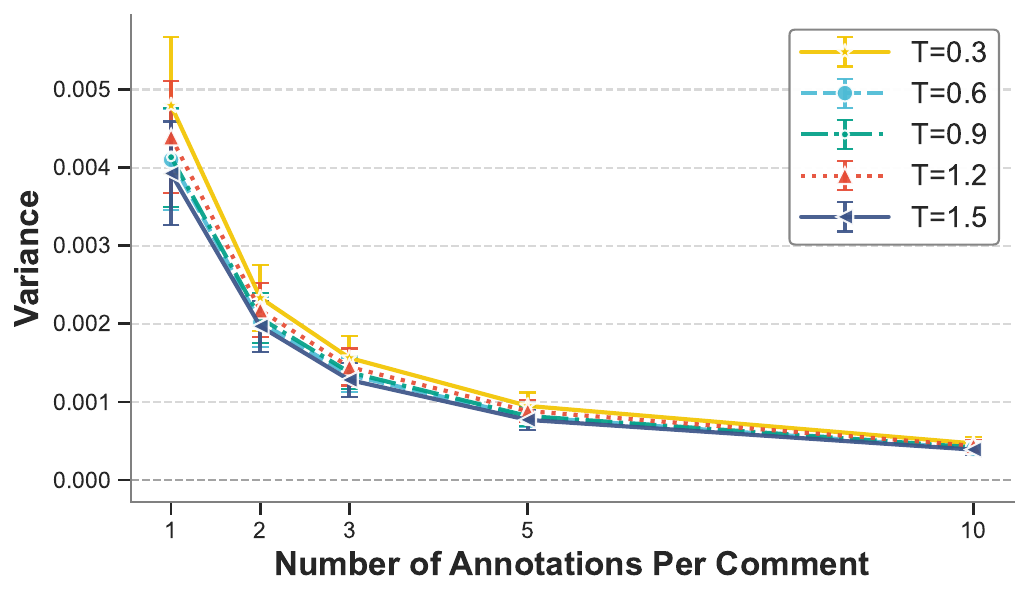}
        \caption{GPT-OSS:120B}
    \end{subfigure}
    \hfill
    \begin{subfigure}[c]{0.32\textwidth}
        \centering
        \includegraphics[width=\linewidth]{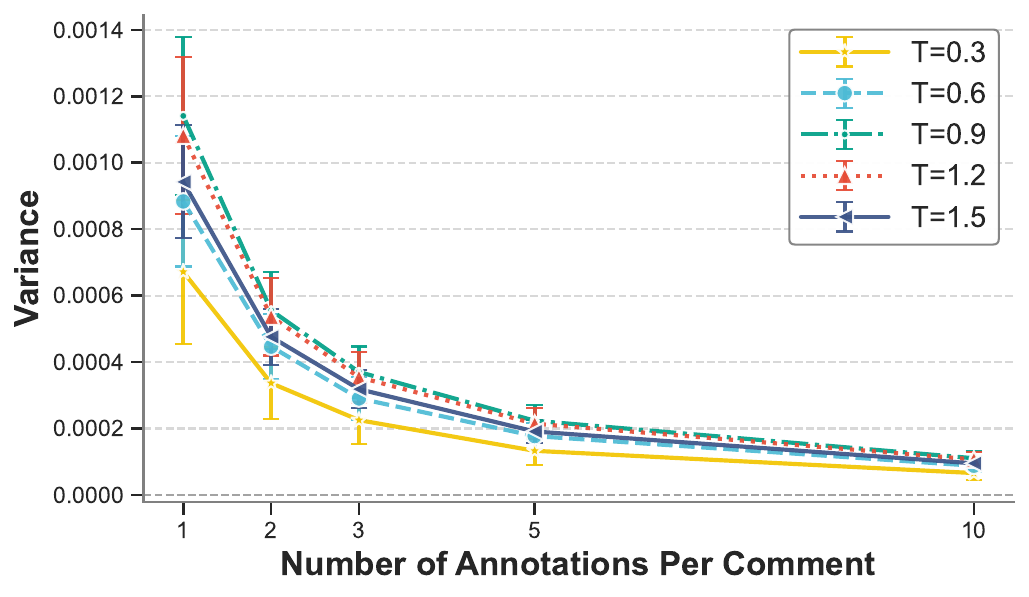}
        \caption{GPT-5.1}
    \end{subfigure}

    \caption{\textbf{Impact of sampling temperature} on $k{=}1$ perspective-taking across three GPT models (female subgroup). Rows show MSE, bias, and variance. Temperature changes yield only modest MSE differences, suggesting that naive stochastic diversification does not produce the independent errors needed to reduce the correlation floor.}
    \label{fig:influence_temperature_models}
\end{figure*}

\begin{figure*}[t]
    \centering
    \begin{subfigure}[c]{0.32\textwidth}
        \centering
        \includegraphics[width=\linewidth]{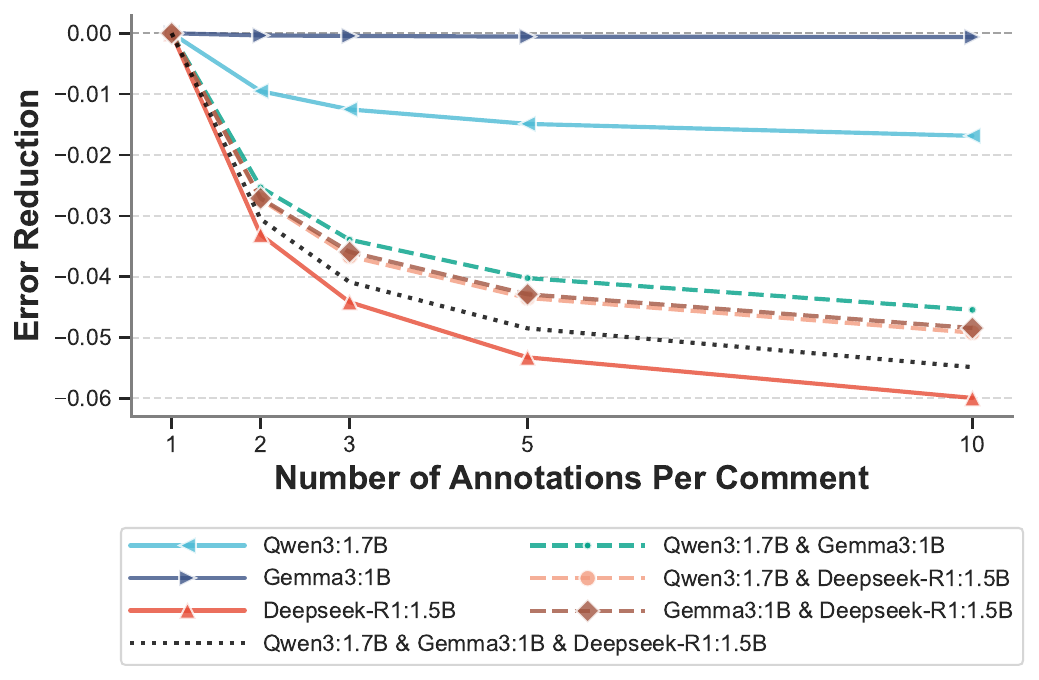}
        \caption{Small models}
    \end{subfigure}
    \hfill
    \begin{subfigure}[c]{0.32\textwidth}
        \centering
        \includegraphics[width=\linewidth]{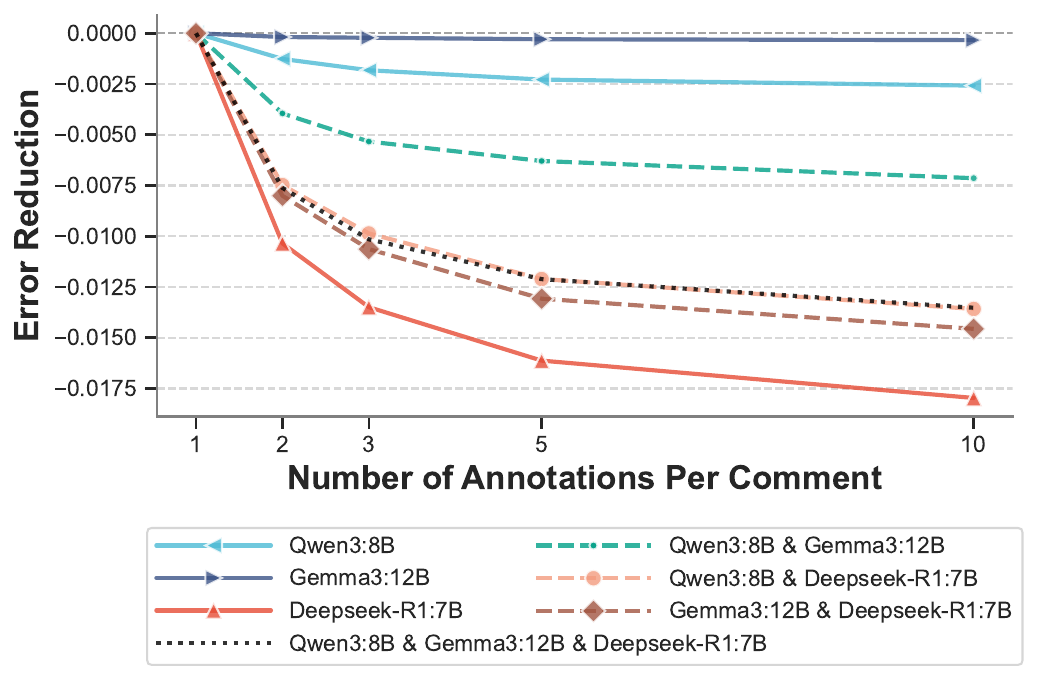}
        \caption{Medium models}
    \end{subfigure}
    \hfill
    \begin{subfigure}[c]{0.32\textwidth}
        \centering
        \includegraphics[width=\linewidth]{figures/toxicity_ablations/mixture_family_large.pdf}
        \caption{Large models}
    \end{subfigure}
    \caption{\textbf{Cross-family model mixing} at three size tiers (female subgroup).
    Error reduction is measured relative to each model's $k{=}1$ MSE.
    Mixing yields consistent gains, especially for large models where bias profiles differ enough for cancellation.
    For smaller models, gains are bounded by the strongest individual model.}
    \label{fig:influence_mixing_family}
\end{figure*}

\begin{figure*}[t]
    \begin{subfigure}[c]{0.32\textwidth}
        \centering
        \includegraphics[width=\linewidth]{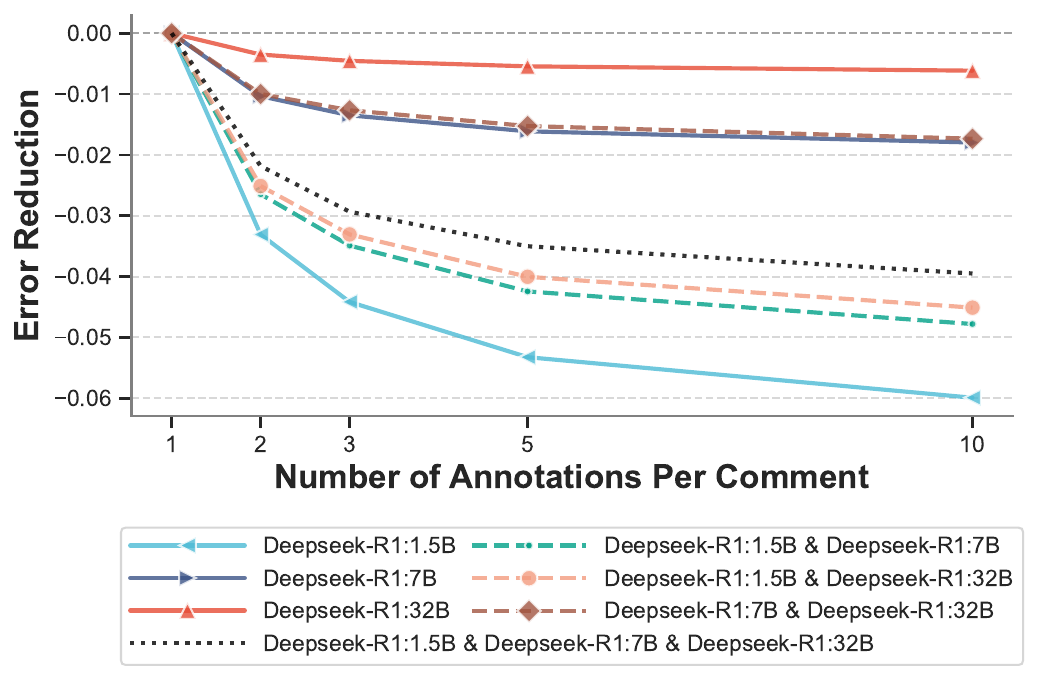}
        \caption{DeepSeek-R1}
    \end{subfigure}
    \hfill
    \begin{subfigure}[c]{0.32\textwidth}
        \centering
        \includegraphics[width=\linewidth]{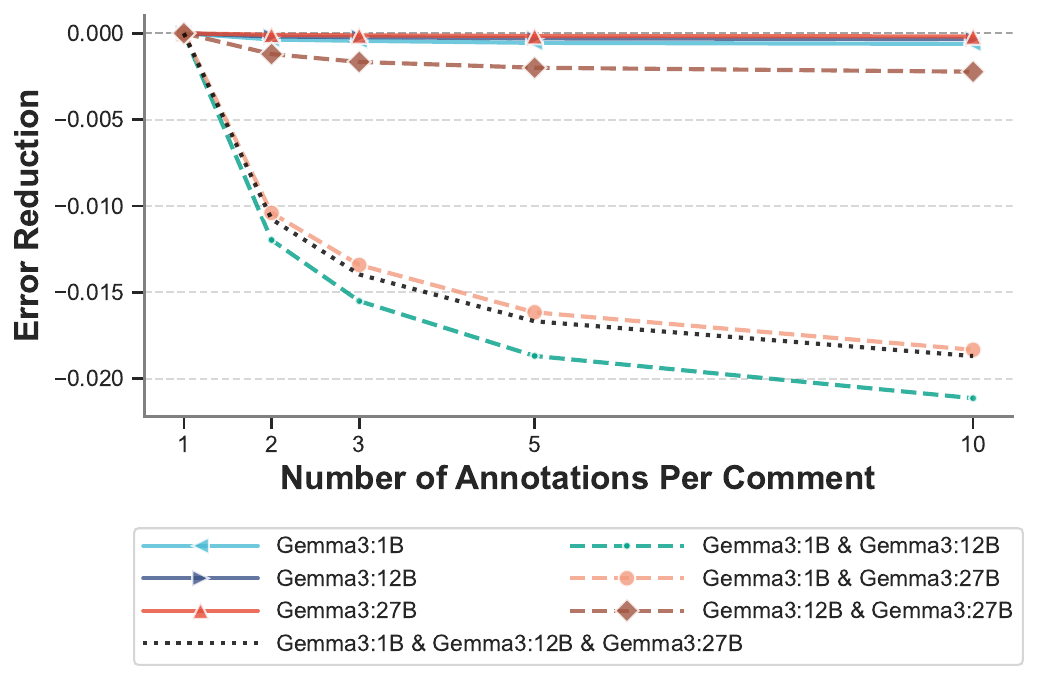}
        \caption{Gemma3}
    \end{subfigure}
    \hfill
    \begin{subfigure}[c]{0.32\textwidth}
        \centering
        \includegraphics[width=\linewidth]{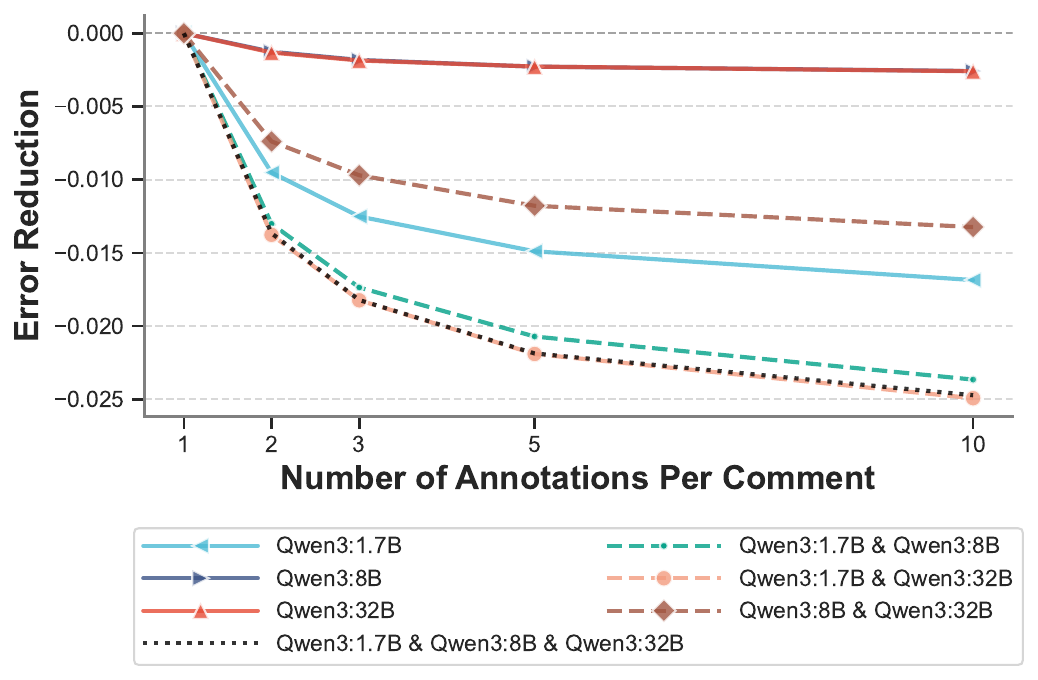}
        \caption{Qwen3}
    \end{subfigure}

    \vspace{0.6em}

    \begin{subfigure}[c]{0.32\textwidth}
        \centering
        \includegraphics[width=\linewidth]{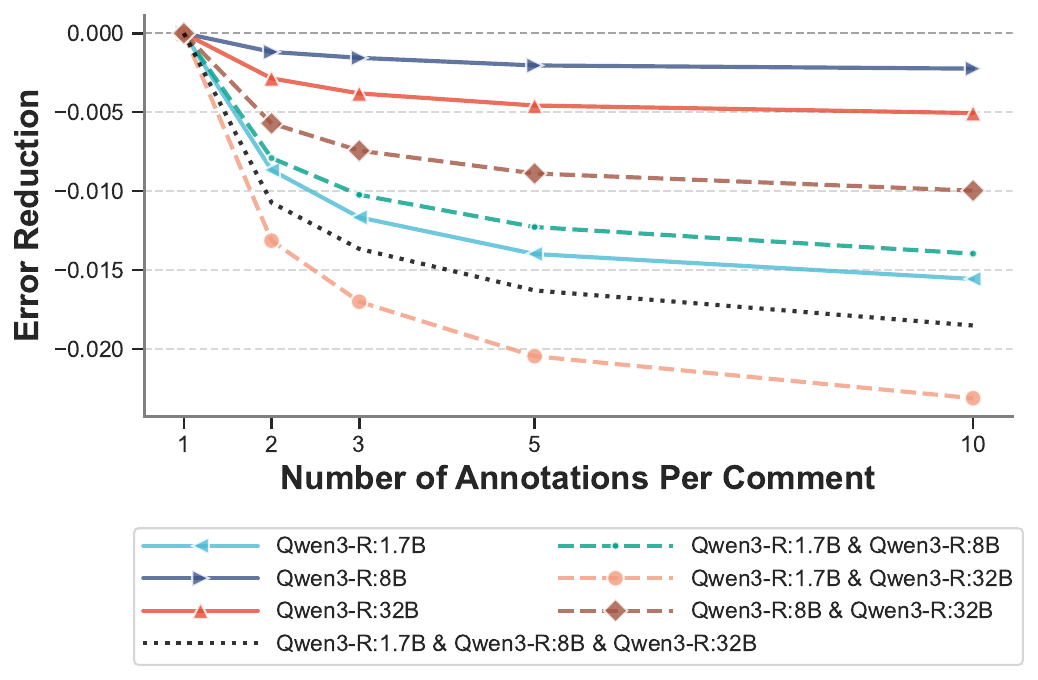}
        \caption{Qwen3-R}
    \end{subfigure}
    \hfill
    \begin{subfigure}[c]{0.32\textwidth}
        \centering
        \includegraphics[width=\linewidth]{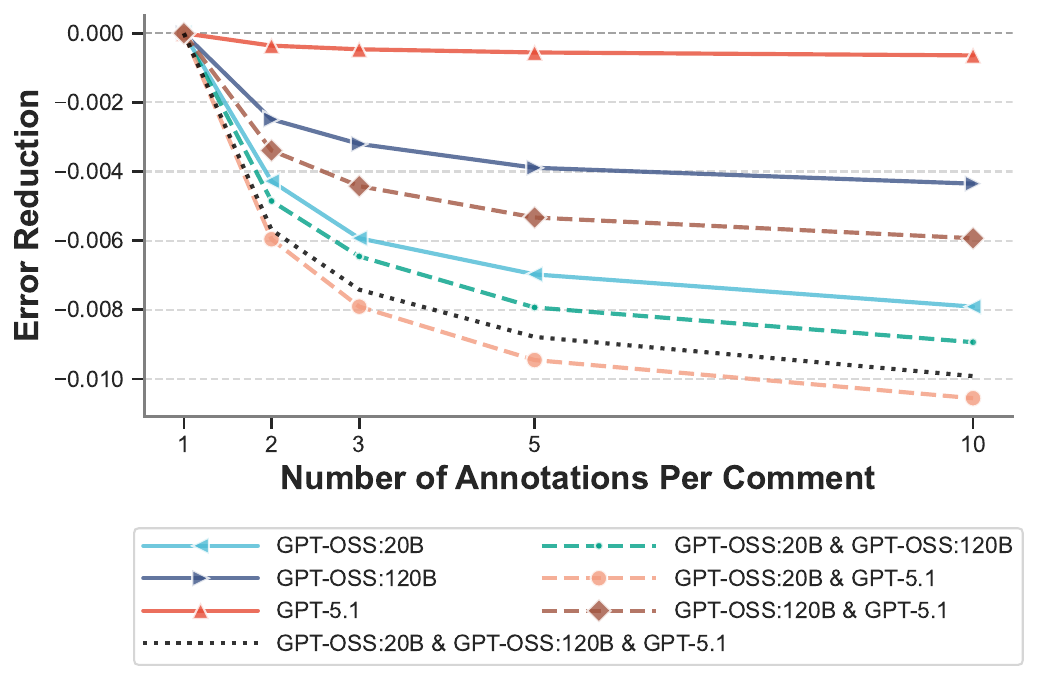}
        \caption{GPT}
    \end{subfigure}
    \hfill
    \begin{subfigure}[c]{0.32\textwidth}
        \centering
        \includegraphics[width=\linewidth]{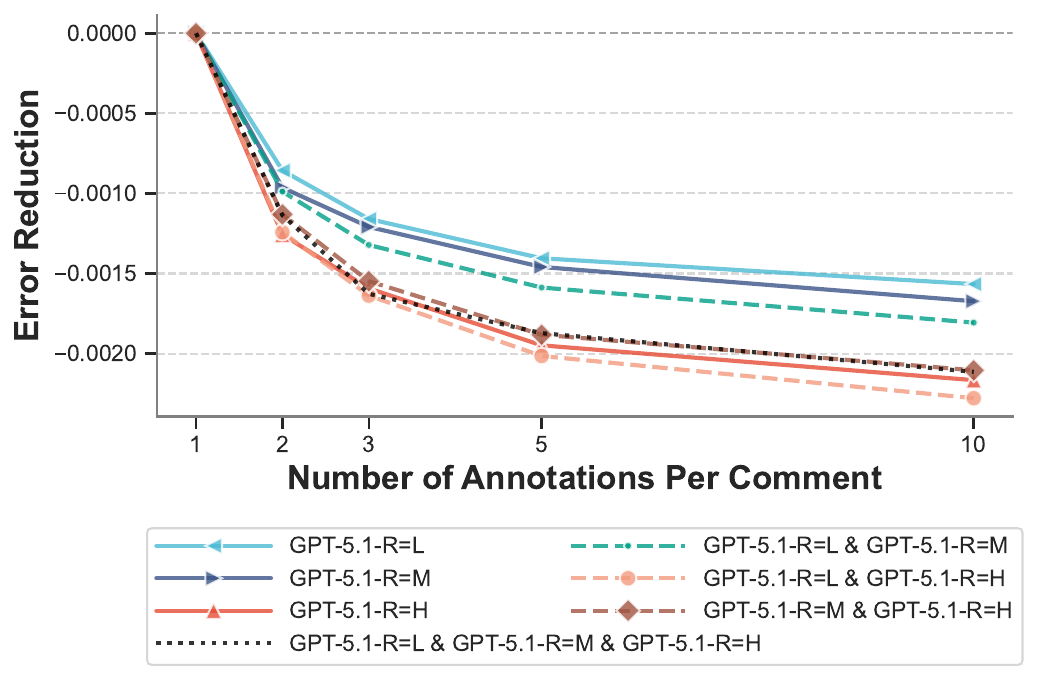}
        \caption{GPT-R}
    \end{subfigure}

    \caption{
    \textbf{Within-family model mixing} across sizes (female subgroup).
    Error reduction is measured relative to each model's $k{=}1$ MSE.
    Mixing models of different sizes within the same family yields consistent error reduction, similar in pattern to cross-family mixing.
    }
    \label{fig:influence_mixing_size}
\end{figure*}

\begin{figure*}[t]
    \begin{subfigure}[c]{0.32\textwidth}
        \centering
        \includegraphics[width=\linewidth]{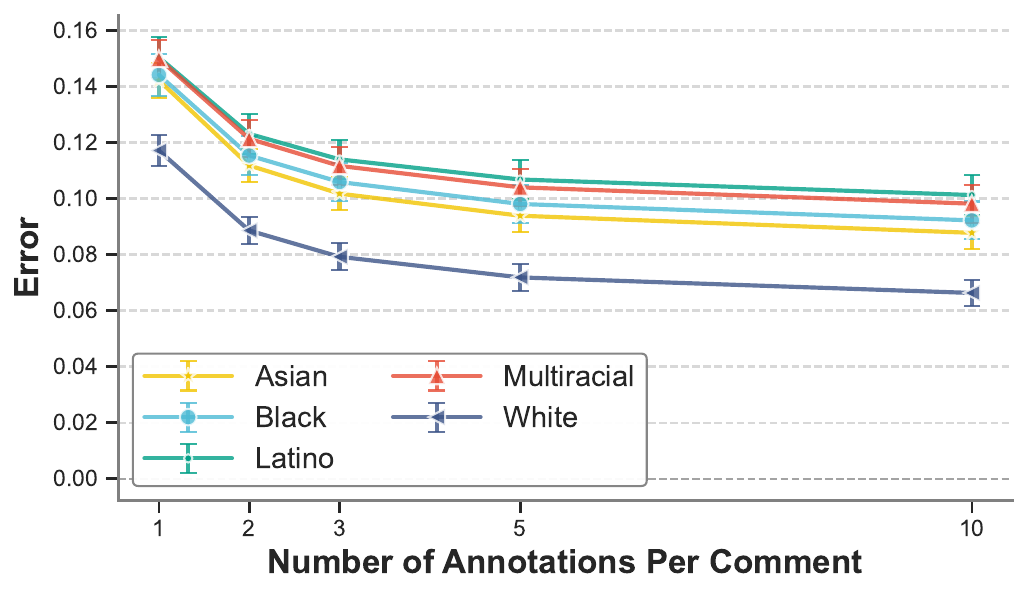}
        \caption{GPT-OSS:20B}
    \end{subfigure}
    \hfill
    \begin{subfigure}[c]{0.32\textwidth}
        \centering
        \includegraphics[width=\linewidth]{figures/DICE/mse_race_gpt-oss-120b.pdf}
        \caption{GPT-OSS:120B}
    \end{subfigure}
    \hfill
    \begin{subfigure}[c]{0.32\textwidth}
        \centering
        \includegraphics[width=\linewidth]{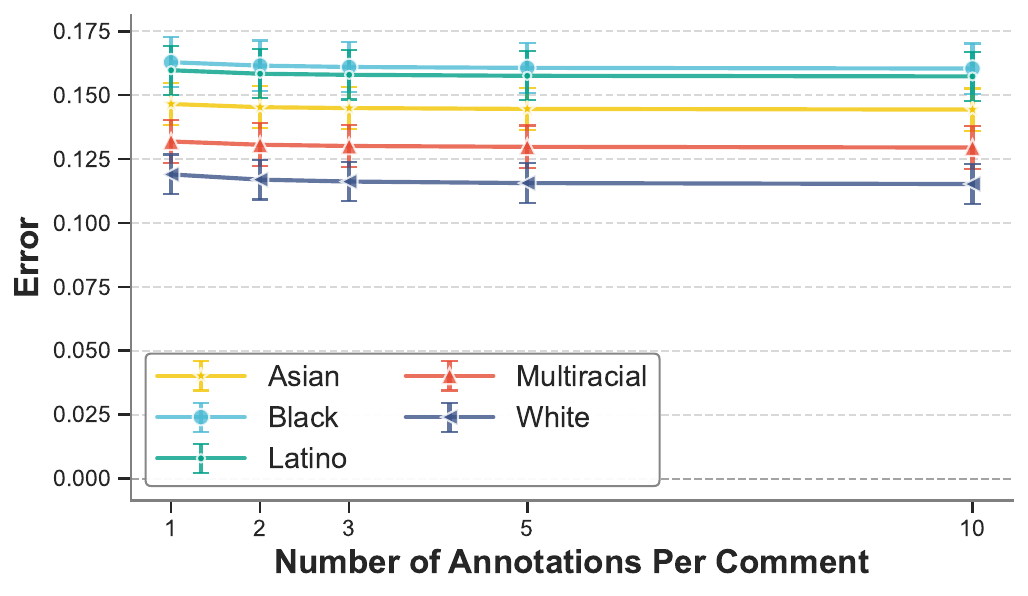}
        \caption{GPT-5.1}
    \end{subfigure}

    \vspace{0.6em}

    \begin{subfigure}[c]{0.32\textwidth}
        \centering
        \includegraphics[width=\linewidth]{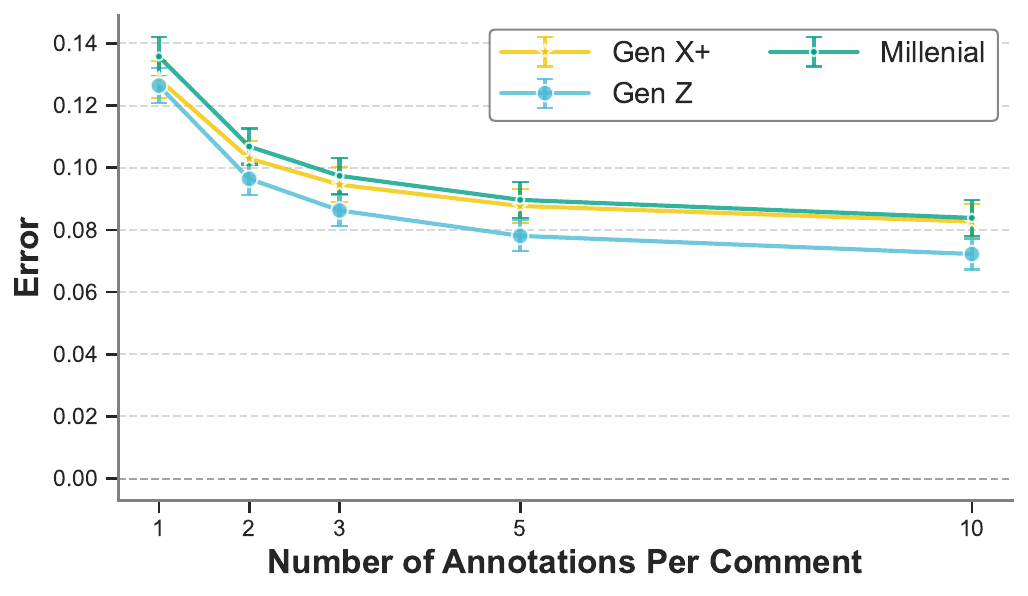}
        \caption{GPT-OSS:20B}
    \end{subfigure}
    \hfill
    \begin{subfigure}[c]{0.32\textwidth}
        \centering
        \includegraphics[width=\linewidth]{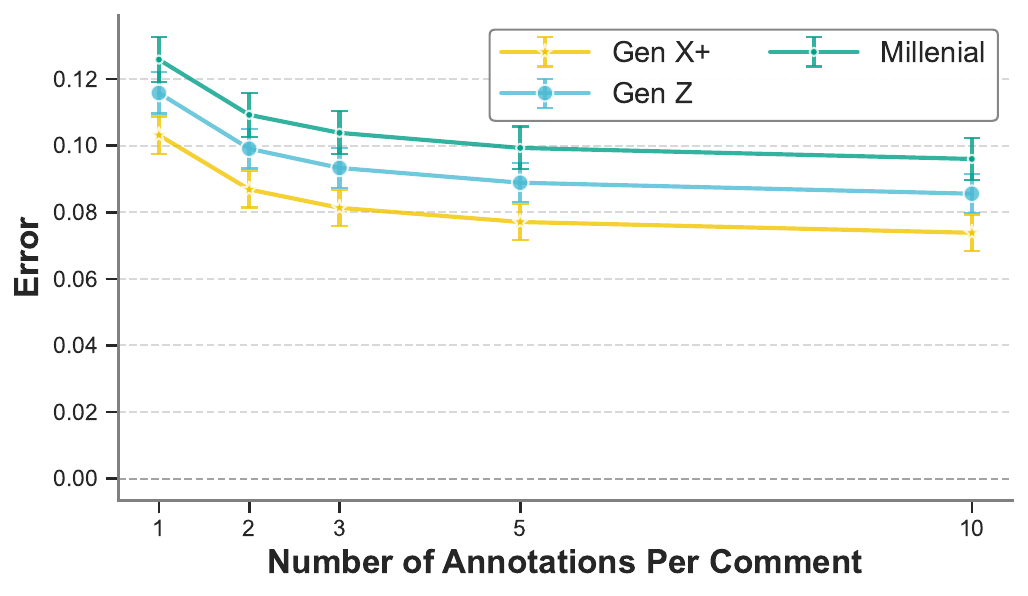}
        \caption{GPT-OSS:120B}
    \end{subfigure}
    \hfill
    \begin{subfigure}[c]{0.32\textwidth}
        \centering
        \includegraphics[width=\linewidth]{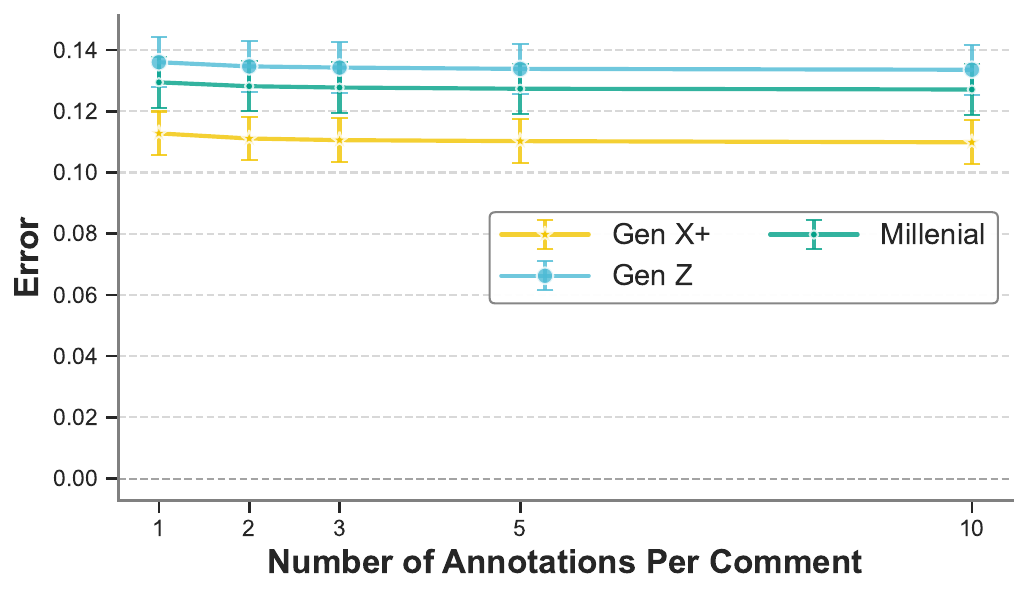}
        \caption{GPT-5.1}
    \end{subfigure}

    \caption{\textbf{LLM PT performance by subgroup prevalence} on DICES, for race (a--c) and age (d--f) subgroups.
    MSE generally deteriorates for less prevalent subgroups, with the pattern more pronounced for GPT-OSS:120B and GPT-5.1.}
    \label{fig:DICES_breadth}
\end{figure*}

\begin{figure*}[t]
    \centering
    \begin{subfigure}[c]{0.24\textwidth}
        \centering
        \includegraphics[width=\linewidth]{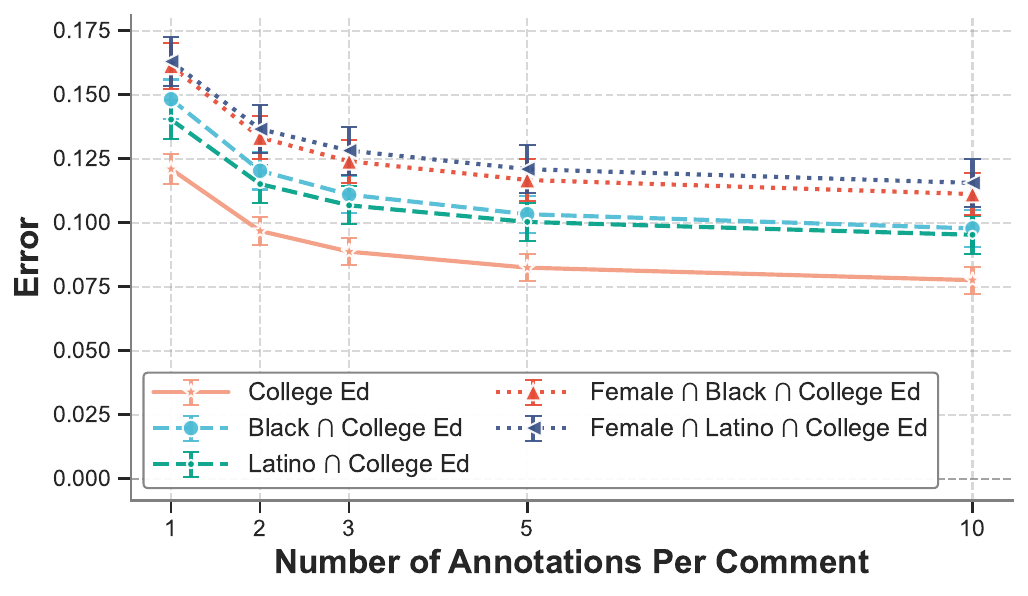}
        \caption{GPT-OSS:20B}
    \end{subfigure}
    \hfill
    \begin{subfigure}[c]{0.24\textwidth}
        \centering
        \includegraphics[width=\linewidth]{figures/DICE/mse_good_path_gpt-oss-120b.pdf}
        \caption{GPT-OSS:120B}
    \end{subfigure}
    \hfill
    \begin{subfigure}[c]{0.24\textwidth}
        \centering
        \includegraphics[width=\linewidth]{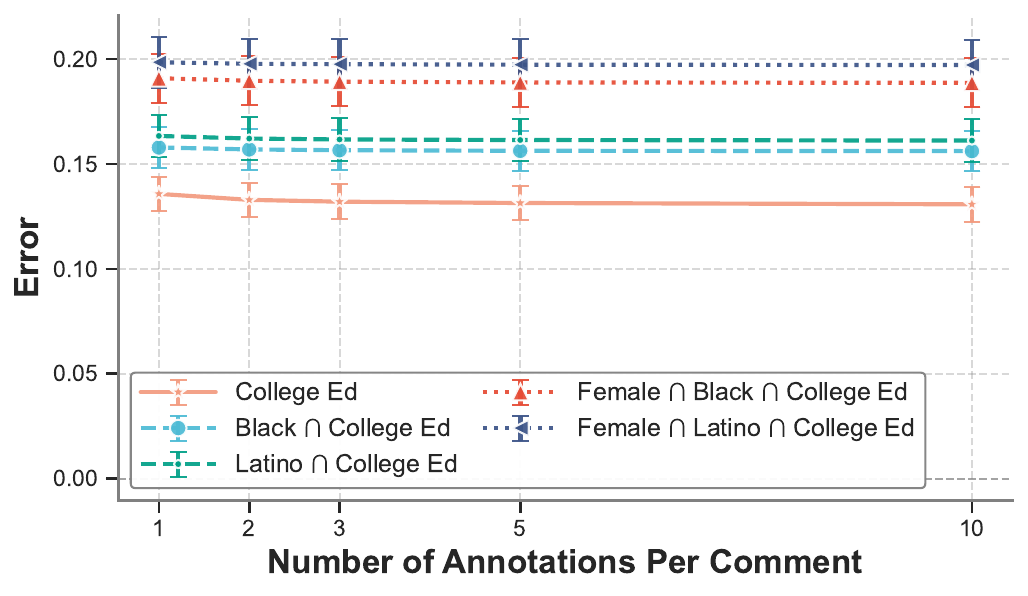}
        \caption{GPT-5.1}
    \end{subfigure}
    \hfill
    \begin{subfigure}[c]{0.24\textwidth}
        \centering
        \includegraphics[width=\linewidth]{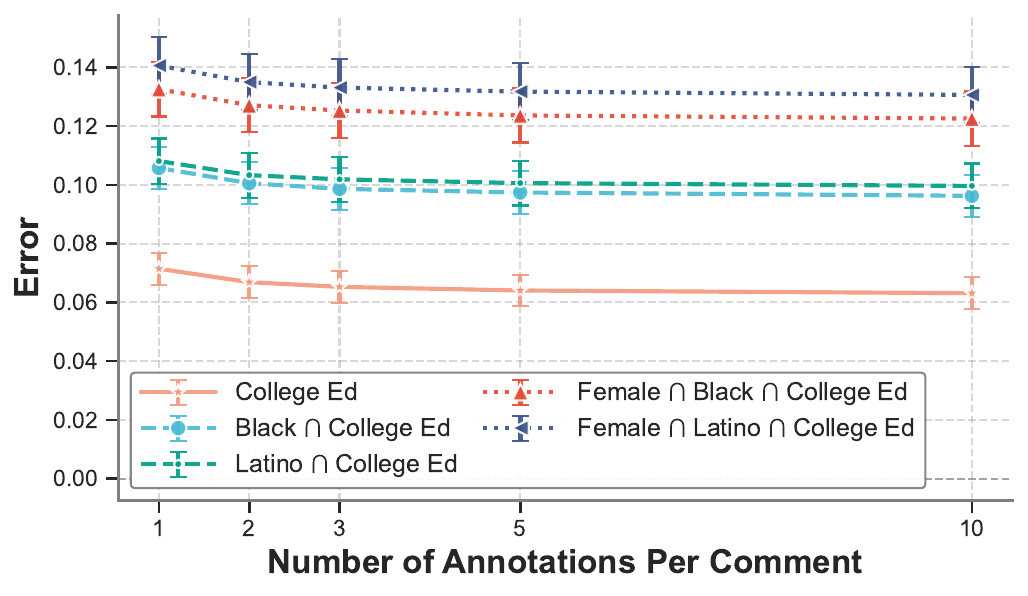}
        \caption{GPT-5.1-R=H}
    \end{subfigure}

    \vspace{0.6em}
    \begin{subfigure}[c]{0.24\textwidth}
        \centering
        \includegraphics[width=\linewidth]{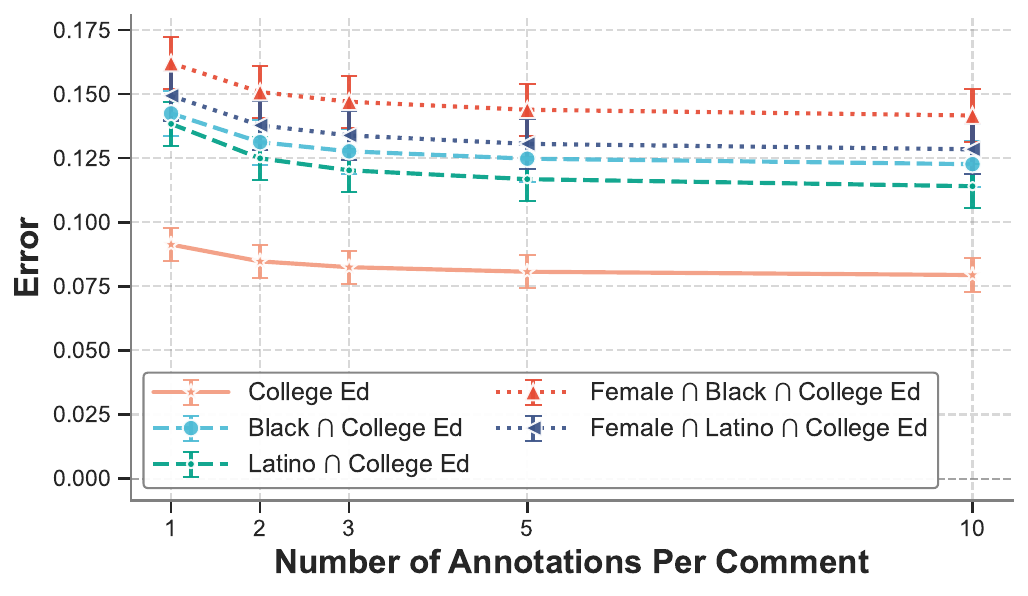}
        \caption{Gemma3:1B}
    \end{subfigure}
    \hfill
    \begin{subfigure}[c]{0.24\textwidth}
        \centering
        \includegraphics[width=\linewidth]{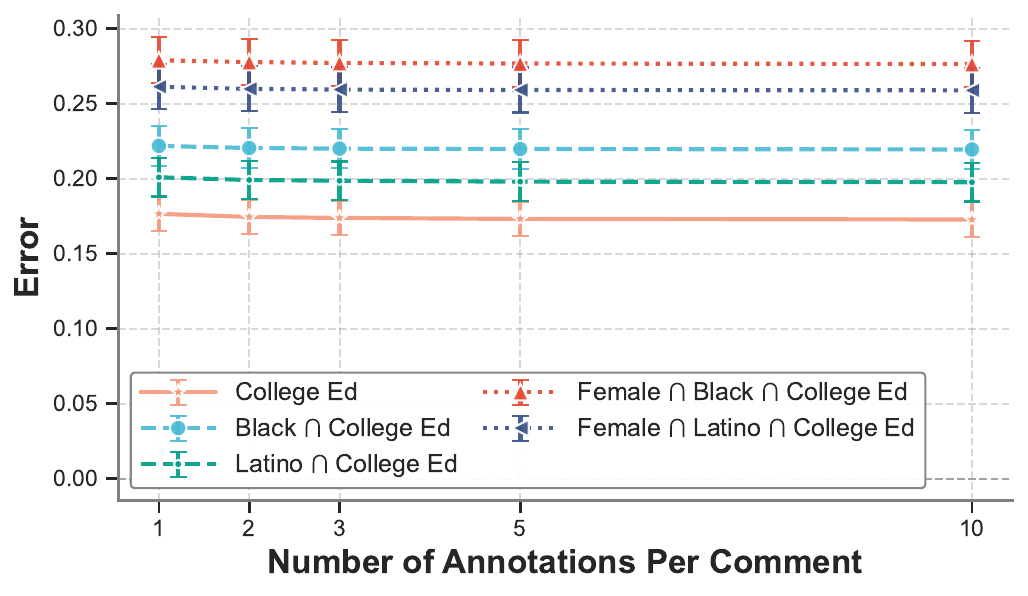}
        \caption{Gemma3:12B}
    \end{subfigure}
    \hfill
    \begin{subfigure}[c]{0.24\textwidth}
        \centering
        \includegraphics[width=\linewidth]{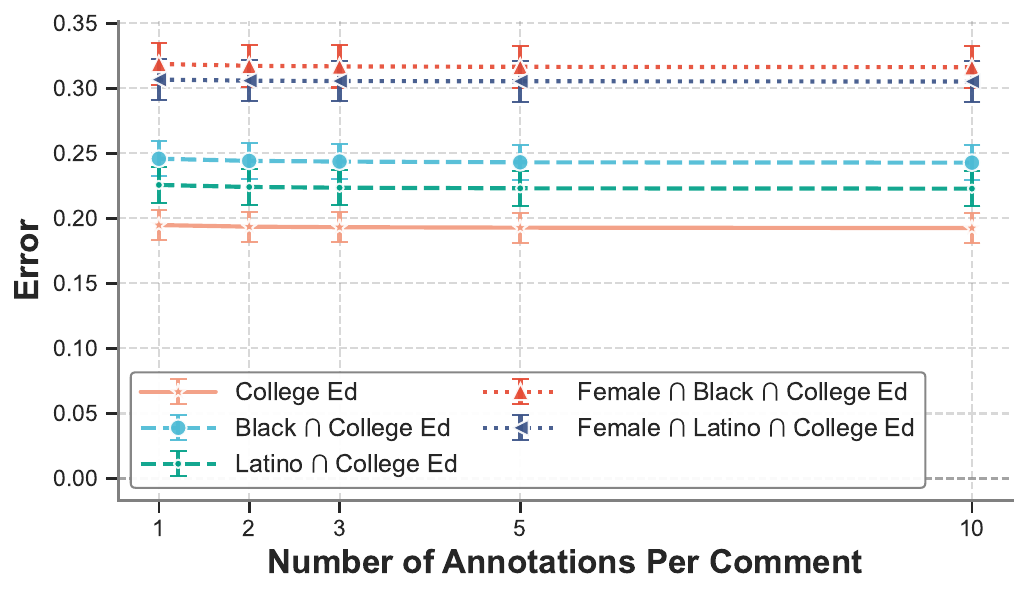}
        \caption{Gemma3:27B}
    \end{subfigure}
    \hfill
    \begin{subfigure}[c]{0.24\textwidth}
        \centering
        \includegraphics[width=\linewidth]{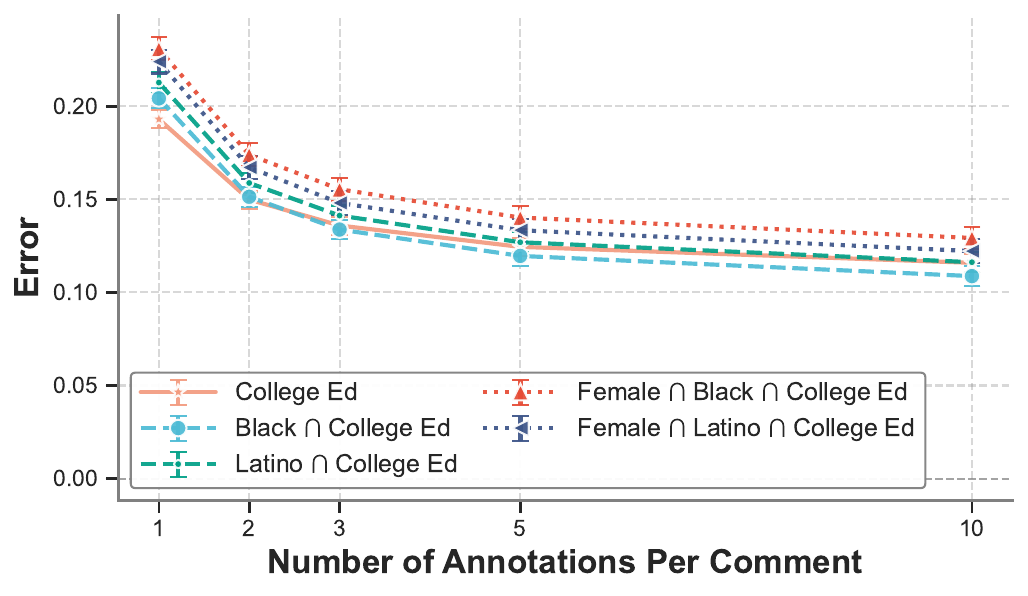}
        \caption{DeepSeek-R1:1.5B}
    \end{subfigure}

    \vspace{0.6em}
    \begin{subfigure}[c]{0.24\textwidth}
        \centering
        \includegraphics[width=\linewidth]{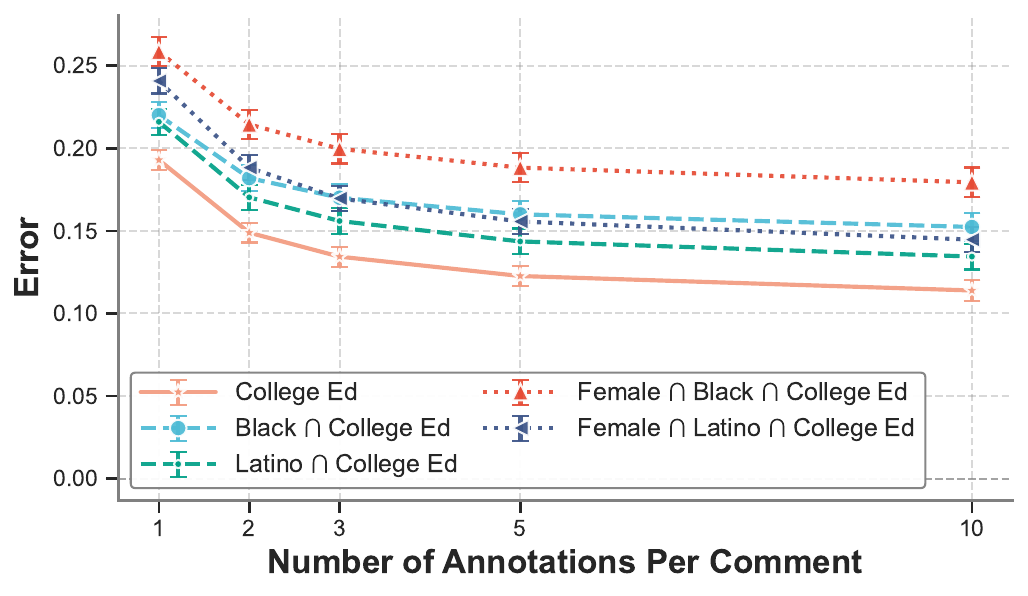}
        \caption{DeepSeek-R1:7B}
    \end{subfigure}
    \hfill
    \begin{subfigure}[c]{0.24\textwidth}
        \centering
        \includegraphics[width=\linewidth]{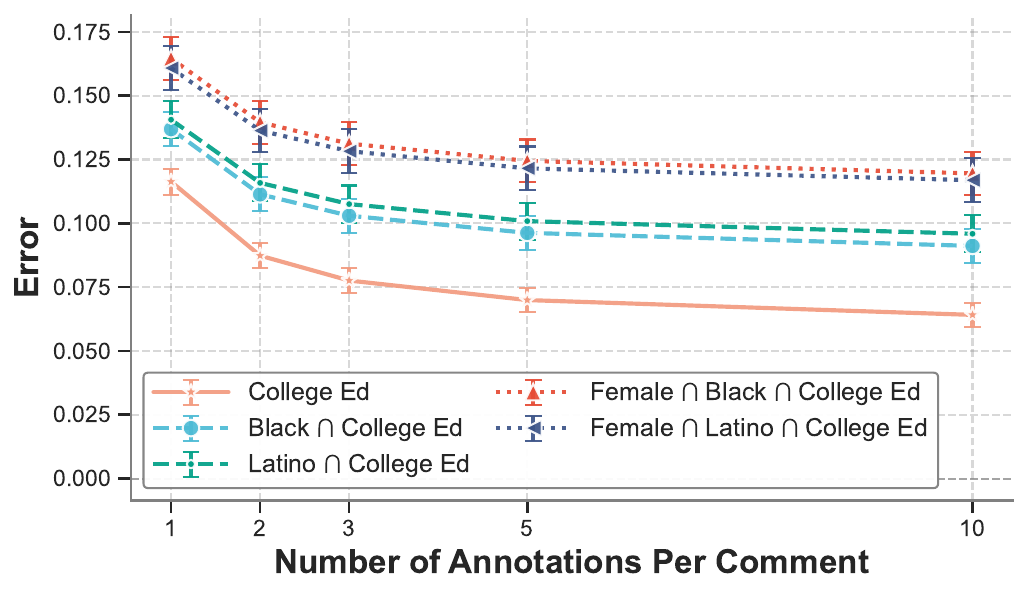}
        \caption{DeepSeek-R1:32B}
    \end{subfigure}
    \hfill
    \begin{subfigure}[c]{0.24\textwidth}
        \centering
        \includegraphics[width=\linewidth]{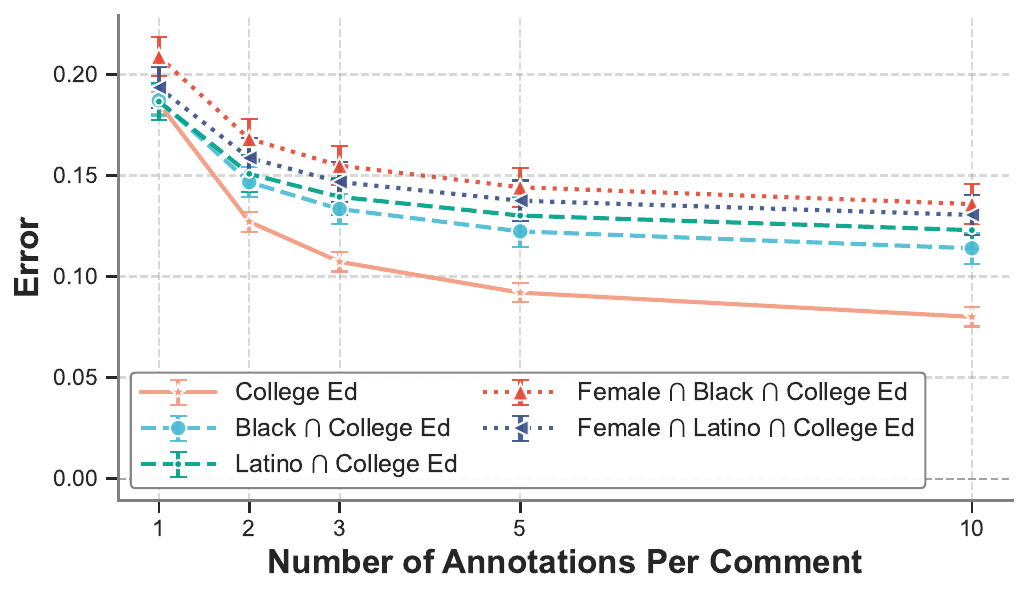}
        \caption{Qwen3:1.7B}
    \end{subfigure}
    \hfill
    \begin{subfigure}[c]{0.24\textwidth}
        \centering
        \includegraphics[width=\linewidth]{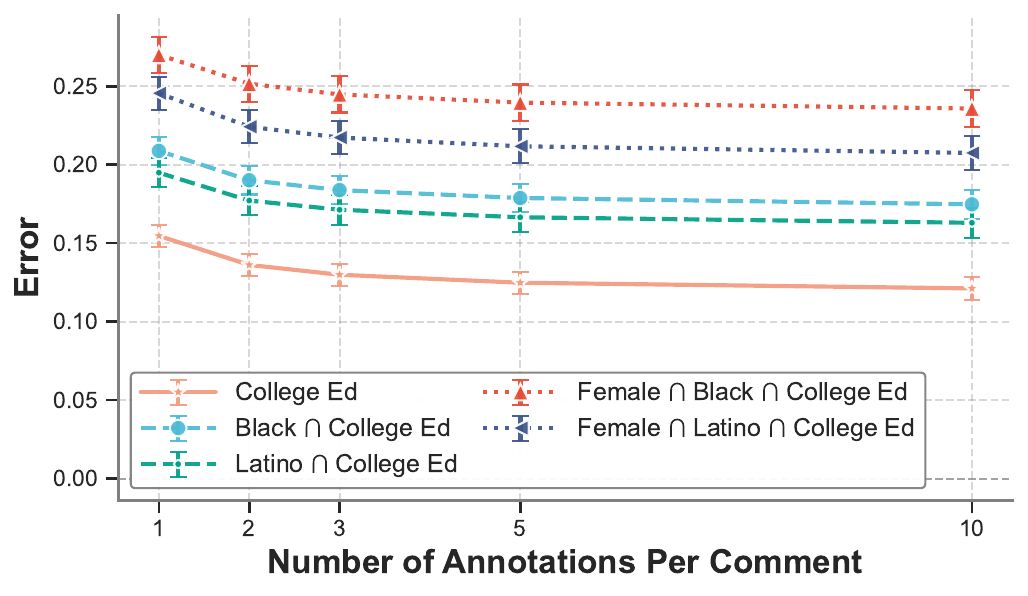}
        \caption{Qwen3:8B}
    \end{subfigure}

    \vspace{0.6em}
    \begin{subfigure}[c]{0.24\textwidth}
        \centering
        \includegraphics[width=\linewidth]{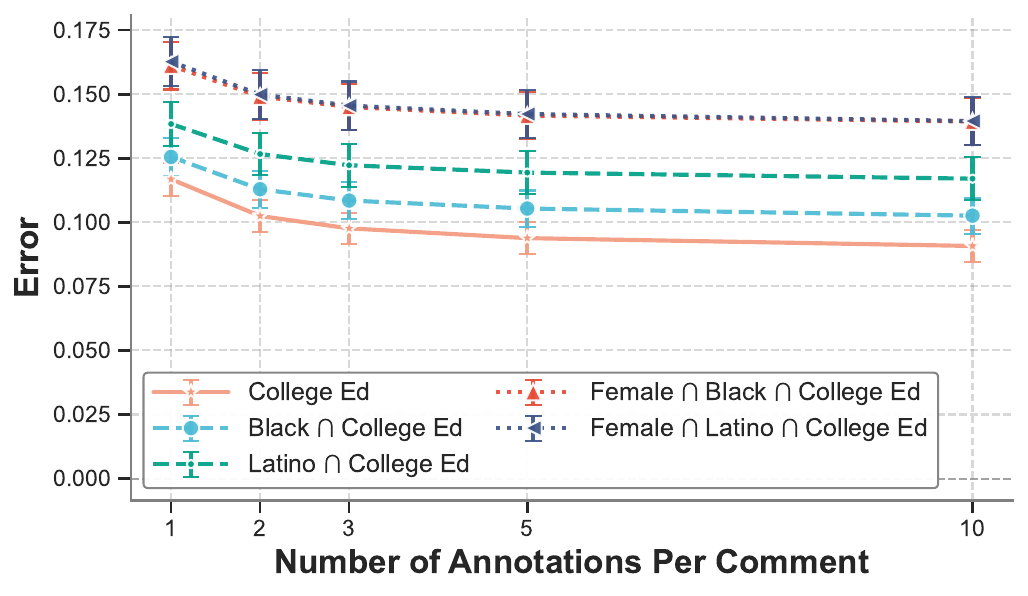}
        \caption{Qwen3:32B}
    \end{subfigure}
    \hfill
    \begin{subfigure}[c]{0.24\textwidth}
        \centering
        \includegraphics[width=\linewidth]{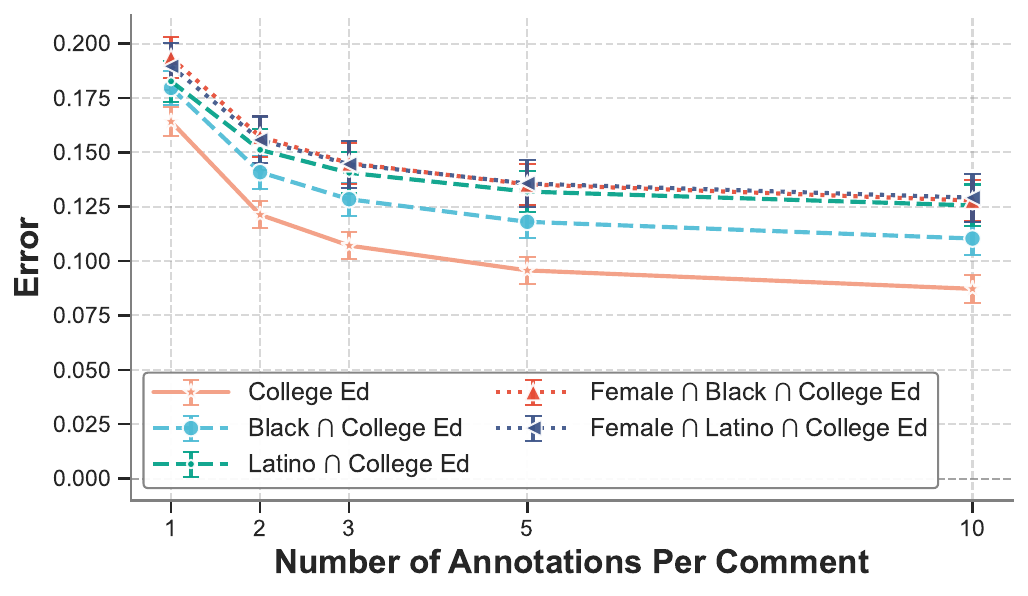}
        \caption{Qwen3-R:1.7B}
    \end{subfigure}
    \hfill
    \begin{subfigure}[c]{0.24\textwidth}
        \centering
        \includegraphics[width=\linewidth]{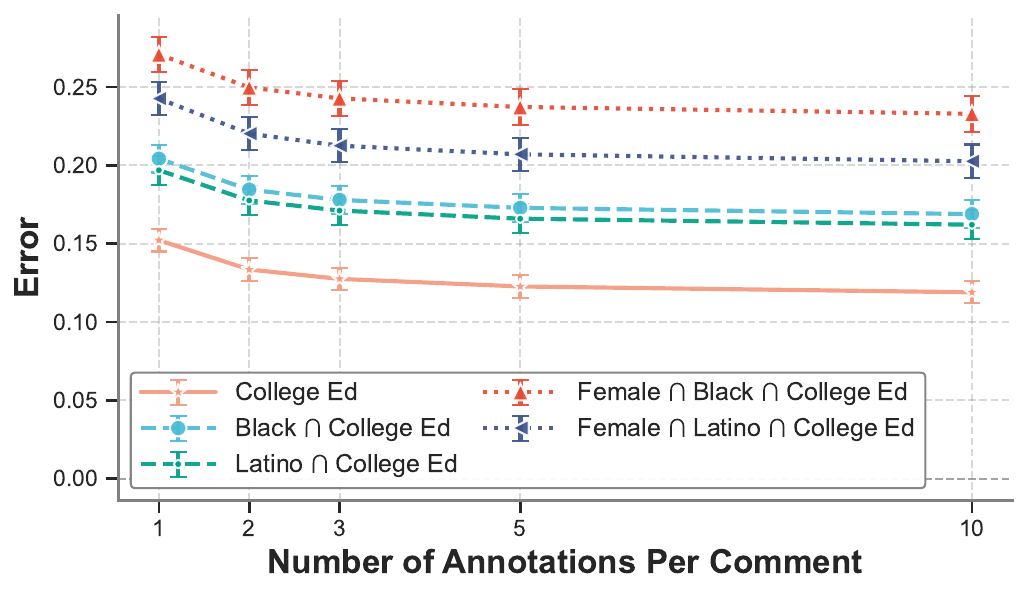}
        \caption{Qwen3-R:8B}
    \end{subfigure}
    \hfill
    \begin{subfigure}[c]{0.24\textwidth}
        \centering
        \includegraphics[width=\linewidth]{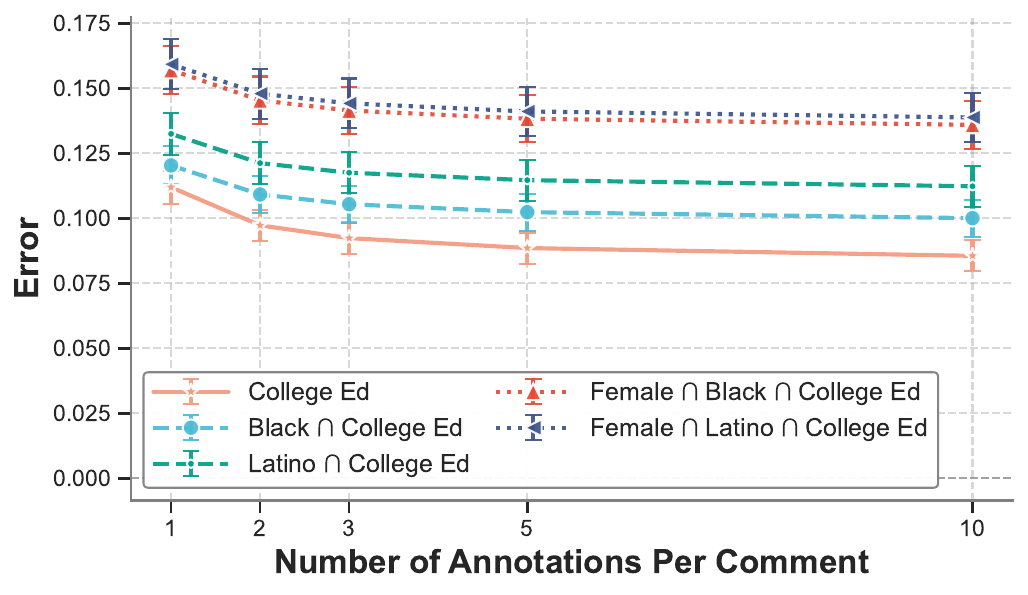}
        \caption{Qwen3-R:32B}
    \end{subfigure}

    \caption{\textbf{Specificity cascade (college-educated path)} on DICES.
    Target subgroups become progressively more specific: college-educated people $\to$ college-educated women $\to$ college-educated Black/Latino women.
    MSE increases with specificity consistently across all 16 models, validating H3.}
    \label{fig:dices_good_example}
\end{figure*}

\begin{figure*}[t]
    \centering
    \begin{subfigure}[c]{0.24\textwidth}
        \centering
        \includegraphics[width=\linewidth]{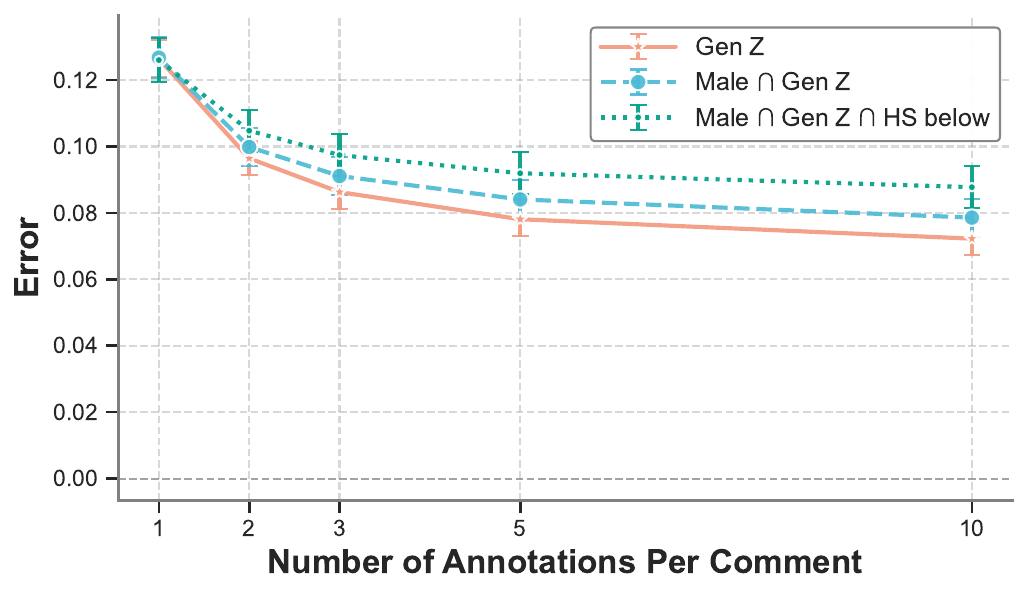}
        \caption{GPT-OSS:20B}
    \end{subfigure}
    \hfill
    \begin{subfigure}[c]{0.24\textwidth}
        \centering
        \includegraphics[width=\linewidth]{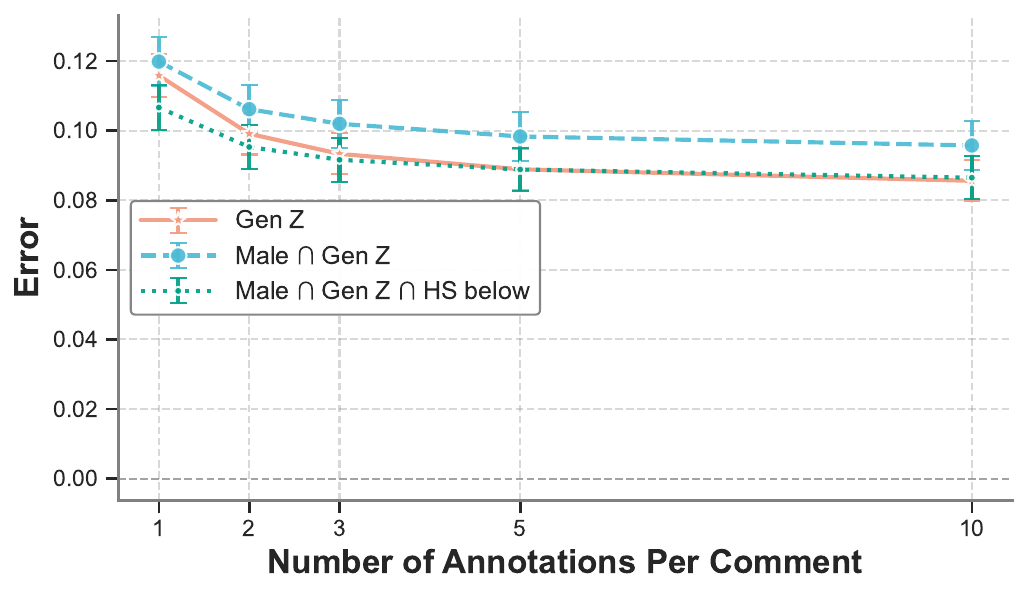}
        \caption{GPT-OSS:120B}
    \end{subfigure}
    \hfill
    \begin{subfigure}[c]{0.24\textwidth}
        \centering
        \includegraphics[width=\linewidth]{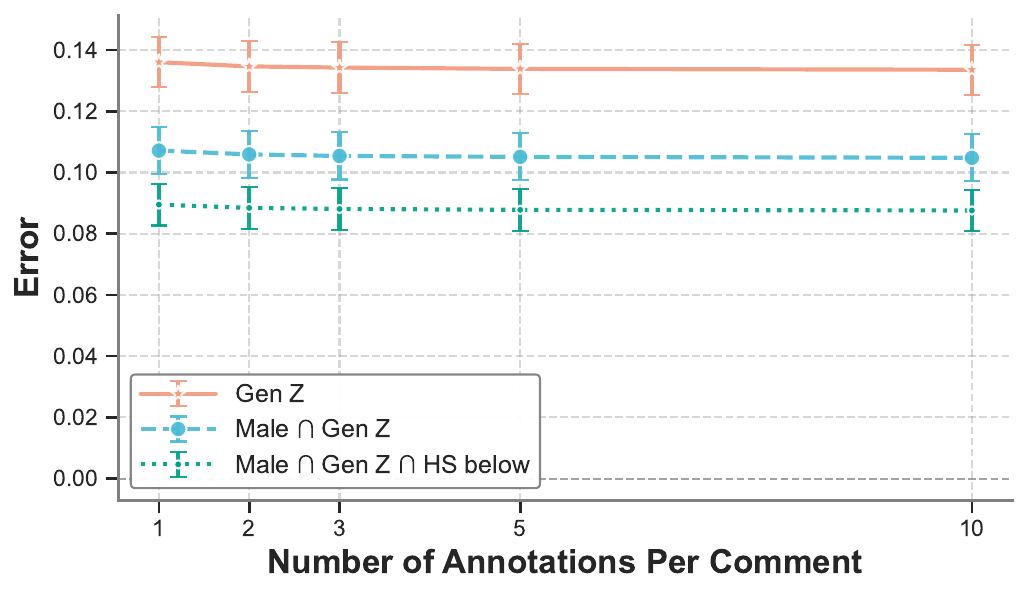}
        \caption{GPT-5.1}
    \end{subfigure}
    \hfill
    \begin{subfigure}[c]{0.24\textwidth}
        \centering
        \includegraphics[width=\linewidth]{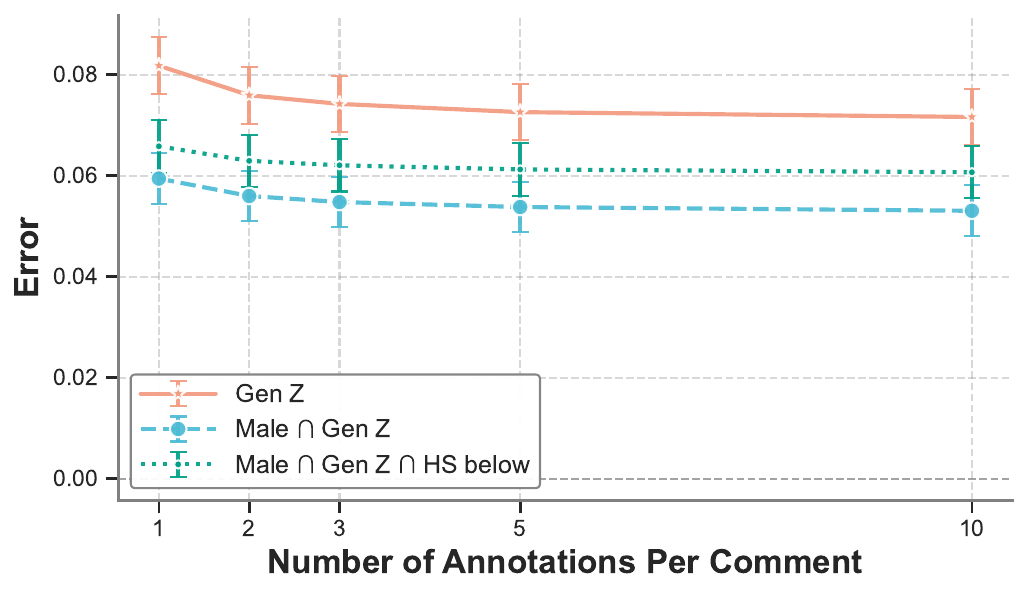}
        \caption{GPT-5.1-R=H}
    \end{subfigure}

    \vspace{0.6em}
    \begin{subfigure}[c]{0.24\textwidth}
        \centering
        \includegraphics[width=\linewidth]{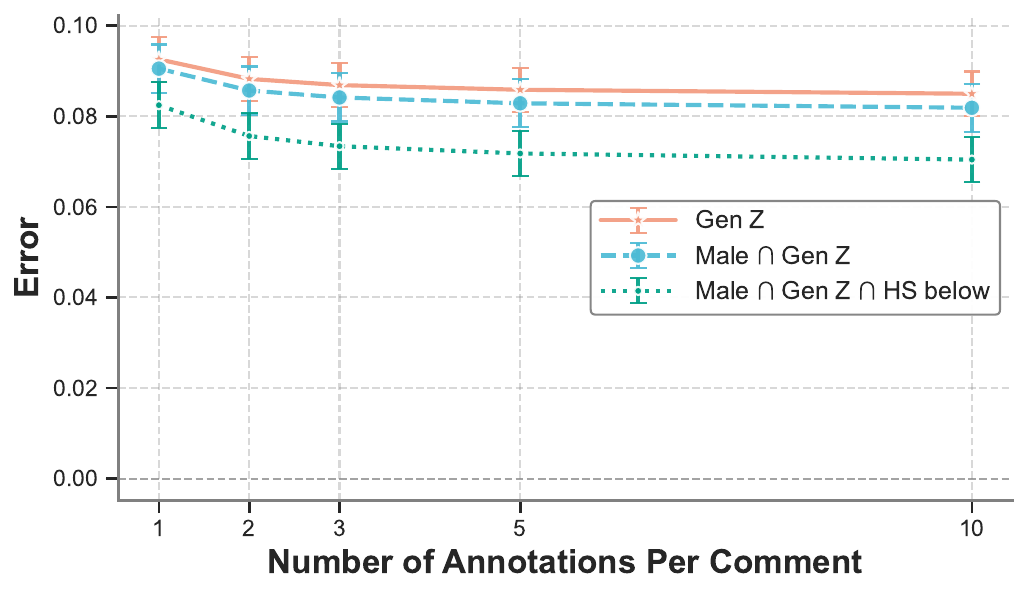}
        \caption{Gemma3:1B}
    \end{subfigure}
    \hfill
    \begin{subfigure}[c]{0.24\textwidth}
        \centering
        \includegraphics[width=\linewidth]{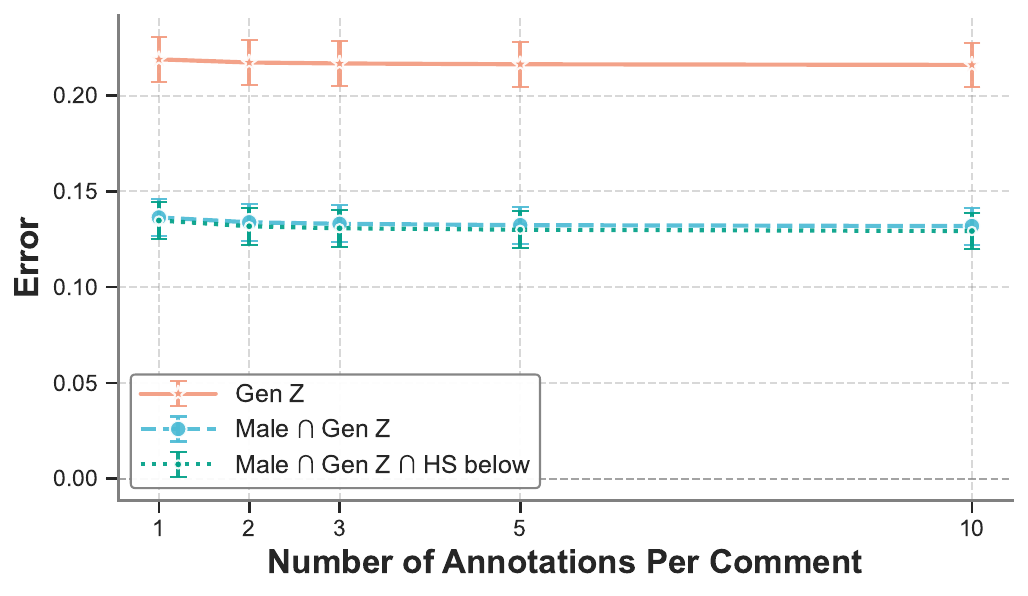}
        \caption{Gemma3:12B}
    \end{subfigure}
    \hfill
    \begin{subfigure}[c]{0.24\textwidth}
        \centering
        \includegraphics[width=\linewidth]{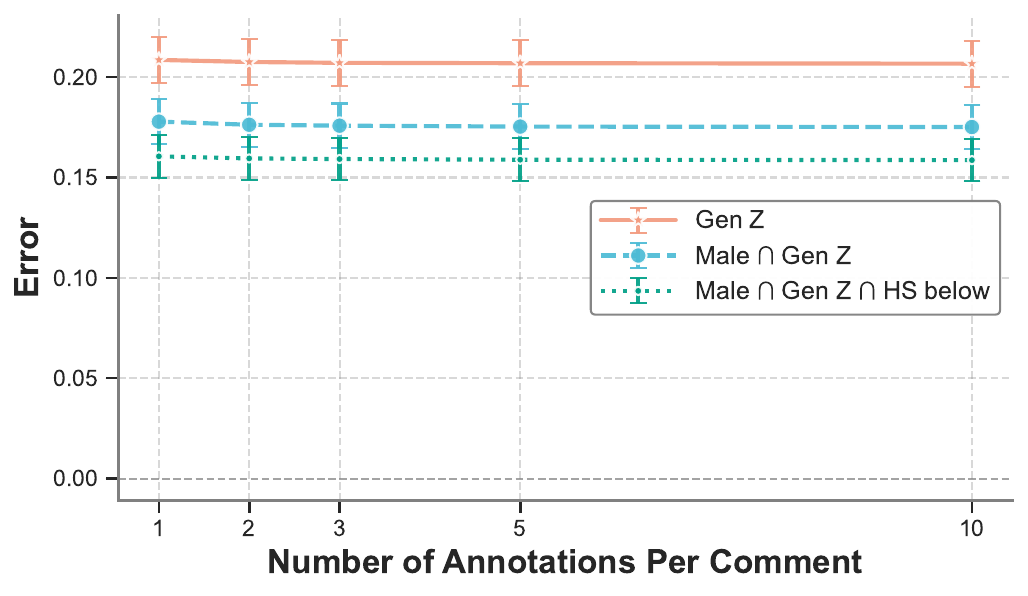}
        \caption{Gemma3:27B}
    \end{subfigure}
    \hfill
    \begin{subfigure}[c]{0.24\textwidth}
        \centering
        \includegraphics[width=\linewidth]{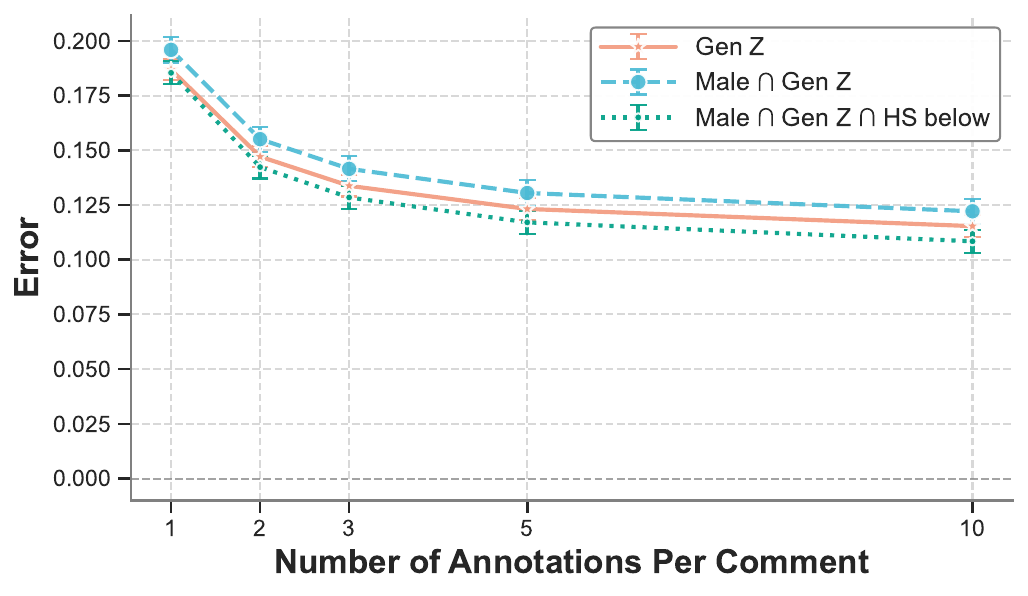}
        \caption{DeepSeek-R1:1.5B}
    \end{subfigure}

    \vspace{0.6em}
    \begin{subfigure}[c]{0.24\textwidth}
        \centering
        \includegraphics[width=\linewidth]{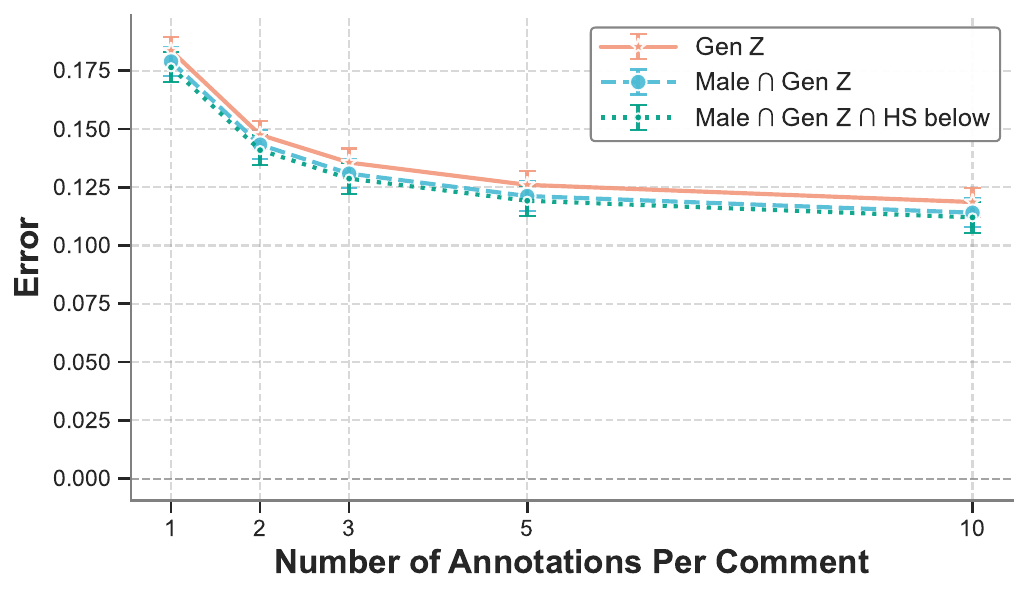}
        \caption{DeepSeek-R1:7B}
    \end{subfigure}
    \hfill
    \begin{subfigure}[c]{0.24\textwidth}
        \centering
        \includegraphics[width=\linewidth]{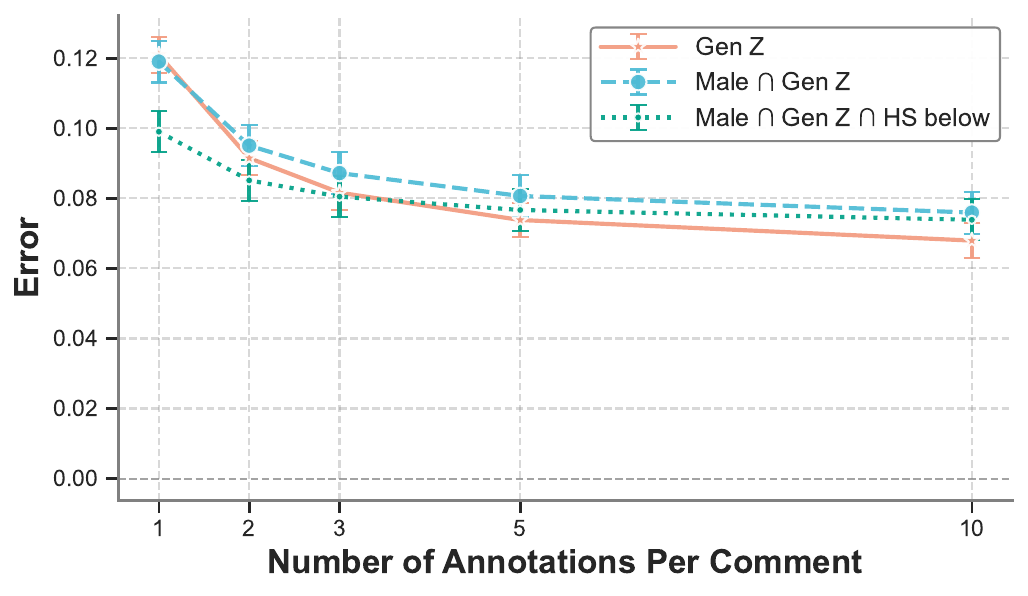}
        \caption{DeepSeek-R1:32B}
    \end{subfigure}
    \hfill
    \begin{subfigure}[c]{0.24\textwidth}
        \centering
        \includegraphics[width=\linewidth]{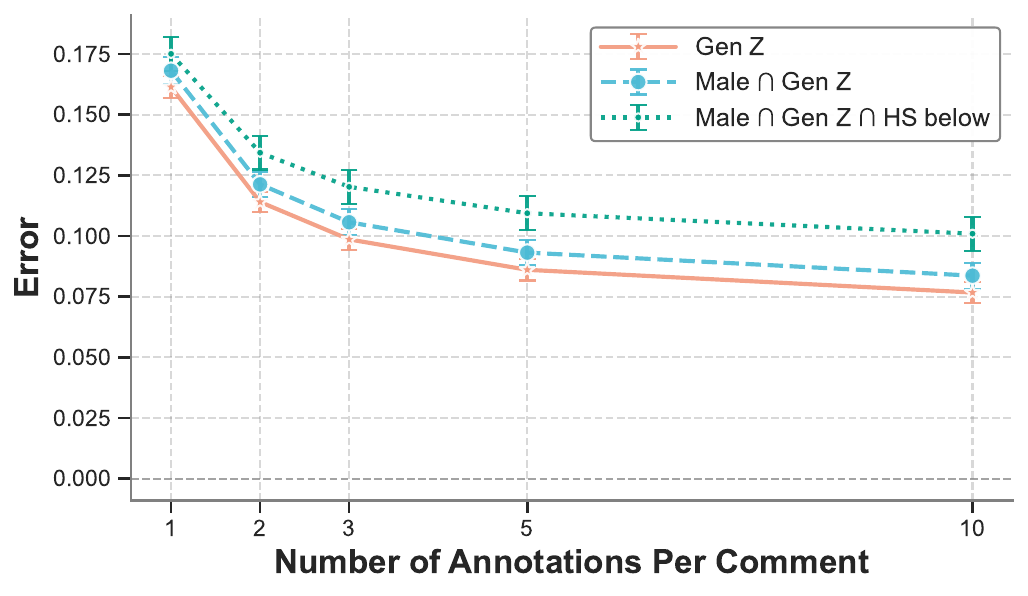}
        \caption{Qwen3:1.7B}
    \end{subfigure}
    \hfill
    \begin{subfigure}[c]{0.24\textwidth}
        \centering
        \includegraphics[width=\linewidth]{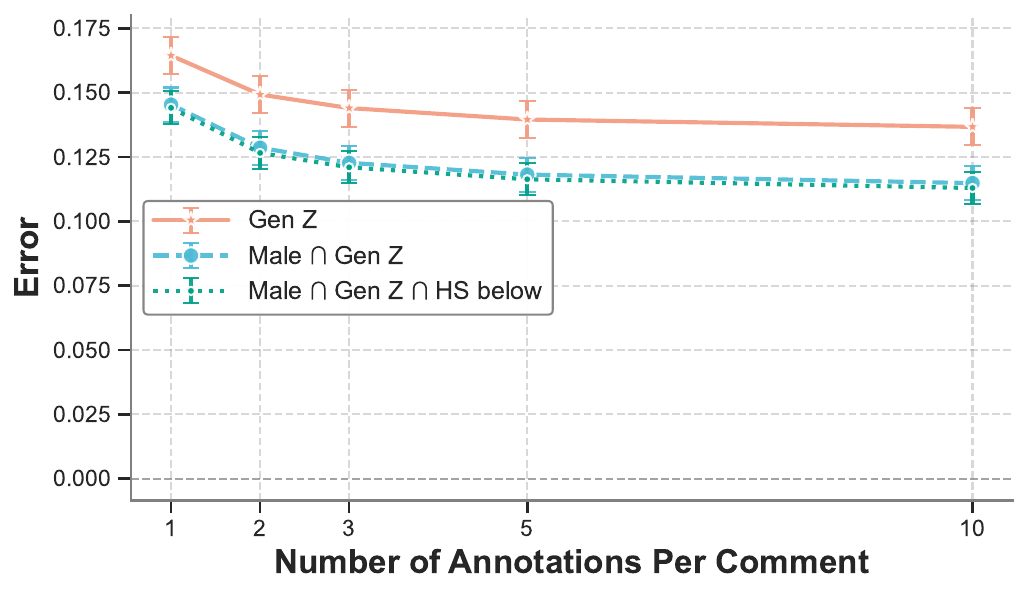}
        \caption{Qwen3:8B}
    \end{subfigure}

    \vspace{0.6em}
    \begin{subfigure}[c]{0.24\textwidth}
        \centering
        \includegraphics[width=\linewidth]{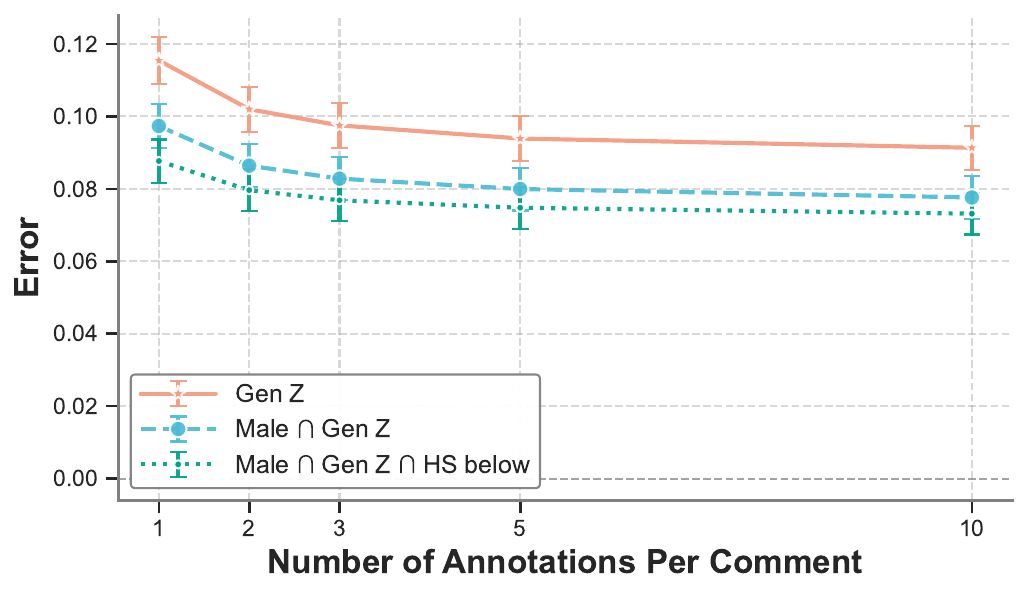}
        \caption{Qwen3:32B}
    \end{subfigure}
    \hfill
    \begin{subfigure}[c]{0.24\textwidth}
        \centering
        \includegraphics[width=\linewidth]{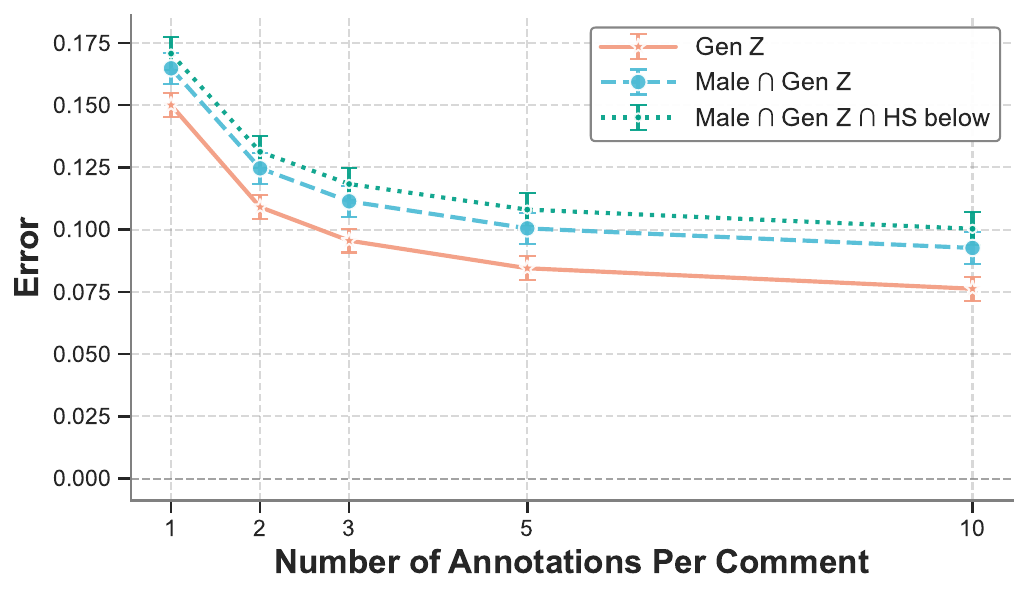}
        \caption{Qwen3-R:1.7B}
    \end{subfigure}
    \hfill
    \begin{subfigure}[c]{0.24\textwidth}
        \centering
        \includegraphics[width=\linewidth]{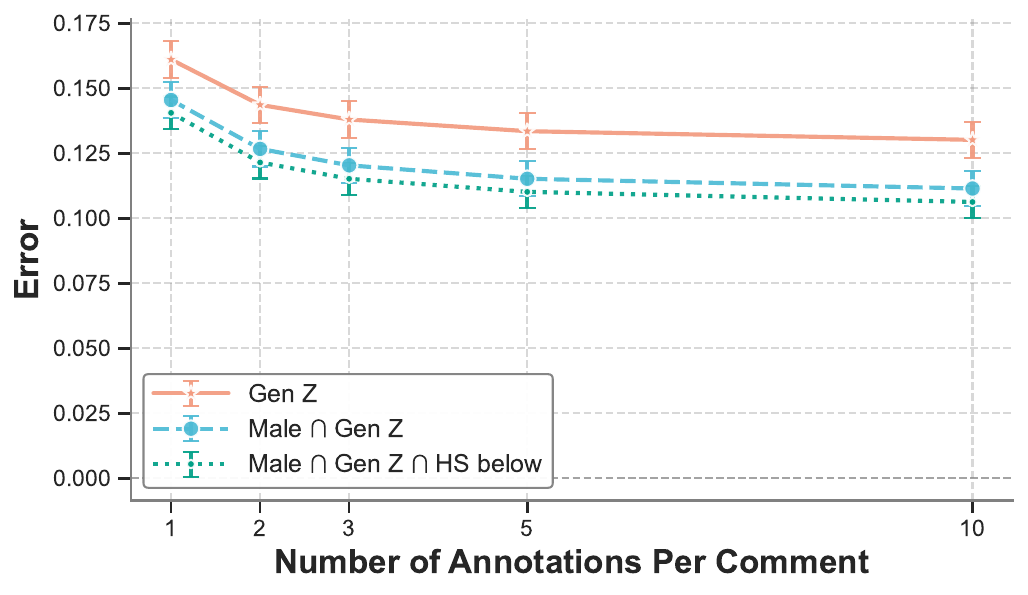}
        \caption{Qwen3-R:8B}
    \end{subfigure}
    \hfill
    \begin{subfigure}[c]{0.24\textwidth}
        \centering
        \includegraphics[width=\linewidth]{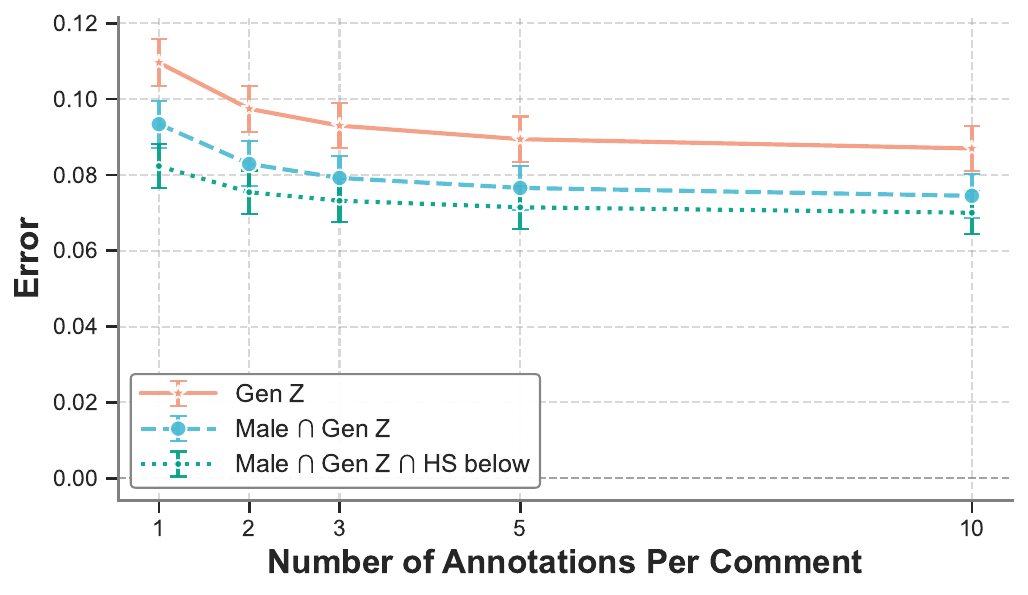}
        \caption{Qwen3-R:32B}
    \end{subfigure}
    \caption{\textbf{Specificity cascade (Gen~Z path)} on DICES.
    Target subgroups: Gen~Z people $\to$ Gen~Z men $\to$ Gen~Z men with $\le$high school education.
    Unlike the college-educated cascade (Figure~\ref{fig:dices_good_example}), several models show \textit{decreasing} error with specificity (c, e--g, l--m, o--p), indicating that demographic specificity is not always a reliable proxy for representation difficulty.}
    \label{fig:dices_bad_example}
\end{figure*}

\begin{figure*}[t]
    \begin{subfigure}[c]{0.49\textwidth}
        \centering
        \includegraphics[width=\linewidth]{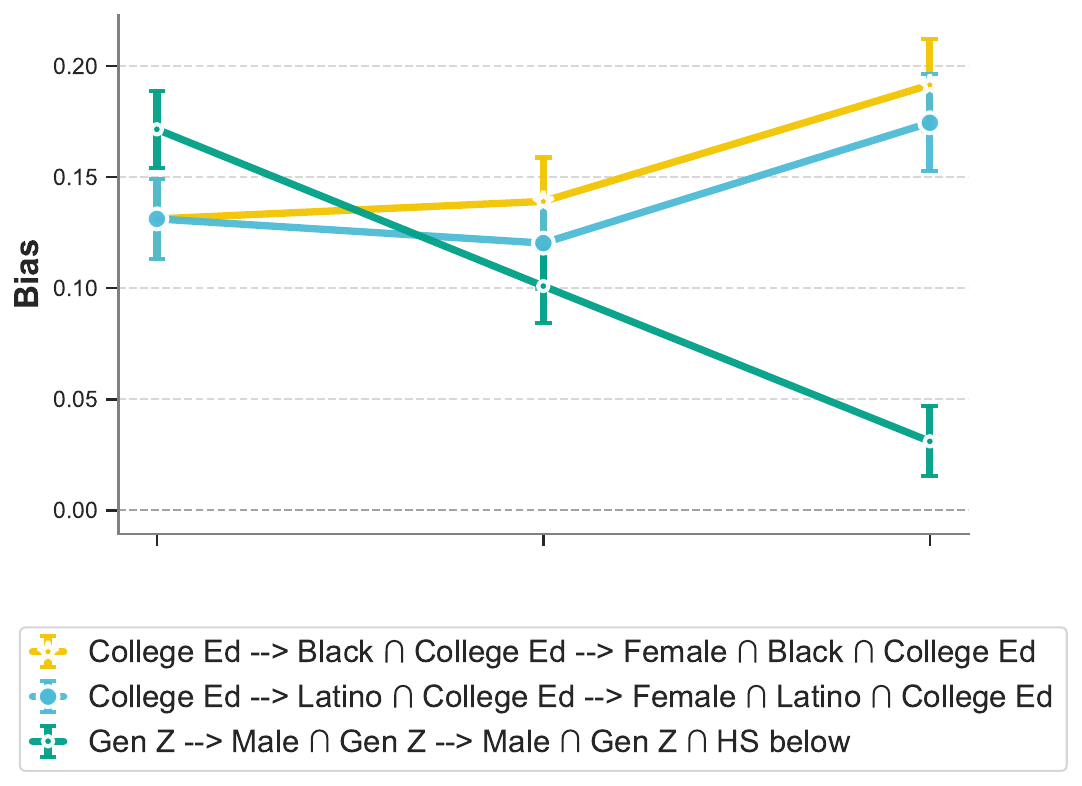}
        \caption{Bias: GPT-5.1}
    \end{subfigure}
    \hfill
    \begin{subfigure}[c]{0.49\textwidth}
        \centering
        \includegraphics[width=\linewidth]{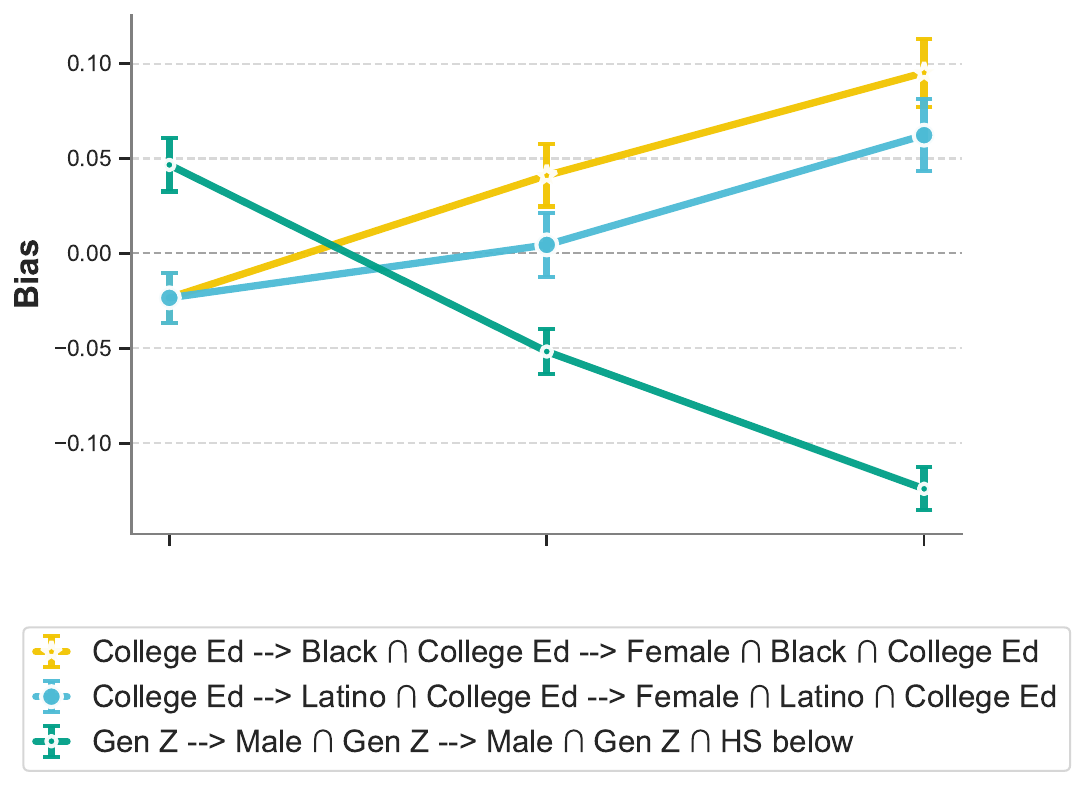}
        \caption{Bias: GPT-5.1-R=H}
    \end{subfigure}
    \vspace{0.6em}

    \begin{subfigure}[c]{0.49\textwidth}
        \centering
        \includegraphics[width=\linewidth]{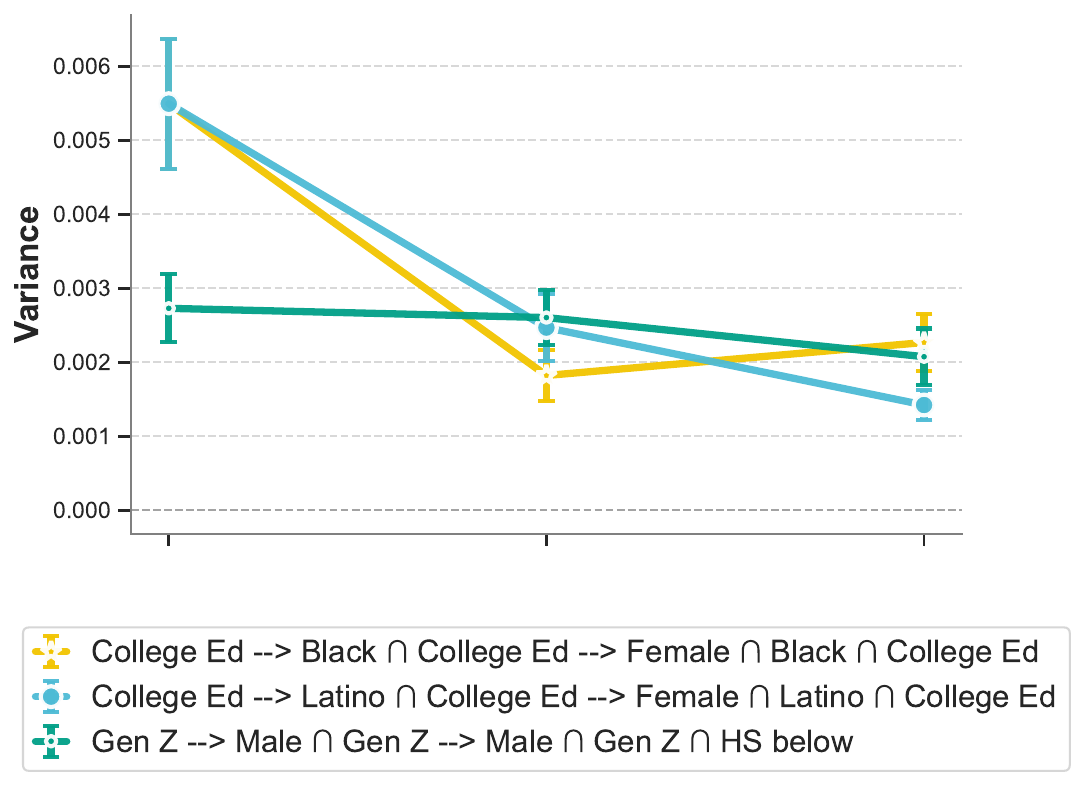}
        \caption{Variance: GPT-5.1}
    \end{subfigure}
    \hfill
    \begin{subfigure}[c]{0.49\textwidth}
        \centering
        \includegraphics[width=\linewidth]{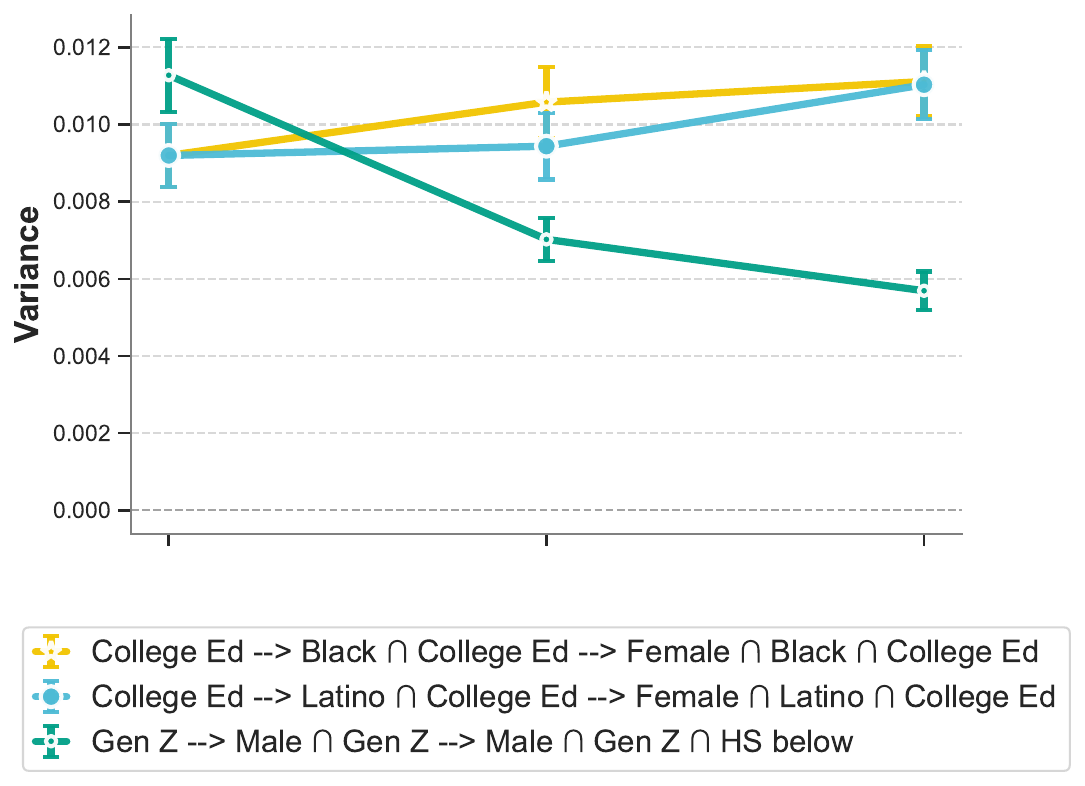}
        \caption{Variance: GPT-5.1-R=H}
    \end{subfigure}
    \caption{\textbf{Bias--variance diagnostic for specificity cascades} ($k{=}1$, GPT-5.1 $\pm$ reasoning).
    X-axis shows progressively more specific subgroups from both cascades (Figures~\ref{fig:dices_good_example}--\ref{fig:dices_bad_example}).
    For the college-educated cascade (left side of each panel), error growth is driven by increasing bias, consistent with representation mismatch.
    For the Gen~Z cascade (right side), both bias and variance \textit{decrease}, suggesting the more specific group is better represented in training data.}
    \label{fig:dices_bias_var}
\end{figure*}

\begin{figure*}[t]
    \centering
    \begin{subfigure}[c]{0.48\textwidth}
        \centering
        \includegraphics[width=\linewidth]{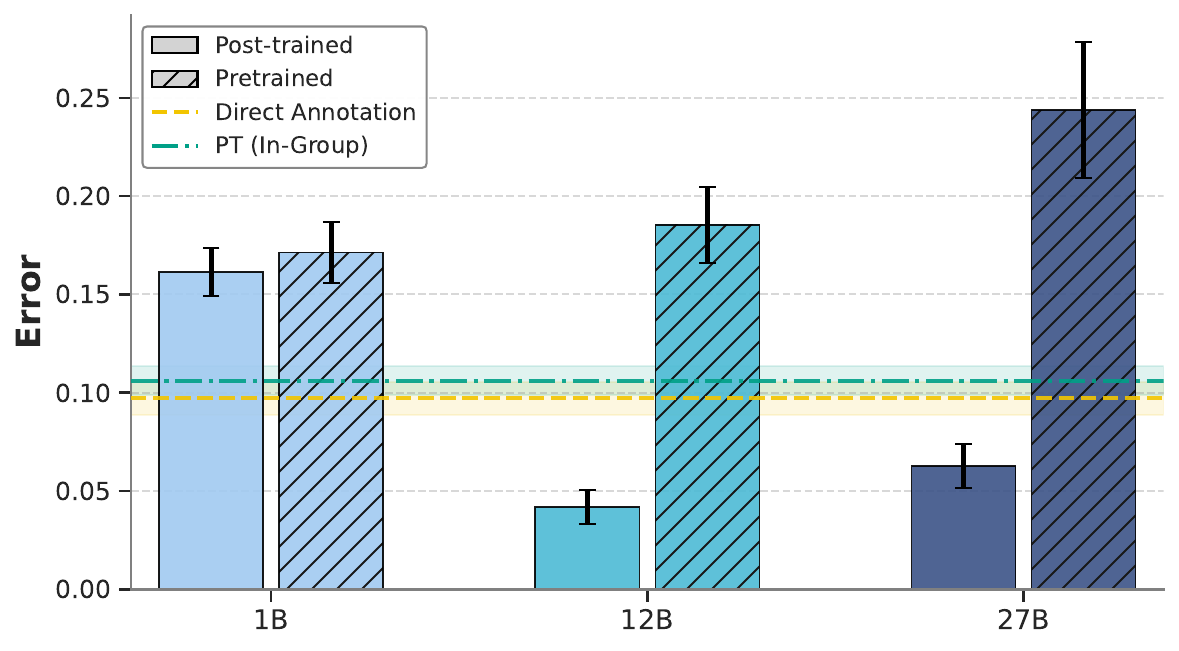}
    \end{subfigure}
    \hfill
    \begin{subfigure}[c]{0.48\textwidth}
        \centering
        \includegraphics[width=\linewidth]{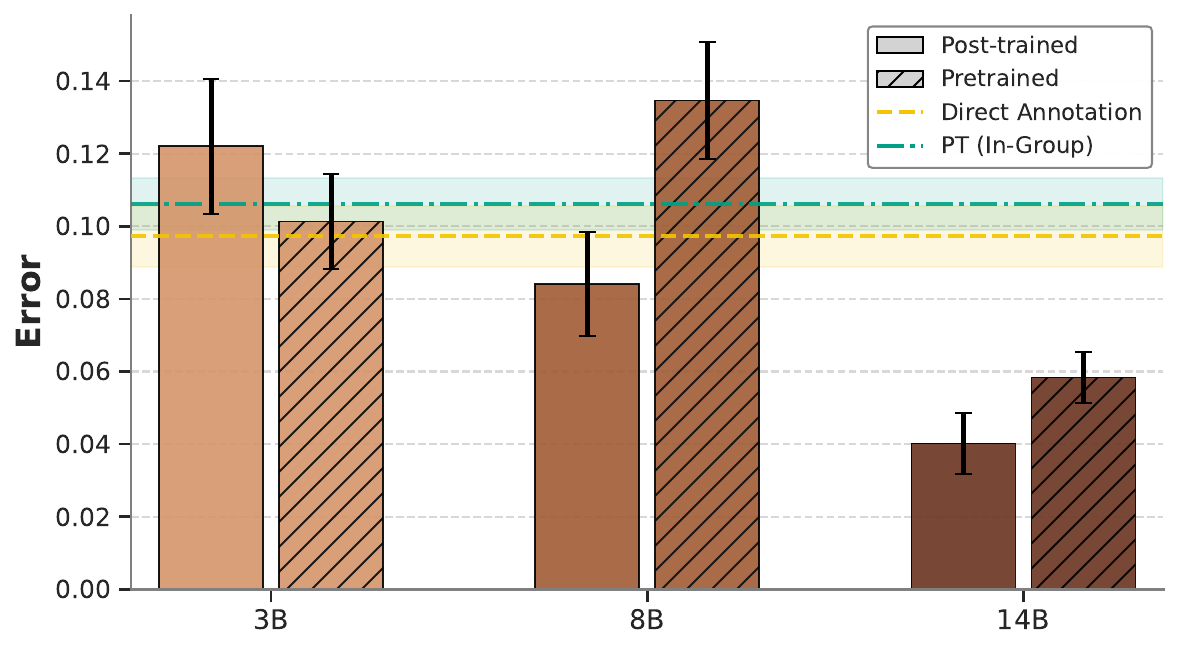}
    \end{subfigure}

    \vspace{0.4em}

    \begin{subfigure}[c]{0.48\textwidth}
        \centering
        \includegraphics[width=\linewidth]{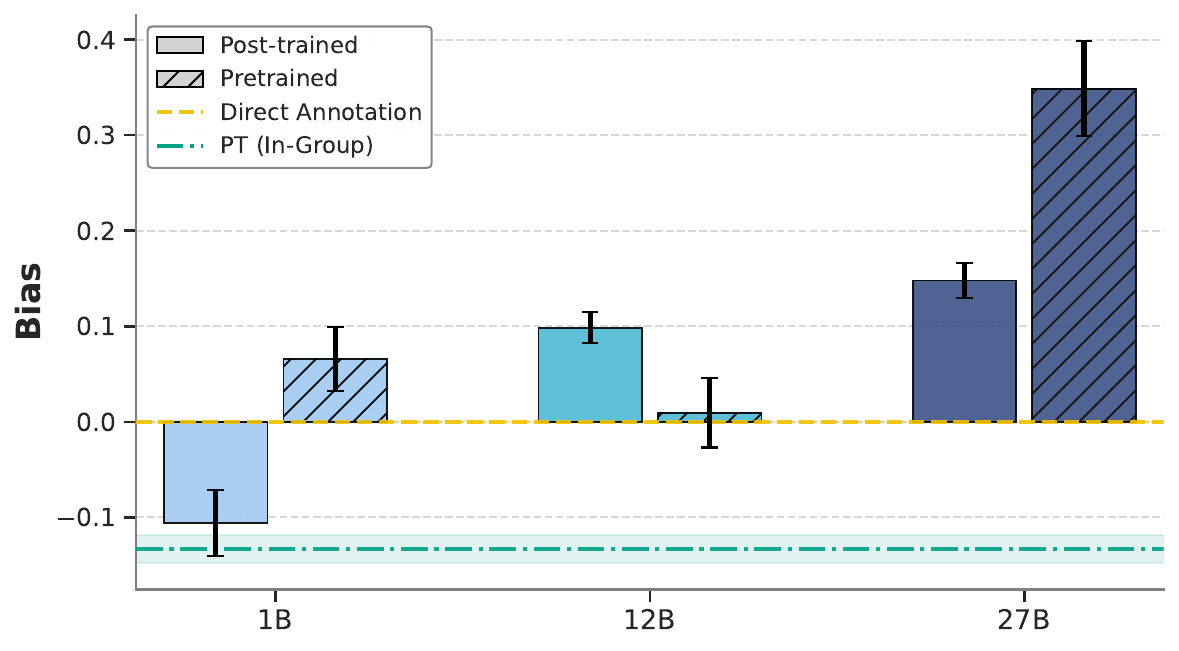}
    \end{subfigure}
    \hfill
    \begin{subfigure}[c]{0.48\textwidth}
        \centering
        \includegraphics[width=\linewidth]{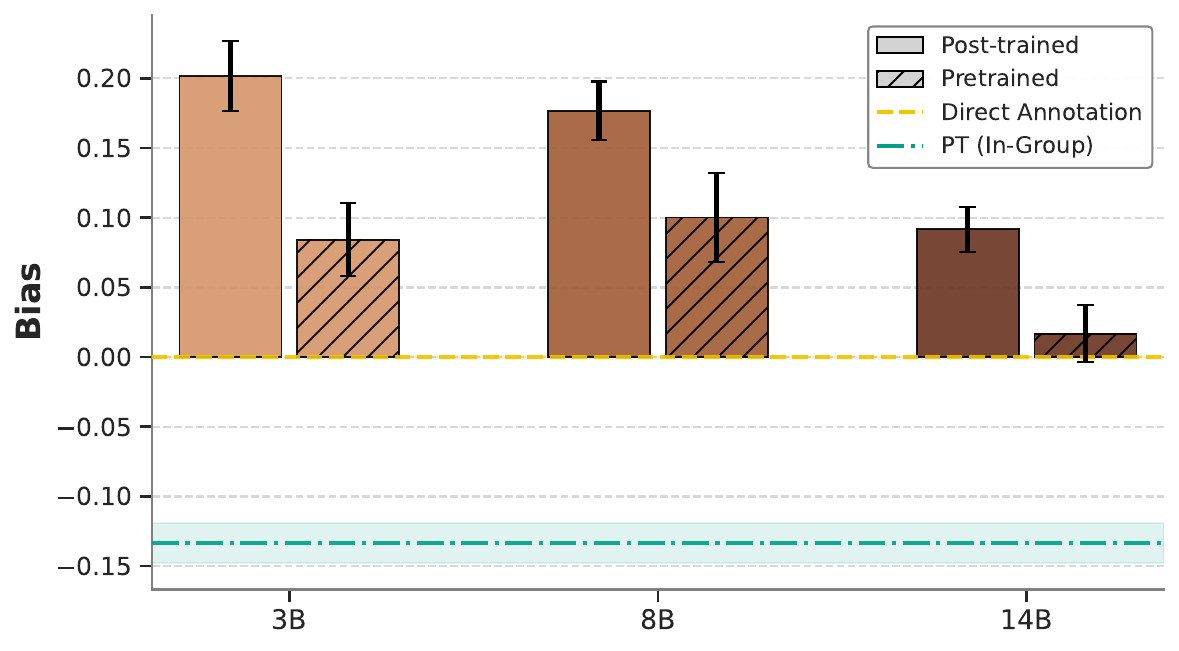}
    \end{subfigure}

    \vspace{0.4em}

    \begin{subfigure}[c]{0.48\textwidth}
        \centering
        \includegraphics[width=\linewidth]{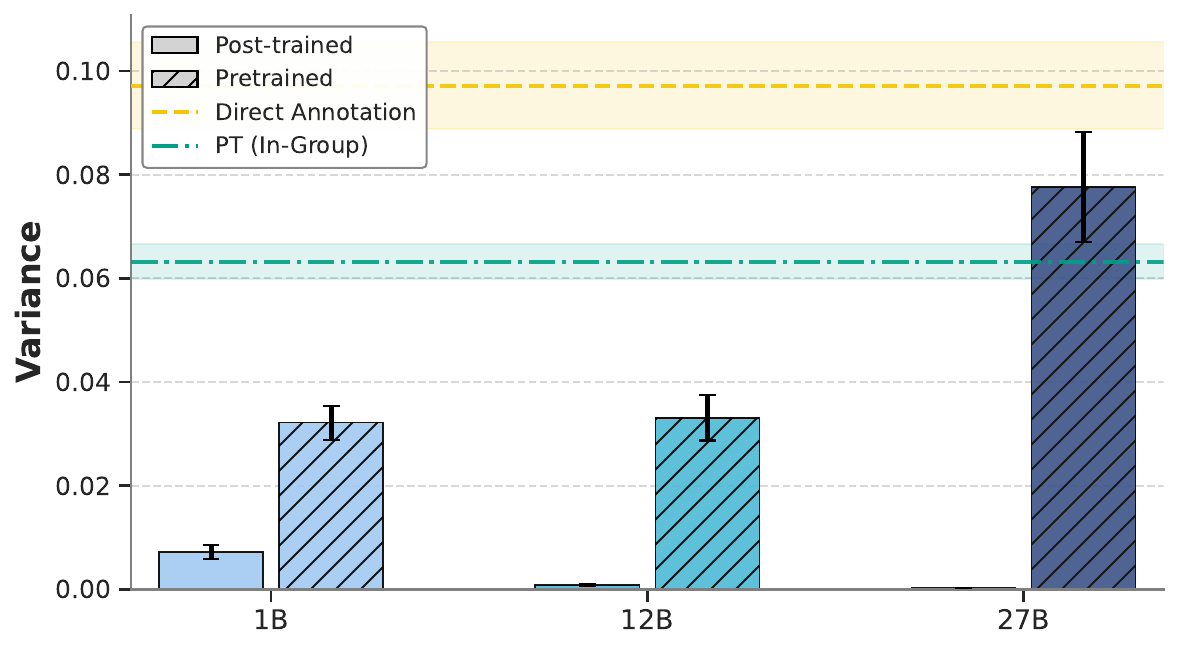}
        \caption{Gemma 3}
    \end{subfigure}
    \hfill
    \begin{subfigure}[c]{0.48\textwidth}
        \centering
        \includegraphics[width=\linewidth]{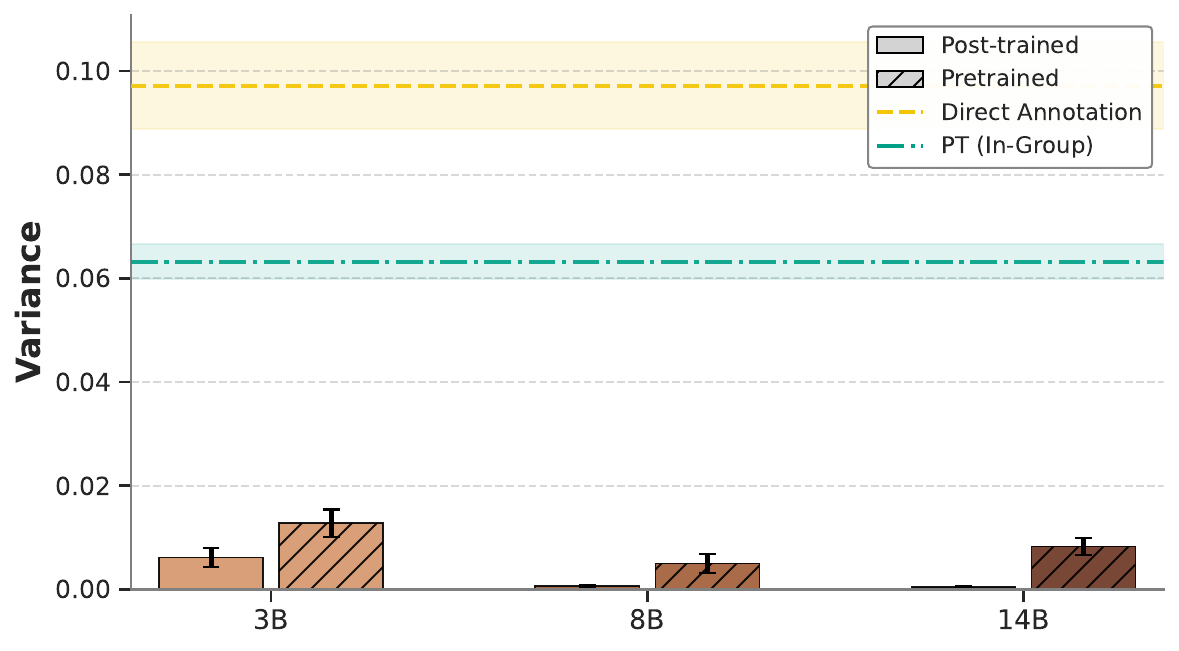}
        \caption{Ministral 3}
    \end{subfigure}

    \caption{\textbf{Pretrained vs.\ post-trained models} ($k{=}1$, female subgroup).
    Post-trained models exhibit dramatically lower variance, while pretrained models show lower absolute bias at matched sizes.
    }
    \label{fig:pt_it_panel}
\end{figure*}

\section{Differential Perspective-Taking}
\label{sec:differential_pt}

A natural concern with evaluating $\hat f(x,g)$ against $f^\ast(x,g)$ is that strong global priors may mask a lack of genuine group sensitivity.
An estimator may achieve low MSE by learning reasonable base rates without meaningfully differentiating between groups on a per-item basis.
To probe whether an estimator truly conditions on group identity, we introduce a \textit{differential perspective-taking} (DPT) diagnostic that isolates group-specific sensitivity from generic annotation skill.

\subsection{Formulation}
\label{sec:dpt_formulation}

For two groups $g_1,g_2\in\mathcal{G}$ and an item $x$, define the \textit{ground-truth disagreement}
$\Delta^\ast(x; g_1,g_2) \triangleq f^\ast(x,g_1)-f^\ast(x,g_2)$
and the corresponding \textit{estimated disagreement}
$\hat\Delta_A(x; g_1,g_2) \triangleq \hat f_A(x,g_1)-\hat f_A(x,g_2)$,
where $A\in\{H,L\}$ indexes human or LLM-based PT.
Unlike absolute estimates, $\Delta^\ast$ isolates \textit{relative subgroup movement}: an estimator relying on a group-invariant prior yields $\hat\Delta \approx 0$ regardless of true disagreement.

We quantify alignment using Pearson correlation $\rho$ between $\Delta^\ast$ and $\hat\Delta$ across items (capturing direction and relative magnitude) and \textit{directional accuracy} (the fraction of items where $\mathrm{sign}(\hat\Delta)=\mathrm{sign}(\Delta^\ast)$).
Strong alignment is evidence that the estimator possesses a Wide Lens capable of modeling relative subgroup structure; systematic attenuation of $\hat\Delta$ toward zero indicates representational collapse across groups.
Bootstrap 95\% CIs are computed with 2{,}000 resamples, and Fisher $z$-tests compare human vs.\ LLM correlations.

\subsection{Empirical Results}
\label{sec:dpt_results}

We evaluate DPT on all three group pairs in the toxicity detection dataset (Female$\leftrightarrow$Male, Female$\leftrightarrow$Non-binary, Male$\leftrightarrow$Non-binary), comparing 19 LLMs against both in-group and out-group human PT.
Figure~\ref{fig:dpt_scatter_panel} shows scatter plots of $\hat\Delta$ vs.\ $\Delta^\ast$ for representative estimators; Figure~\ref{fig:dpt_scaling} shows DPT ability as a function of model scale.

\paragraph{Female vs.\ Male: Humans and LLMs are comparable.}
This pair exhibits the strongest DPT signal.
Human PT (In-Group) achieves $\rho = 0.265$ [0.093, 0.428] with directional accuracy 64.9\%.
The best LLMs---GPT-5.1-R=M ($\rho = 0.343$) and Qwen3-R:32B ($\rho = 0.313$)---exceed Human PT (In-Group) in correlation, but no difference reaches significance (Fisher $z$-tests, all $p > 0.05$).
Interestingly, Human PT (\textit{Out-Group}) achieves the highest overall correlation ($\rho = 0.353$ [0.204, 0.492]).

\paragraph{\textit{Male vs.\ Non-binary: A genuine human advantage.}}
Human PT (In-Group) achieves $\rho = 0.312$ [0.133, 0.487] with DA = 67.7\%.
Crucially, humans \textit{significantly outperform} multiple LLMs: DeepSeek-R1:32B ($\rho = 0.053$, $p = 0.020$), GPT-5.1-R=H ($\rho = 0.075$, $p = 0.030$), and Qwen3-R:32B ($\rho = 0.071$, $p = 0.028$).
Only GPT-5.1 at low-to-moderate reasoning effort achieves comparable correlations.
This result exposes a concrete boundary condition: non-binary perspectives are lower-prevalence in training data, and the Male$\leftrightarrow$Non-binary contrast isolates a representation gap that most LLMs cannot bridge.

\paragraph{Female vs.\ Non-binary: A null pair.}
Neither humans ($\rho = 0.078$) nor any LLM ($|\rho| < 0.25$, all CIs spanning zero) achieves meaningful DPT.
The ground-truth differentials have very low variance ($\sigma(\Delta^\ast) \approx 0.032$), indicating that these groups largely agree---validating DPT as a genuine diagnostic: when groups perceive items similarly, no estimator can or should differentiate them.

\begin{figure*}[t]
    \centering
    \includegraphics[width=\linewidth]{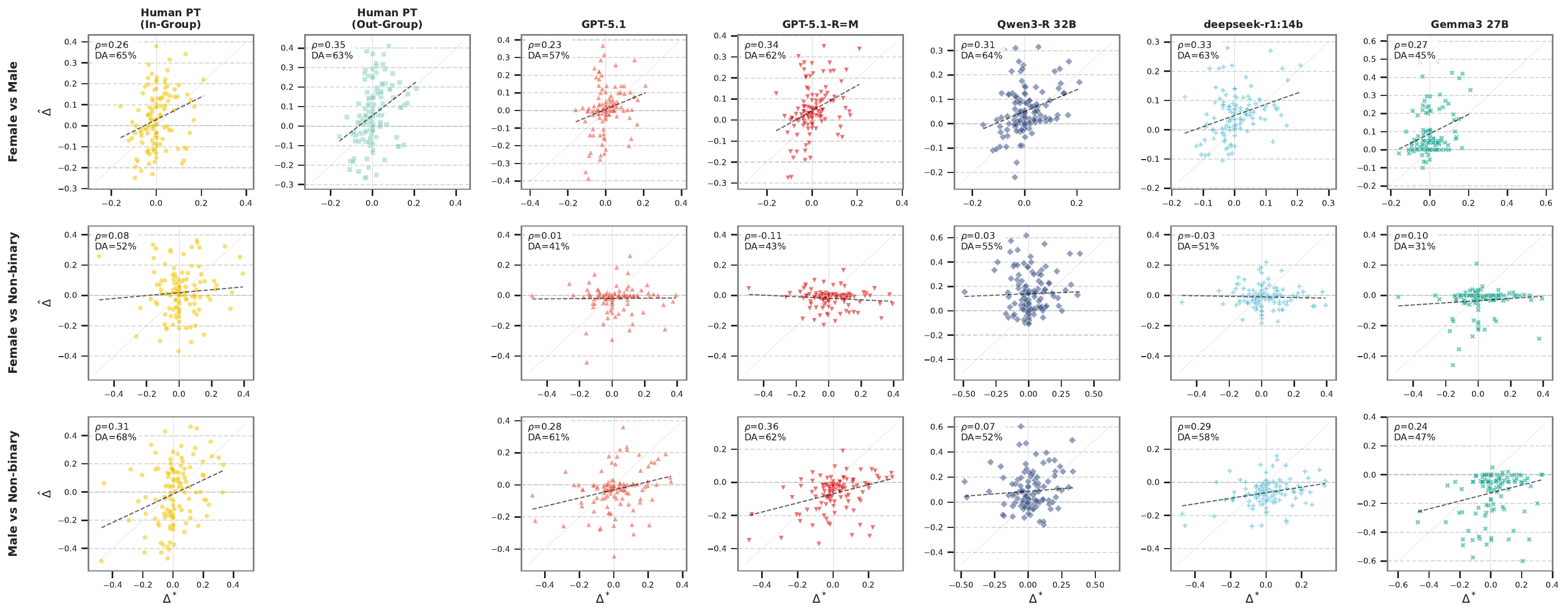}
    \caption{\textbf{Differential PT scatter plots}: per-item estimated disagreement $\hat\Delta$ vs.\ ground-truth disagreement $\Delta^\ast$ for representative estimators (columns) across the three gender-group pairs (rows). Dashed lines are OLS fits.}
    \label{fig:dpt_scatter_panel}
\end{figure*}

\begin{figure}[t]
    \centering
    \includegraphics[width=\linewidth]{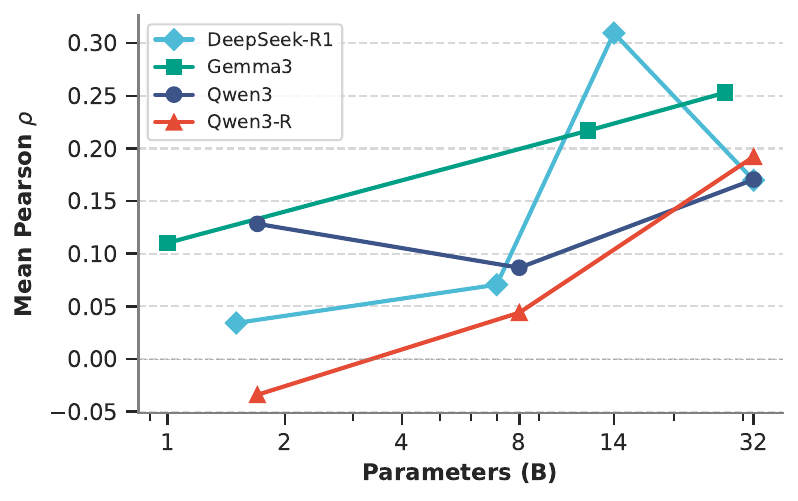}
    \caption{\textbf{DPT ability vs.\ model scale}: mean Pearson $\rho$ across the two informative group pairs (F$\leftrightarrow$M, M$\leftrightarrow$NB).}
    \label{fig:dpt_scaling}
\end{figure}

\paragraph{Scaling and reasoning.}
DPT ability emerges at approximately 8--14B parameters: models below 2B universally fail (mean $\rho \approx 0$; Figure~\ref{fig:dpt_scaling}).
However, scaling is non-monotonic. For example, DeepSeek-R1 peaks at 14B ($\rho = 0.31$) then regresses at 32B ($\rho = 0.17$), echoing the non-linearities observed in standard PT.
Reasoning shows diminishing and eventually negative returns: GPT-5.1 achieves its best DPT at moderate effort (R=M, $\rho = 0.343$ on F$\leftrightarrow$M) but degrades at high effort (R=H, $\rho = 0.199$), consistent with the criterion-drift mechanism (Appendix~\ref{app:reasoning_traces}).

\paragraph{Key takeaway.}
DPT complements standard PT evaluation by isolating group-specific sensitivity.
While LLMs generally excel at absolute estimation of $f^\ast(x,g)$, differential sensitivity to per-item group differences remains a domain where human PT retains a genuine advantage, particularly for lower-prevalence groups.
This provides a concrete boundary condition separating regimes where LLMs serve as frontline estimators from those where human judgment remains essential.

\end{document}